\documentclass[11pt]{article}

\usepackage[T1]{fontenc}
\usepackage[utf8]{inputenc}
\usepackage[letterpaper,margin=1in]{geometry}
\usepackage[numbers,sort&compress]{natbib}

\usepackage{graphicx}
\usepackage{amsmath}
\usepackage{amsfonts}
\usepackage{amssymb}
\usepackage{subcaption}
\usepackage{float}
\usepackage{booktabs}
\usepackage{multirow}
\usepackage{xcolor}
\usepackage{mathtools}
\usepackage{bm}
\usepackage{microtype}
\usepackage{hyperref}
\usepackage{algorithm}
\usepackage{algpseudocode}
\usepackage{enumitem}
\newcommand{\cmark}{\ensuremath{\checkmark}}
\newcommand{\xmark}{\ensuremath{\times}}

\usepackage{amsthm}
\theoremstyle{plain}
\newtheorem{theorem}{Theorem}
\newtheorem{proposition}[theorem]{Proposition}

\theoremstyle{definition}
\newtheorem{definition}[theorem]{Definition}
\theoremstyle{plain}
\newtheorem*{theorem*}{Theorem}
\theoremstyle{remark}

\newcommand{\R}{\mathbb{R}}
\newcommand{\N}{\mathcal{N}}
\newcommand{\HPZ}{\mathcal{Z}}

\title{Hybrid Probabilistic Zonotopes for Identifiable and Refinable Predictive Uncertainty}

\author{%
  Zhen Zhang \qquad Amr Alanwar \\
  School of Computation, Information and Technology \\
  Technical University of Munich, Germany \\
  \texttt{\{zhenzhang.zhang, alanwar\}@tum.de}
}
\date{}

\begin{document}

\maketitle

\begin{abstract}
Probabilistic prediction heads in neural networks typically output either a Gaussian mixture or a single conformal region. Neither separates the distinct sources of uncertainty often present in real prediction tasks: a discrete choice among modes, bounded systematic drift within the chosen mode, and irreducible stochastic noise. We introduce the Hybrid Probabilistic Zonotope (HProbZ), an output head that represents these three sources as binary, bounded, and stochastic generators of a zonotope, and admits a closed-form likelihood by convolution. Sharing the bounded generator across prediction steps couples future predictions algebraically, so observing one step refines the predictive distribution at every remaining step in a single forward pass. We establish that the three generators are identifiable from the likelihood up to permutation, and that an HProbZ density is representationally distinct from any finite Gaussian mixture. The same shared structure provides analytic per-mode risk and distribution-free multi-modal conformal sets at inference time. Empirical analysis on representative prediction benchmarks supports the effectiveness of the design relative to same-encoder mixture baselines, while offering structural properties that mixture or convex-conformal predictors do not jointly provide.
\end{abstract}

\section{Introduction}
\label{sec:intro}

Many predictive tasks involve three distinct uncertainty sources: discrete modal choice, bounded systematic drift, and irreducible stochastic noise. Trajectory prediction is a canonical instance, arising whenever a model must commit to a discrete behaviour \emph{and} report within-mode uncertainty that sharpens as observations arrive. Existing heads address one source or the other: mixture density networks commit to sharp modes but their within-mode covariances are network outputs that observations can only re-weight; conformal methods give distribution-free coverage but as a single convex set that inflates on multi-modal targets. We introduce a predictive distribution offering four properties simultaneously --- identifiable modal/bounded/stochastic decomposition, observation-driven contraction, analytic per-mode risk, and multi-modal conformal coverage --- that prior approaches provide individually but not jointly (Tab.~\ref{tab:design_space}).


\begin{figure}[t]
\centering
\includegraphics[width=0.48\linewidth]{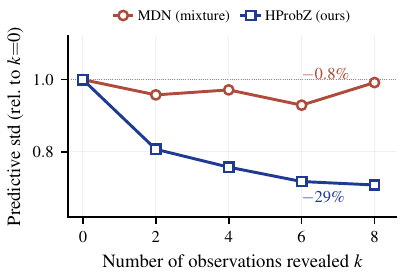}\hfill
\includegraphics[width=0.50\linewidth]{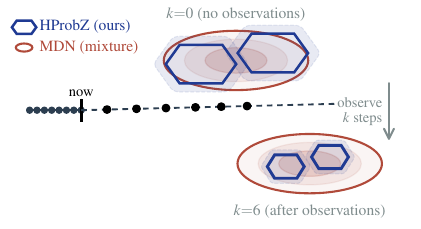}
\caption{Overview. \textbf{Left:} ETH/UCY LOO: revealing $k{=}8$ observations contracts HProbZ's predictive std by $23.6\%$ through joint posterior updates over shared $(\beta,\alpha)$, vs.\ $3.6\%$ for a same-capacity Gaussian mixture (weight reweighting only); mean over 3 seeds. \textbf{Right:} schematic at $k{=}0$ vs $k{=}6$: each HProbZ hexagon is $c{+}G_b\beta{+}G_d\alpha$ inflated by the $2\sigma$ halo of $G_s\nu$. Downstream (nuScenes, $N{=}39{,}829$, $6$\,s, static obstacle): at near-matched $10\%$ FAR HProbZ produces $\mathbf{30\%}$ fewer trajectory-level collisions than same-encoder MDN-K2 (Tab.~\ref{tab:closed_loop}); per-step reverses.}
\label{fig:teaser}
\end{figure}

Hybrid Probabilistic Zonotopes (HProbZ) add three generators to a centre point as a single neural output: a binary generator for discrete modes, a bounded generator (shared across all future timesteps) for continuous drift, and a Gaussian generator for noise. Because the bounded generator is shared, revealing one step pins it down and sharpens every other step in one forward pass. Fig.~\ref{fig:teaser} contrasts this with a same-capacity Gaussian mixture: HProbZ's predictive support shrinks under observation, while the mixture can only re-weight components and leaves within-mode covariance frozen. A shared-latent CVAE contracts but collapses modal, drift, and noise into one Gaussian, erasing the separation downstream planners and conformal-set users need; HProbZ keeps all three sources identifiable from one closed-form likelihood.

\textbf{Contributions.} \textbf{(i)} We introduce the \emph{Hybrid Probabilistic Zonotope} (HProbZ): a structured-output head whose binary, bounded, and stochastic generators are trained jointly under a single closed-form convolution likelihood (\S\ref{sec:hprobz}--\S\ref{sec:training}). \textbf{(ii)} We establish three properties of this representation: HProbZ generators are uniquely identifiable from the exact likelihood; HProbZ densities lie strictly outside the finite Gaussian-mixture family; and the entropy cost of compact support is bounded by a universal constant per dimension. \textbf{(iii)} We show that sharing the bounded generator across prediction steps yields single-pass observation-driven uncertainty contraction, analytic per-mode risk, and distribution-free multi-modal conformal sets --- a combination prior heads provide individually but not jointly (Tab.~\ref{tab:design_space}). \textbf{(iv)} We validate the representation on three trajectory-forecasting benchmarks and a closed-loop stress test (\S\ref{sec:experiments}).

\begin{table}[t]
\centering
\small
\caption{Design-space positioning. HProbZ targets a combination of structural properties absent from same-encoder mixtures, conformal predictors, and recent SOTA forecasters; minADE is one axis, not the optimisation target. \textbf{Decomp.}: identifiable modal/bounded/stochastic factorisation. \textbf{Multi-modal CP}: distribution-free non-convex conformal set. \textbf{Refinement}: $O(1/k)$ contraction (Theorem~\ref{thm:contraction}).}
\label{tab:design_space}
\resizebox{\linewidth}{!}{\begin{tabular}{lcccccc}
\toprule
\textbf{Method} & \textbf{minADE@20} & \textbf{Steps} & \textbf{Decomp.} & \textbf{Analytic risk} & \textbf{Multi-modal CP} & \textbf{Refinement} \\
\midrule
LED~\citep{mao2023leapfrog}          & \textbf{0.18} & 5 & \xmark & \xmark & \xmark & \xmark \\
EqMotion~\citep{xu2023eqmotion}      & 0.19 & 1 & \xmark & \xmark & \xmark & \xmark \\
Trajectron++~\citep{salzmann2020trajectron++} & 0.30 & 1 & \xmark & partial & \xmark & \xmark \\
MDN-K20~\citep{bishop1994mixture}     & 0.32 & 1 & \xmark & \xmark & per-component & \xmark \\
\textbf{HProbZ (ours)}                & 0.20 & 1 & \cmark & \cmark & \cmark & \cmark~($-23.6\%$) \\
\bottomrule
\end{tabular}}
\end{table}

Raw point accuracy is one axis and HProbZ does not occupy its extreme; we occupy an axis on which no existing method we tested provides all four structural properties simultaneously (Tab.~\ref{tab:design_space}).

\section{Preliminaries}
\label{sec:background}

A zonotope $\{c + G\beta \mid \beta \in [-1,1]^m\}$ is a centrally symmetric polytope with center $c \in \R^d$ and generator matrix $G \in \R^{d \times m}$. A constrained zonotope adds linear equality constraints $A\beta = b$, restricting to a subset of the zonotope's interior. The hybrid zonotope~\citep{bird2023hybrid} augments with binary factors $\xi \in \{-1,1\}^{n_b}$ and represents a non-convex region as a union of up to $2^{n_b}$ constrained zonotopes. Stochastic-reachability work, originating in safety analysis of automated traffic~\citep{althoff2007safety}, instead augments zonotopes with Gaussian factors $\eta \sim \N(0,I)$, inducing a distribution over the zonotope's interior. An MDN~\citep{bishop1994mixture} parameterizes $p(y|x) = \sum_{k=1}^K \pi_k(x) \N(y; \mu_k(x), \Sigma_k(x))$ via network outputs; the within-component covariances $\Sigma_k$ are fixed outputs that cannot be refined by observations, and trajectory-prediction MDNs~\citep{cui2019multimodal} are known to suffer mode-collapse failures~\citep{makansi2019overcoming} that motivate the structured-decomposition alternative we develop below.

\paragraph{The gap.} The hybrid form carries discrete modes but no likelihood; the probabilistic form carries a likelihood but only a single mode; an MDN supplies both, but its independent-Gaussian components let observations only re-weight modes, never tighten within-component covariance. A predictive distribution that keeps binary modes separable from bounded drift and Gaussian noise, admits a closed-form likelihood, and tightens its bounded support under observation through shared algebraic structure exists in neither lineage; this is the gap HProbZ fills in \S\ref{sec:hprobz}.

\section{Hybrid Probabilistic Zonotopes}
\label{sec:hprobz}

\begin{definition}[Hybrid Probabilistic Zonotope (HProbZ)]
\label{def:hprobz}
An HProbZ is a random set defined by
\begin{equation}
\label{eq:hprobz}
\HPZ = \left\{ c + G_b\beta + G_d\alpha + G_s\nu \;\middle|\; \begin{matrix} \beta \in \{-1,1\}^{n_b},\; \alpha \in [-1,1]^{n_d},\; \nu \sim \N(0,I_{n_q}), \\ A_b\beta + A_d\alpha + A_s\nu = b \end{matrix} \right\}
\end{equation}
where $c \in \R^d$ is the center, $G_b \in \R^{d \times n_b}$, $G_d \in \R^{d \times n_d}$, and $G_s \in \R^{d \times n_q}$ are the binary, bounded, and stochastic generator matrices, respectively, and $A_b \in \R^{p \times n_b}$, $A_d \in \R^{p \times n_d}$, $A_s \in \R^{p \times n_q}$, $b \in \R^p$ encode $p$ linear equality constraints.
\end{definition}

The single-mode slice ($n_b{=}0$, $p{=}0$) recovers the probabilistic zonotope of stochastic reachability~\citep{althoff2007safety}; HProbZ extends it with binary modes and observation-activated constraints. Paralleling the hybrid zonotope~\citep{bird2023hybrid}, for each binary assignment $\beta$ the constraint $A_b\beta + A_d\alpha + A_s\nu = b$ reduces to a Constrained Probabilistic Zonotope $\mathcal{C}(\beta)$ with center $c + G_b\beta$ and residual constraints $A_d\alpha + A_s\nu = b - A_b\beta$, so the HProbZ is a union of $2^{n_b}$ CPZs:
\begin{equation}
\label{eq:union}
    \HPZ = \bigcup_{\beta \in \{-1,1\}^{n_b}} \mathcal{C}(\beta).
\end{equation}
When $p{=}0$ all factors are independent and $z = c + G_b\beta + G_d\alpha + G_s\nu$; this is the head used by the network in \S\ref{sec:training}. Observations introduce new constraint rows into $(A_b, A_d, A_s, b)$ for sequential refinement (\S\ref{sec:refinement}).

Setting all generators to zero recovers a point prediction; retaining only $G_d$ yields a zonotope, only $G_s$ a Gaussian, $G_b{+}G_s$ a $2^{n_b}$-component shared-covariance GMM, and diagonal $G_d$ alone an axis-aligned conformal box. In the unconstrained case the induced measure is a Minkowski-sum convolution (Proposition~\ref{prop:minkowski}):
\begin{equation}
\label{eq:measure_decomp}
\mu_{\HPZ} = \delta_c \ast \underbrace{\Big(\tfrac{1}{2^{n_b}}\sum_{\beta} \delta_{G_b\beta}\Big)}_{\text{discrete modal}}
\ast \underbrace{(G_d)_*\mathcal{U}([-1,1]^{n_d})}_{\text{bounded zonoid}\footnotemark}
\ast \underbrace{\N(0, G_sG_s^\top)}_{\text{Gaussian}},
\end{equation}
where $\ast$ denotes convolution. \footnotetext{$(G_d)_*\mu$ is the pushforward of $\mu$ on $[-1,1]^{n_d}$ under $G_d$, i.e., the zonoid measure on $\mathrm{Image}(G_d)$.} The bounded factor has compact support and is not absolutely continuous w.r.t.\ any Gaussian --- an inductive bias giving HProbZ algebraic properties Gaussian mixtures lack, at a cost of density flexibility.

\begin{theorem}[HProbZ density is not a finite Gaussian mixture]
\label{thm:non-gmm}
If $G_d$ has at least one nonzero entry, then no finite Gaussian mixture $\sum_{k=1}^K \pi_k \N(\mu_k, \Sigma_k)$, for any $K$, has the same distribution as $\mu_{\HPZ}$.
\end{theorem}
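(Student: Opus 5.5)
The plan is to compare characteristic functions along a single well-chosen line. I work in the unconstrained setting ($p=0$), where $\mu_{\HPZ}$ is given by the convolution \eqref{eq:measure_decomp} (Proposition~\ref{prop:minkowski}). Write $g^{b}_j$, $g^{d}_l$ for the columns of $G_b$, $G_d$. By \eqref{eq:measure_decomp}, the characteristic function factorises as
\begin{equation*}
\varphi_{\HPZ}(t) = e^{i t^\top c}\prod_{j=1}^{n_b}\cos\!\big(t^\top g^{b}_j\big)\prod_{l=1}^{n_d}\operatorname{sinc}\!\big(t^\top g^{d}_l\big)\, e^{-\frac12 t^\top G_sG_s^\top t},
\end{equation*}
where $\operatorname{sinc}(x)=\sin x/x$. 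A finite Gaussian mixture, with degenerate covariances allowed, has $\varphi_{\mathrm{GMM}}(t)=\sum_k \pi_k e^{i t^\top\mu_k - \frac12 t^\top\Sigma_k t}$. By hypothesis some column $g^{d}_{l_0}\neq 0$, so I fix a unit vector $u$ with $u^\top g^{d}_{l_0}\neq 0$ and restrict both functions to $t=su$, $s\in\R$. Along this line I set $\sigma^2=u^\top G_sG_s^\top u$ and $v_k=u^\top\Sigma_k u\ge 0$.

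On the line, the HProbZ side reads $e^{-\sigma^2 s^2/2}\,Q(s)R(s)$. Here $Q(s)=e^{isu^\top c}\prod_j\cos(s\,u^\top g^b_j)$ is a nonzero finite exponential sum, i.e.\ a trigonometric polynomial with real frequencies. The factor $R(s)=\prod_l\operatorname{sinc}(s\,u^\top g^d_l)$ is bounded and, because of the $l_0$ factor, satisfies $|R(s)|\le C/|s|$. It is not too small either: $|R(s)|\ge c\,|s|^{-r}$ on a set of $s$ of positive upper density, where $r$ is the number of nonzero $u^\top g^d_l$. I would establish this by a pigeonhole/Kronecker-type argument keeping all the $\sin$ factors away from $0$ simultaneously. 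On the mixture side, I group the components by distinct values of $v$ to get $\sum_{v} e^{-v s^2/2}P_v(s)$, where each $P_v$ is a nonzero exponential sum; it is nonzero because distinct means give distinct frequencies and the weights are positive. The key tool is the elementary fact that a nonzero exponential sum $P(s)=\sum_m a_m e^{i\lambda_m s}$ has Bohr mean $\lim_T \frac1{2T}\int_{-T}^T|P|^2=\sum|a_m|^2>0$. Consequently $P$ is not $o(1)$, and indeed $|P|\ge\delta$ on a set of positive lower density.

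Assume the two functions agree, multiply through by $e^{\sigma^2 s^2/2}$, and let $v_0$ be the smallest mixture variance. There are three cases.

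If $v_0<\sigma^2$, the left side is $e^{(\sigma^2-v_0)s^2/2}\big(P_{v_0}(s)+o(1)\big)$. On a positive-density set this has modulus growing like a Gaussian, whereas the right side $QR$ is bounded, which is a contradiction.

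If $v_0>\sigma^2$, the left side is $O(e^{-\varepsilon s^2})$. On the right, the set where $|Q|\ge\delta$ has positive lower density and the set where $|R|\ge c|s|^{-r}$ has positive upper density. I need these two sets to intersect in unbounded $s$, which gives $|QR|\gtrsim |s|^{-r}$ there and contradicts the Gaussian decay.

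If $v_0=\sigma^2$, then $P_{v_0}(s)=Q(s)R(s)+O(e^{-\varepsilon s^2})$, and the right side tends to $0$ in mean square because $|R|\le C/|s|$ and $Q$ is bounded. Hence the Bohr mean of $|P_{v_0}|^2$ vanishes, contradicting $P_{v_0}\neq0$.

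This last case is the cleanest, and it isolates the real content of the theorem: the sinc factor from the bounded generator forces polynomial decay, which no finite combination of Gaussian-modulated trigonometric polynomials can reproduce.

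I expect the main obstacle to be the lower bounds in the $v_0>\sigma^2$ case. The zeros of the $\cos$ and $\sin$ factors must not conspire to make $QR$ superexponentially small everywhere. To handle this robustly, I would avoid pointwise bounds and argue in mean instead. I would compare $\frac1{2T}\int_{-T}^{T}|QR|^2\,ds$, which should be $\gtrsim T^{-2r}$ by the same almost-periodicity argument applied to $Q\cdot\prod_l\sin(s\,u^\top g^d_l)$, itself a nonzero exponential sum, against the Gaussian-small left side. This sidesteps the density-intersection issue entirely: writing $QR = \tilde P(s)/\big(s^r\prod_l u^\top g^d_l\big)$ with $\tilde P$ a nonzero exponential sum, the inequality $|QR|\ge|\tilde P|/(C s^r)$ holds pointwise, and the Bohr-mean fact applied to $\tilde P$ finishes every case uniformly.
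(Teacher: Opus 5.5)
Your argument is correct, but it takes a different route from the paper's. The paper analytically continues the projected characteristic functions to the imaginary axis $s=iy$, which amounts to comparing moment generating functions. There every term on both sides is a positive real: the sinc factors become $\sinh(S_m y)/(S_m y)$, and the mixture terms become $\pi_k e^{\mu_{k,u}y+\sigma_{k,u}^2y^2/2}$. So no cancellation can occur, and the leading asymptotics are read off directly. The paper matches the $y^2$ and $y$ exponents, with the largest-variance component dominating, and then compares the algebraic prefactor $y^{-n_d'}$ with the mixture's strictly positive leading constant.

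You instead stay on the real line, where oscillation rules out a dominant-term reading. In place of positivity you use the almost-periodicity fact that a nonzero exponential sum has positive Bohr mean and hence is not $o(1)$. Clearing the sinc denominators gives $\tilde P(s)=s^r\prod_{l:\,u^\top g^d_l\neq0}(u^\top g^d_l)\,e^{\sigma^2s^2/2}\sum_v e^{-vs^2/2}P_v(s)$ with $\tilde P$ bounded and $\tilde P\not\equiv0$. Your three cases $v_0<\sigma^2$, $v_0=\sigma^2$, $v_0>\sigma^2$ are the real-axis counterpart of the paper's exponent matching, with the smallest-variance component now dominating.

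Your route needs no analytic continuation. It handles degenerate components with $u^\top\Sigma_k u=0$ uniformly, whereas the paper assumes $u^\top\Sigma_k u>0$. It also needs no genericity condition on $u$ for the binary factor. The paper's route is shorter, because positivity on the imaginary axis makes automatic the lower bounds you were worried about.

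When you write it up, lead with the $\tilde P$ reformulation and drop the Kronecker and density-intersection detour, which that reformulation makes unnecessary. The two nonvanishing facts you use are immediate. First, $\tilde P\not\equiv0$, since $Q(0)=1$ and the sine factors are nonzero for small $s\neq0$. Second, $P_{v_0}(0)=\sum_{k:\,v_k=v_0}\pi_k>0$.
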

\begin{proof}[Proof sketch]
Project both CFs onto a direction where $G_d$ is nonzero and continue to $s=iy$: the HProbZ CF picks up a $\sinh(ay)/(ay)$ factor with an algebraic $1/y$ correction that no finite-GMM CF (a sum of pure exponentials) can match. Full argument in App.~\ref{app:proofs}; empirical fingerprint in Fig.~\ref{fig:nonrep}.
\end{proof}

\begin{figure}[t]
    \centering
    \includegraphics[width=0.95\linewidth]{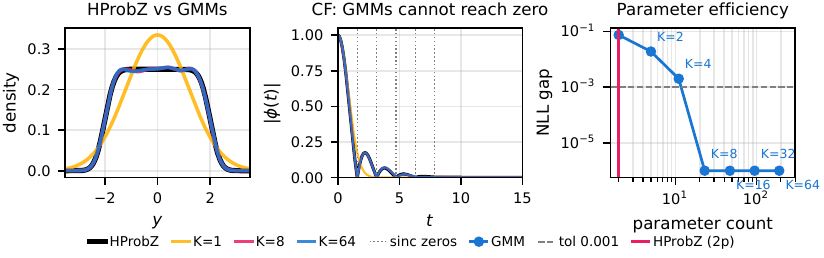}
    \caption{Empirical fingerprint of Theorem~\ref{thm:non-gmm} (1-D HProbZ with $a{=}2,\sigma{=}0.3$, $10^5$ samples). \textbf{Left:} GMMs ($K{=}1{,}\ldots{,}64$) visually approximate the density. \textbf{Middle:} at the sinc-zero frequencies $t_k=k\pi/a$, $|\phi_{\text{GMM}}|$ plateaus at $\sim 2{-}5{\times}10^{-3}$ regardless of $K$ --- the structural signature the proof predicts no finite GMM can erase. \textbf{Right:} matching HProbZ's held-out NLL with a finite GMM requires $K{=}8$ ($23$ free density parameters) vs HProbZ's $2$, an $11.5\times$ ratio. Full setup in App.~\ref{app:theorem_validation}.}
    \label{fig:nonrep}
\end{figure}

A natural question is whether the richer parameterisation is identifiable.

\begin{theorem}[Identifiability under exact likelihood]
\label{thm:identifiability}
Suppose $G_d$ and $G_s$ are diagonal (the parameterisation used in \S\ref{sec:training}), and (i) for each axis $j$ the bounded half-width $a_j = |G_{d,jj}|>0$ and noise scale $\sigma_j = |G_{s,jj}|>0$, and (ii) for each axis $j$ the projected mode centers $\{(G_b\beta)_j\}_{\beta\in\{-1,1\}^{n_b}}$ are pairwise distinct. Then $(c, G_b, G_d, G_s)$ are identifiable from the exact convolution likelihood, up to column-permutations and global column sign-flips of $G_b$.
\end{theorem}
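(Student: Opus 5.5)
The plan is to work with characteristic functions throughout, in the unconstrained setting $p=0$ used by the network. By Proposition~\ref{prop:minkowski} and \eqref{eq:measure_decomp}, with $m_\beta := G_b\beta$ and diagonal $G_d,G_s$, the characteristic function of $\mu_{\HPZ}$ factorises as
\begin{equation*}
\phi(t) = e^{i t^\top c}\,\Big(\prod_{k=1}^{n_b}\cos(t^\top g_k)\Big)\,\prod_{j=1}^d \frac{\sin(a_j t_j)}{a_j t_j}\, e^{-\sigma_j^2 t_j^2/2},
\end{equation*}
where $g_k$ is the $k$-th column of $G_b$. The average $2^{-n_b}\sum_\beta e^{it^\top G_b\beta}$ equals the product of cosines because the coordinates of $\beta$ are independent Rademacher signs. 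I treat $n_b$ as known (it is fixed by the architecture). Since $\mu_{\HPZ}$ determines $\phi$, it suffices to show that $\phi$ determines the parameters modulo the stated symmetries. I would proceed axis by axis first, and only then recover the joint modal structure.

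\textbf{Step 1 (per-axis parameters).} Restrict $\phi$ to $t=s e_j$ to obtain the $j$-th marginal. This marginal is $\delta_{c_j}\ast M_j\ast\mathcal{U}[-a_j,a_j]\ast\N(0,\sigma_j^2)$, where $M_j$ is uniform on the $2^{n_b}$ distinct atoms $(G_b\beta)_j$ by assumption (ii).
\begin{itemize}[leftmargin=*,label={}]
\item \emph{Recover $\sigma_j$.} Continue to $s=-iy$ to get the moment generating function. The factors $e^{yc_j}$, the sum of exponentials from $M_j$, and $\sinh(a_jy)/(a_jy)$ all have $\log$-growth $O(y)$, while the Gaussian contributes $\sigma_j^2y^2/2$. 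Hence $\sigma_j^2 = \lim_{y\to\infty} 2\log \mathrm{MGF}(y)/y^2$.
\item \emph{Deconvolve the noise.} Dividing by the nowhere-vanishing Gaussian characteristic function recovers the law $\nu_j = \delta_{c_j}\ast M_j\ast\mathcal{U}[-a_j,a_j]$.
\item \emph{Recover $a_j$.} $\nu_j$ is an equal-weight mixture of $2^{n_b}$ uniforms of common width $2a_j$ with distinct centres, so its density is piecewise constant. Just to the right of the left endpoint of its support, only the leftmost component is present, because the centres are distinct. The density there therefore equals $1/(2a_j 2^{n_b})$, which fixes $a_j$; the sign of $G_{d,jj}$ is immaterial.
\item \emph{Recover the modal atoms and $c_j$.} Divide $\hat\nu_j$ by $\sin(a_js)/(a_js)$ off its discrete zero set and extend by continuity. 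This gives the characteristic function of $\delta_{c_j}\ast M_j$, hence the atom set $\{c_j+(G_b\beta)_j\}$. Because $M_j$ is symmetric under $\beta\mapsto-\beta$, $c_j$ is the mean of these atoms.
\end{itemize}
Having done this for every $j$, $c$, $|G_d|$ and $|G_s|$ are determined.

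\textbf{Step 2 (joint modal set).} Divide $\phi(t)$ by the now-known factor $e^{it^\top c}\prod_j \tfrac{\sin(a_jt_j)}{a_jt_j}e^{-\sigma_j^2t_j^2/2}$. This is legitimate on the dense complement of a union of hyperplanes, and continuity extends the quotient everywhere. The result is $F(t)=\prod_{k}\cos(t^\top g_k)$ as a function on all of $\R^d$.

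\textbf{Step 3 (from $F$ to $G_b$ up to column permutations and sign flips).} This is the step I expect to be the main obstacle, since different generator sets could a priori produce the same vertex set $G_b\{-1,1\}^{n_b}$. I would argue through the zero structure of the entire function $F$, extended to $\C^d$. Its zero set is the union over $k$ of the hyperplane families $H_k=\{t: t^\top g_k\in \pi/2+\pi\mathbb{Z}\}$, counted with multiplicity. Each $g_k$ is nonzero, since otherwise two modes would coincide and violate (ii). A single family $H_k$ determines its normal direction, and its spacing $\pi/\|g_k\|$, hence $g_k$, up to sign.

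Families with non-parallel normals are separated by direction. The delicate case is parallel columns, say $g$ and $\lambda g$. There I would restrict $F$ to the line $t=s\,u$ along the common direction $u$. Along this line, the columns parallel to $u$ contribute $\prod_\ell \cos(s r_\ell)$ for some positive radii $r_\ell$, and the remaining columns contribute a factor that is nonvanishing at generic points. I would then peel off frequencies greedily. The zero of smallest modulus, $s=\pi/(2r_{\max})$, identifies the largest radius $r_{\max}$. Dividing out $\cos(s\,r_{\max})$, which is legitimate because the multiplicity of zeros is well defined, leaves a product of the same form with one fewer factor, and induction on $n_b$ recovers the multiset $\{r_\ell\}$.

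Consequently the multiset $\{\pm g_k\}$ is determined, which is exactly $G_b$ up to column permutation and column sign flips. This yields the theorem. As a sanity check, these are precisely the symmetries under which $\prod_k\cos(t^\top g_k)$ is invariant.
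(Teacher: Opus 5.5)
The gap is in Step 3, in the parallel-column case. Steps 1 and 2 are sound. They are also more careful than the paper's own sketch. The paper reads $a_j$ off a first zero at $\pi/a_j$ of the marginal characteristic function, which is false as soon as some $|(g_k)_j|>a_j/2$, because $\cos(s(g_k)_j)$ then vanishes first. It also moves from per-axis atom sets to the joint vertex set, and on to $G_b$, by assertion. Your edge-density argument and your division of the \emph{joint} characteristic function avoid both problems.

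On the line $t=su$ the restriction is $F(su)=\prod_{k=1}^{n_b}\cos(s\,u^\top g_k)$. So every column with $u^\top g_k\neq 0$ contributes zeros, not only the columns parallel to $u$. That the non-parallel factor is nonzero at generic $s$ does not help. Greedy peeling needs the smallest zero to belong to the parallel family, and it need not. Take $d=2$, $g_1=(1,1)^\top$, $g_2=(2,2)^\top$, $g_3=(8,-16)^\top$, which satisfies (ii) on both axes. With $u=(1,1)^\top/\sqrt{2}$ you get $F(su)=\cos(\sqrt{2}s)\cos(2\sqrt{2}s)\cos(4\sqrt{2}s)$. Its first zero, $\pi/(8\sqrt{2})$, comes from $g_3$, so your procedure would assign the radius $4\sqrt{2}$ to the parallel pair. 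Dividing out the non-parallel columns first is not a general fix either: two different directions may each carry several parallel columns, and then neither group can be identified before the other.

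The repair is short and removes the case split entirely. Your one-dimensional peeling lemma is valid for any finite product of cosines, and $F(su)$ is such a product. Applied along every direction $u$, it returns the multiset $\{|u^\top g_k|\}_{k=1}^{n_b}$; columns orthogonal to $u$ are accounted for through the known $n_b$. That multiset is exactly the law of the projection onto $u$ of the symmetric probability measure $\tfrac{1}{2n_b}\sum_{k}(\delta_{g_k}+\delta_{-g_k})$. By Cram\'er--Wold this measure, i.e.\ the multiset of pairs $\{\pm g_k\}$, is therefore determined. That is $G_b$ up to column permutations and individual column sign flips.

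Alternatively, you can keep the hyperplane picture. Define the multiplicity of $\{t:u^\top t=\tau\}$ as the vanishing order of $F$ at a generic point of that whole hyperplane, where only columns parallel to $u$ vanish. Then peel that counting function in $\tau$, rather than the zeros on a line through the origin. With either change your argument becomes a complete proof.
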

\begin{proof}[Proof sketch]
Per-axis projection yields a 1-D mixture of uniform-normal convolutions: sinc zeros of the 1-D CF give the bounded half-widths $a_j$, the Gaussian envelope gives $\sigma_j$, and classical location-mixture identifiability~\citep{yakowitz1968identifiability} pins down the mode centers. Cross-axis consistency reduces the labeling ambiguity to the cube symmetry above. See App.~\ref{app:proofs}.
\end{proof}

\paragraph{Practical identifiability.} The distinct-centers condition is generic for random $G_b$; SGD on softplus outputs reliably keeps $a_j, \sigma_j$ positive and distinct, and 10 seeds recover ground-truth parameters within $0.008$ (App.~\ref{app:theorem_validation}).

\section{Training HProbZ as a Neural Network Output}
\label{sec:training}

\paragraph{Output parameterization.} We attach an unconstrained HProbZ head ($p{=}0$) to a transformer encoder: $h = \text{Encoder}(x)$, $c = W_c h$ and $G_b = W_b h$ are linear projections, $G_d, G_s$ diagonal softplus outputs. The head adds $(n_b + n_d + n_q + 1)\times d$ parameters, matching MDN-K=$2^{n_b}$ at the same count. Constraints from Def.~\ref{def:hprobz} activate only at inference (\S\ref{sec:refinement}); non-diagonal $G_d, G_s$ left to future work. We use a uniform mode prior $\pi_k{=}2^{-n_b}$; learnable $\pi_k(x){>}0$ preserves identifiability/contraction with statistically tied accuracy (14-seed nuScenes, App.~\ref{app:learn_pi}).

\paragraph{The Gaussian approximation and its failure.} The na\"ive aggregate-covariance loss $\mathcal{L}_{\text{Gauss}} = -\log \sum_k 2^{-n_b} \N(y; \mu_k, G_sG_s^\top + G_dG_d^\top/3)$ with $\mu_k = c + G_b\beta^{(k)}$ is non-identifiable:

\begin{proposition}[Gaussian Collapse]
\label{prop:gaussian_collapse}
Under $\mathcal{L}_{\text{Gauss}}$, $G_d$ and $G_s$ are interchangeable: for any $(G_d, G_s)$ achieving loss $\ell$, there exists $G_s'$ with $G_s'(G_s')^\top = G_sG_s^\top + G_dG_d^\top/3$ such that $(0, G_s')$ achieves the same $\ell$, and vice versa. Gradient-based optimization therefore cannot distinguish bounded from stochastic uncertainty.
\end{proposition}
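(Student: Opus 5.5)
The plan is to observe that $\mathcal{L}_{\text{Gauss}}$ depends on $(G_d, G_s)$ only through the aggregate covariance
\[
\Sigma(G_d,G_s) := G_sG_s^\top + \tfrac{1}{3}G_dG_d^\top ,
\]
and then to realise any admissible $\Sigma$ with either generator alone. The factor $1/3$ is the variance of $\mathcal{U}([-1,1])$, so $\mathrm{Cov}(G_d\alpha) = G_dG_d^\top/3$ for $\alpha\sim\mathcal{U}([-1,1]^{n_d})$. Once this is noted, $\mathcal{L}_{\text{Gauss}}$ is, for fixed $(c, G_b, y)$, a function $F(\Sigma)$ of a single positive semidefinite matrix. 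Any two pairs $(G_d,G_s)$ and $(G_d',G_s')$ with equal $\Sigma$ therefore give identical loss values at every data point, and hence the same $\ell$ on any dataset.

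\textbf{Forward direction.} Given $(G_d,G_s)$, I would set $\Sigma = G_sG_s^\top + G_dG_d^\top/3$, which is symmetric PSD. Then I would take $G_s' = \Sigma^{1/2}$, or a Cholesky factor, or any factor padded with zero columns to fit $n_q$ when $n_q \ge \operatorname{rank}\Sigma$. This gives $G_s'(G_s')^\top = \Sigma$, so $(0,G_s')$ has the same aggregate covariance and the same loss.

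\textbf{Reverse direction.} Given $(0,G_s)$, I would choose $G_d' = \sqrt{3}\,G_s$ (padding or truncating columns as needed) and $G_s'' = 0$. Then $G_d'(G_d')^\top/3 = G_sG_s^\top$. More generally, any convex split $\Sigma = \lambda\Sigma + (1-\lambda)\Sigma$ can be realised, which shows the whole fibre $\{(G_d,G_s)\colon \Sigma(G_d,G_s)=\Sigma\}$ is a level set of the loss. In the diagonal parameterisation of \S\ref{sec:training} everything is entrywise: $\sigma_j'^2 = \sigma_j^2 + a_j^2/3$. That is a one-parameter family per axis along which the loss is constant.

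\textbf{Gradient claim.} I would read ``cannot distinguish'' in two ways, both of which follow by the chain rule through $\Sigma$. First, $\nabla_{G_s}\mathcal{L} = 2\,(\partial F/\partial\Sigma)\,G_s$ and $\nabla_{G_d}\mathcal{L} = \tfrac{2}{3}\,(\partial F/\partial\Sigma)\,G_d$, so the gradients of both generators are driven by the same matrix $\partial F/\partial\Sigma$. Second, every direction tangent to the fibre, such as $\dot G_s = -G_s$ paired with a compensating $\dot G_d$, has zero directional derivative. The minimiser set thus contains non-trivial manifolds trading $a_j$ against $\sigma_j$, unlike the identifiable exact likelihood of Theorem~\ref{thm:identifiability}.

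The only delicate point, and the main obstacle, is dimensional bookkeeping. The factor $G_s'$ must exist with the available column count $n_q$, and in the diagonal case $n_q=d$, so this works. In the case $(0,G_s)$ with $G_s$ singular, the reverse direction requires $\operatorname{rank}\Sigma\le n_d$. I would state this column-count assumption explicitly rather than hide it.
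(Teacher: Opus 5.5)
Your proposal is correct and takes the same approach as the paper. Both rest on the observation that $\mathcal{L}_{\text{Gauss}}$ sees $(G_d,G_s)$ only through $G_sG_s^\top + G_dG_d^\top/3$, and both then refactor that matrix (Cholesky or square root) into a single generator. Your explicit reverse construction $G_d'=\sqrt{3}\,G_s$, the chain-rule reading of the gradient claim, and the column-count caveat ($n_q$, respectively $n_d$, at least $\operatorname{rank}\Sigma$) are details the paper's two-line proof leaves implicit.
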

\begin{proof}
$\mathcal{L}_{\text{Gauss}}$ depends on $(G_d, G_s)$ only through the aggregate covariance $\Sigma_{\text{approx}} = G_sG_s^\top + G_dG_d^\top/3$. Choosing $G_s'$ with $G_s'(G_s')^\top = \Sigma_{\text{approx}}$ (e.g., the Cholesky factor) and $G_d=0$ leaves $\Sigma_{\text{approx}}$ and hence $\mathcal{L}_{\text{Gauss}}$ unchanged.
\end{proof}

The resolution is to train with the exact within-mode likelihood. Conditioned on mode $k$, the residual $r_j = y_j - \mu_{k,j}$ in each dimension follows the convolution $\text{Uniform}(-a_j, a_j) \ast \N(0, \sigma_j^2)$, whose density admits the closed form
\begin{equation}
\label{eq:exact_pdf}
    f(r; a, \sigma) = \frac{1}{2a}\Big[\Phi\!\Big(\frac{r+a}{\sigma}\Big) - \Phi\!\Big(\frac{r-a}{\sigma}\Big)\Big],
\end{equation}
where $\Phi$ is the standard normal CDF (numerical clamp and Gaussian-limit fallback in App.~\ref{app:impl_details}). The exact mixture NLL is $\mathcal{L}_{\text{exact}} = -\log \sum_k 2^{-n_b} \prod_j f(y_j - \mu_{k,j}; a_j, \sigma_j)$. Its CF has zeros at integer multiples of $\pi/a$, a compact-support spectral signature absent from any Gaussian; this resolves the collapse in Prop.~\ref{prop:gaussian_collapse} and enables Theorem~\ref{thm:identifiability}.

\paragraph{The structure--density trade-off.} Introducing bounded generators with $a > 0$ imposes compact support, which necessarily reduces entropy relative to a Gaussian with the same variance. However, this cost is bounded:

\begin{proposition}[Structure--density trade-off]
\label{prop:tradeoff}
Let $f_\kappa$ denote the 1-D HProbZ density with half-width $a$ and noise $\sigma$, and let $g_\kappa = \N(0, \sigma^2 + a^2/3)$ be the variance-matched Gaussian. Define the structure ratio $\kappa = a/\sigma$. Then:

\noindent\textup{(i)} \textbf{Bounded entropy cost.} The entropy gap $h(g_\kappa) - h(f_\kappa)$ is monotonically increasing in $\kappa$ and satisfies
\begin{equation}
\label{eq:entropy_gap}
    h(g_\kappa) - h(f_\kappa) \;\leq\; \tfrac{1}{2}\log\!\big(\pi e / 6\big) \;\approx\; 0.176 \text{ nats},
\end{equation}
for all $\kappa \geq 0$, with equality in the limit $\kappa \to \infty$.

\noindent\textup{(ii)} \textbf{Unbounded constraint benefit.} In the constrained setting of Theorem~\ref{thm:contraction}, the per-observation Fisher information for the shared bounded factor is $\rho \geq \kappa^2$ per dimension, yielding bounded-variance decay $O(1/(\kappa^2 k))$ after $k$ observations; the dimensionless derivation is in Appendix~\ref{app:tradeoff}.

\noindent\textup{(iii)} \textbf{Exchange rate.} The entropy cost is bounded by a universal constant independent of $\kappa$, while the constraint contraction rate grows without bound as $\kappa$ increases. In particular, for any $\kappa > 0$, the per-observation information gain $\rho$ can be made arbitrarily large at a per-dimension NLL cost of at most $0.176$ nats.
\end{proposition}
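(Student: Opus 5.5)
Part (i) reduces to a one-parameter problem. Write $f_\kappa$ as the density of $X = aU + \sigma Z$ with $U\sim\mathcal{U}[-1,1]$ and $Z\sim\N(0,1)$ independent. Differential entropy shifts by $\log\sigma$ under scaling by $\sigma$, and so does the variance-matched Gaussian entropy $h(g_\kappa)=\tfrac12\log(2\pi e(\sigma^2+a^2/3))$. Hence the gap depends only on $\kappa$:
\[
D(\kappa) := h(g_\kappa)-h(f_\kappa) = \tfrac12\log\!\big(2\pi e(1+\kappa^2/3)\big) - h(\kappa U + Z).
\]
This equals the relative entropy $D(f_\kappa\,\|\,g_\kappa)$ (the non-Gaussianness), so $D\ge 0$ and $D(0)=0$.

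For the limit, set $Y_\kappa = U + Z/\kappa$, so that $h(\kappa U+Z) = \log\kappa + h(Y_\kappa)$. Then
\[
D(\kappa)=\tfrac12\log\!\big(2\pi e(\kappa^{-2}+1/3)\big) - h(Y_\kappa).
\]
As $\kappa\to\infty$, $Y_\kappa\to U$ in distribution. Entropy is continuous along Gaussian smoothing of a bounded-density law, for example via dominated convergence on $-\int f\log f$ with uniform tail bounds. So $h(Y_\kappa)\to h(U)=\log 2$, and $D(\kappa)\to\tfrac12\log(2\pi e/3)-\log 2=\tfrac12\log(\pi e/6)$. Once monotonicity is established, the bound in (i) and its asymptotic equality follow.

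Monotonicity is the main obstacle. My plan is to use de Bruijn's identity in the scaled variable $Y_t = U + \sqrt{t}Z$ with $t=\kappa^{-2}$. Then $D = \tfrac12\log(2\pi e(t+1/3)) - h(Y_t)$, and $\frac{d}{dt}h(Y_t)=\tfrac12 J(Y_t)$, where $J$ is Fisher information. Increasing $\kappa$ means decreasing $t$, so we need $dD/dt = \tfrac12\big(\tfrac{1}{t+1/3} - J(Y_t)\big)\le 0$. This is exactly the Cramér–Rao inequality $J(Y_t)\ge 1/\mathrm{Var}(Y_t) = 1/(t+1/3)$, so $D$ is nonincreasing in $t$, i.e.\ nondecreasing in $\kappa$. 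Strictness follows because Cramér–Rao is tight only for Gaussians and $Y_t$ is not Gaussian for finite $t>0$ (Theorem~\ref{thm:non-gmm} with $K=1$). I would check the regularity that de Bruijn needs; it holds since $U$ has finite variance and $t>0$. The limit $t\to0$ then sits at the supremum.

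For (ii) I would take the derivation in Appendix~\ref{app:tradeoff} and Theorem~\ref{thm:contraction} as the source of the constrained model. In that model each observation is $y_i = \mu_i + a\alpha + \sigma\nu_i$, with $\alpha$ shared. The Fisher information about $\alpha$ per observation is $a^2/\sigma^2\cdot(\text{Gaussian score variance }1)=\kappa^2$; with non-Gaussian components the information only increases, giving $\rho\ge\kappa^2$. Information is additive over $k$ conditionally independent observations, so it totals $k\rho$. Cramér–Rao, or its Bayesian van Trees version together with the bounded prior, then gives posterior variance $O(1/(\kappa^2 k))$.

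Part (iii) is the conjunction of (i) and (ii). The gap $D(\kappa)\le\tfrac12\log(\pi e/6)$ uniformly, while $\rho\ge\kappa^2\to\infty$. The NLL cost statement follows by identifying expected NLL under the true density with entropy. This comparison assumes both models are evaluated at their own optimum, so that the excess NLL of the HProbZ over the matched Gaussian is at most $D(\kappa)$.
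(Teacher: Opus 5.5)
Your part (i) is the paper's proof. It uses the same scaling reduction to $U+Z/\kappa$, de Bruijn's identity, and the Cram\'er--Rao inequality $J\ge 1/\mathrm{Var}$, with equality only for Gaussians. Parametrizing by $t=\kappa^{-2}$ is cosmetic, and reading the gap as $D(f_\kappa\|g_\kappa)$ is a clean way to see $D\ge 0$.

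\textbf{The gap is in part (ii).} The decay $O(1/(\kappa^2 k))$ is an \emph{upper} bound on the posterior variance. Cram\'er--Rao and van Trees are \emph{lower} bounds on estimation error. Knowing the total information is $k\kappa^2$ tells you the error cannot beat the inverse information; it does not tell you the posterior variance actually shrinks at that rate. Van Trees with the uniform prior is also degenerate, since that prior's density does not vanish at $\pm 1$. What you need is an explicit posterior (or an achievability argument).

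The paper gets this from Theorem~\ref{thm:contraction}(ii). Conditioned on $\beta^*$ and under the variance-matched relaxation $\alpha\sim\N(0,\tfrac13)$, the model is conjugate linear-Gaussian. The posterior precision is therefore exactly $3+k\kappa^2$, so $\mathrm{Var}(\alpha\mid\mathrm{obs},\beta^*)=1/(3+k\kappa^2)\le 1/(k\kappa^2)$.

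Alternatively, you can avoid the relaxation. With the exact uniform prior, the posterior given $\beta^*$ is $\N(\bar r/a,\,1/(k\kappa^2))$ truncated to $[-1,1]$, where $\bar r$ is the mean residual. Truncating a Gaussian to an interval can only shrink its variance (a one-dimensional case of Brascamp--Lieb), which gives the same rate.

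Two smaller points. First, drop your justification of $\rho\ge\kappa^2$ via non-Gaussian components increasing information; it is unnecessary and possibly backwards. Conditioned on $\beta^*$ the observation noise is exactly Gaussian and $\rho=\kappa^2$ with equality, which is all the paper uses. If instead you marginalize the binary mode, the per-step noise becomes $G_b\beta+\sigma\nu$. Fisher information can only decrease under convolution with an independent variable, so that reading gives the opposite inequality.

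Second, in (iii) the \emph{NLL cost} is, as in the paper, simply the entropy gap of (i) relabelled. Your assumption that both models are evaluated at their own optimum adds nothing. No stronger cross-evaluated statement is available either: scoring $f_\kappa$ on data drawn from $g_\kappa$ costs $D(g_\kappa\|f_\kappa)$, which grows like $\kappa^2$.
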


Proposition~\ref{prop:tradeoff} bounds the entropy gap between $f_\kappa$ and its variance-matched Gaussian by $0.176$ nats/dim; it does \emph{not} bound the empirical NLL gap, which also contains a data-dependent misspecification term. Direct computation on three benchmarks shows zero violations across $13{,}000+$ samples (App.~\ref{app:theorem_validation}).

\paragraph{Split-conformal calibration.}
\label{sec:conformal}
To obtain distribution-free coverage guarantees even under model misspecification, we define the HProbZ nonconformity score
\begin{equation}
\label{eq:nonconformity}
    s(x,y) = \min_{\beta \in \{-1,1\}^{n_b}} \max_{j \in [d]} \frac{|y_j - c_j(x) - (G_b(x)\beta)_j| - \|G_d(x)[j,:]\|_1}{\|G_s(x)[j,:]\|_2},
\end{equation}
which selects the closest binary mode and measures the worst-dimension standardized residual after exhausting the bounded generator's reach.

\begin{theorem}[Distribution-free HProbZ coverage]
\label{thm:conformal}
Assume $\|G_s(x)[j,:]\|_2 > 0$ for every $x$ and $j$ (in practice ensured by softplus). Let $q_\alpha$ be the $\lceil(1{-}\alpha)(n{+}1)\rceil/n$ quantile of calibration scores $\{s(x_i,y_i)\}_{i=1}^n$. The prediction set $\hat S_\alpha(x) = \bigcup_{\beta}\{z: |z_j - c_j-(G_b\beta)_j|\leq \|G_d[j,:]\|_1 + q_\alpha \|G_s[j,:]\|_2,\, \forall j\}$ satisfies $\mathbb P(y_{\text{test}} \in \hat S_\alpha) \geq 1{-}\alpha$ under only exchangeability.
\end{theorem}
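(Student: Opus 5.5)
The argument is the standard split-conformal one, with one preliminary step: show that the prediction set $\hat S_\alpha(x)$ is exactly the sublevel set $\{z : s(x,z)\le q_\alpha\}$ of the score in \eqref{eq:nonconformity}. Once this is in place, coverage follows from exchangeability of the scores alone. The head $(c, G_b, G_d, G_s)$ is treated as a fixed function of $x$, fitted on data disjoint from the $n$ calibration pairs and the test pair. Conditional on the training split, the $n{+}1$ pairs $(x_i,y_i)$ are then exchangeable, and so are the scores $s_i = s(x_i,y_i)$ for $i=1,\dots,n+1$, where index $n{+}1$ is the test point. Nothing about the zonotope structure, Theorem~\ref{thm:identifiability}, or the exact likelihood is needed.

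\textbf{Step 1 (set equivalence).} Fix $x$ and write $w_j = \|G_d(x)[j,:]\|_1$ and $\sigma_j = \|G_s(x)[j,:]\|_2 > 0$; the positivity assumption makes the division in $s$ well defined. For fixed $\beta$, the condition $\max_j (|z_j - c_j - (G_b\beta)_j| - w_j)/\sigma_j \le q$ holds iff $|z_j - c_j-(G_b\beta)_j| \le w_j + q\sigma_j$ for all $j$. This uses only that multiplying by $\sigma_j>0$ preserves inequalities. Since $\{-1,1\}^{n_b}$ is finite, the minimum is attained, so $\min_\beta(\cdot)\le q$ iff some $\beta$ satisfies the inequality. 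Hence $\{z : s(x,z)\le q\} = \bigcup_\beta\{\dots\}$, and taking $q=q_\alpha$ gives $\hat S_\alpha(x)$. One edge case needs a remark: a negative $q_\alpha$ would make a box empty. The equivalence still holds verbatim, because both sides then use the same inequality.

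\textbf{Step 2 (quantile lemma).} By Step 1, the event $y_{n+1}\in\hat S_\alpha(x_{n+1})$ equals $\{s_{n+1}\le q_\alpha\}$, where $q_\alpha$ is the $\lceil(1-\alpha)(n+1)\rceil$-th smallest calibration score. By exchangeability, the rank of $s_{n+1}$ among $s_1,\dots,s_{n+1}$ is uniform on $\{1,\dots,n+1\}$ when there are no ties. With ties, break them uniformly at random; ties can only help the inequality $s_{n+1}\le q_\alpha$, so this gives a lower bound. Next, $s_{n+1}\le q_\alpha$ holds whenever $s_{n+1}$ has rank at most $\lceil(1-\alpha)(n+1)\rceil$ in the augmented list. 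That rank event has probability $\lceil(1-\alpha)(n+1)\rceil/(n+1)\ge 1-\alpha$. If $\lceil(1-\alpha)(n+1)\rceil > n$, set $q_\alpha=+\infty$, and coverage is trivial.

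\textbf{Main obstacle.} The only delicate point is Step 2's claim that $s_{n+1}\le s_{(k)}$ among the first $n$ scores whenever $s_{n+1}$'s rank in the full list is at most $k$. I would verify this directly: if at most $k-1$ of the other scores lie strictly below $s_{n+1}$, then the $k$-th smallest of $s_1,\dots,s_n$ is at least $s_{n+1}$. I would also state the data-splitting assumption explicitly. The theorem says ``only exchangeability'', but exchangeability must hold for the calibration and test pairs given a head fitted independently of them; otherwise the scores are not exchangeable.
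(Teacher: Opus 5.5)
Your proposal is correct and follows the same route as the paper. Both identify $\hat S_\alpha(x)$ with the sublevel set $\{z : s(x,z)\le q_\alpha\}$, using $\|G_s[j,:]\|_2>0$ and the finiteness of $\{-1,1\}^{n_b}$, and then apply the split-conformal quantile argument to exchangeable scores. The only difference is that you prove the rank/quantile lemma yourself (including ties and the $q_\alpha=+\infty$ case) and state the data-splitting assumption explicitly, whereas the paper simply cites standard split-conformal prediction for that step.
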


$\hat{S}_\alpha$ is a union of up to $2^{n_b}$ axis-aligned boxes rather than a single convex region; the per-dimension edge is $2$--$3\times$ shorter than Standard CP and $\sim 1.5\times$ shorter than MDN-CP. Empirical volumes, coverage, and OOD robustness are reported in \S\ref{sec:conformal_body}; a parallel distributional coverage bound $P(y\in S_\gamma) \geq 1 - 2d e^{-\gamma^2/2}$ when $y\mid x$ follows HProbZ is in App.~\ref{app:proofs}.

\section{Experiments}
\label{sec:experiments}

We test four questions: (i) does the exact likelihood recover the modal/bounded/stochastic decomposition; (ii) is HProbZ competitive on accuracy benchmarks; (iii) does structured uncertainty improve downstream decisions; (iv) does the algebraic constraint realise the $O(1/k)$ contraction of Theorem~\ref{thm:contraction}. HProbZ and MDN share encoder, schedule, and sampling budget throughout (App.~\ref{app:protocols},~\ref{app:impl_details}); metrics are minADE/minFDE for accuracy, CRPS and conformal coverage for calibration, per-mode risk and closed-loop collision for decisions; raw NLL in App.~\ref{app:nll_explanation}.

\subsection{Controlled Decomposition}
\label{sec:controlled}

We construct a trajectory task with three independently controllable uncertainty sources --- branches $K$, drift $\delta_{\mathrm{drift}}$, noise $\sigma_{\mathrm{noise}}$ --- and train HProbZ under both exact and Gaussian-approximate likelihoods, measuring whether each generator selectively tracks its matching source. The full sweep figure and per-value tables (Fig.~\ref{fig:decomposition}, App.~\ref{app:controlled}) match the theoretical predictions: under exact likelihood, $\|G_b\|$ grows $\sim 9\times$ across the mode sweep while drift is routed to $G_d$ ($\Delta\|G_d\|{=}2.90 \gg \Delta\|G_s\|{=}2.27$); the Gaussian approximation reverses the drift dominance ($\Delta\|G_s\|{=}2.70$ wins), a direct witness of Prop.~\ref{prop:gaussian_collapse}. Theorem~\ref{thm:non-gmm}'s CF fingerprint is in Fig.~\ref{fig:nonrep}; Theorem~\ref{thm:identifiability}'s multi-seed parameter recovery (within $0.008$ across 10 seeds) is in App.~\ref{app:theorem_validation}.

\paragraph{Decomposition on real driving data.} The same modal/bounded/stochastic split is recovered without supervision on nuScenes vehicle trajectories. Stratifying the trained $d{=}256$ model by past-observed vehicle speed (Fig.~\ref{fig:nuscenes_decomp}), $\|G_b\|$ scales by $\sim 370\times$ from stationary to high-speed agents (modal ambiguity grows with speed), $\|G_s\|$ scales by $\sim 94\times$ (stochastic noise), and $\|G_d\|$ stays within $5.4\times$ (systematic drift is less speed-dependent). The three generators therefore track interpretable, qualitatively distinct sources of uncertainty on real trajectories, not only on the controlled task above; setup details in App.~\ref{app:nuscenes_speed}.

\begin{figure}[t]
\centering
\includegraphics[width=0.32\linewidth]{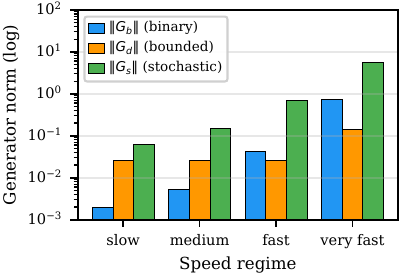}\hfill
\includegraphics[width=0.32\linewidth]{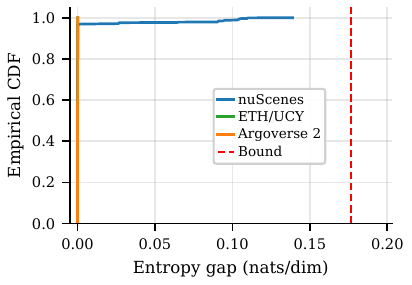}\hfill
\includegraphics[width=0.32\linewidth]{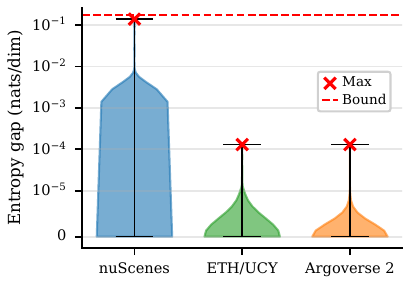}
\caption{Structural-property validation on trained HProbZ models. \textbf{Left:} nuScenes speed-stratified generator norms (log-$y$) --- $\|G_b\|$ scales $\sim 370\times$ stationary$\to$high-speed, $\|G_s\|$ scales $\sim 94\times$, $\|G_d\|$ stays within $5.4\times$, so the three generators track interpretable, qualitatively distinct uncertainty sources (Theorem~\ref{thm:identifiability}; setup App.~\ref{app:nuscenes_speed}). \textbf{Middle:} entropy-gap CDF across $13{,}000{+}$ samples on three benchmarks; all curves lie strictly below the theoretical bound $\tfrac{1}{2}\log(\pi e/6)\!\approx\!0.1765$ nats/dim (red dashed). \textbf{Right:} per-dataset distribution of the same gap (symmetric log scale). Zero violations on any benchmark for Proposition~\ref{prop:tradeoff} (App.~\ref{app:tradeoff_validation}).}
\label{fig:nuscenes_decomp}
\label{fig:prop31-verification}
\end{figure}

\subsection{Real-World Pedestrian Trajectories}
\label{sec:eth_ucy}

ETH/UCY leave-one-out (8-step past, 12-step future). Base HProbZ: single-agent transformer encoder, exact likelihood, $n_b{=}3, n_d{=}1, n_q{=}1$. Social variant adds 8-neighbour attention. Baselines: 8-component MDN and CVAE with identical input features.

\begin{table}[t]
\centering
\caption{ETH/UCY: minADE@20 / minFDE@20 (m), 5-scene leave-one-out averages, no maps. Base = $d{=}64$, 3 seeds (CVAE wins this row); Social = $d{=}128$ + 8-neighbour attention, 5 seeds. Social-CVAE not included due to compute. Per-scene table in App.~\ref{app:eth_ucy_perscene}.}
\label{tab:eth_ucy_full}
\renewcommand{\arraystretch}{0.95}
\resizebox{\linewidth}{!}{%
\begin{tabular}{lccc}
\toprule
\textbf{Encoder} & \textbf{HProbZ} & \textbf{MDN-K=8} & \textbf{CVAE} \\
\midrule
Base $d{=}64$, 3 seeds   & $0.265{\pm}0.003$\,/\,$0.470{\pm}0.008$ & $0.380{\pm}0.003$\,/\,$0.427{\pm}0.008$ & $0.240{\pm}0.003$\,/\,$0.414{\pm}0.006$ \\
Social $d{=}128$, 5 seeds & $\mathbf{0.197{\pm}0.002}$\,/\,$\mathbf{0.332{\pm}0.006}$ & $0.324{\pm}0.001$\,/\,$0.419{\pm}0.005$ & --- \\
\bottomrule
\end{tabular}}
\end{table}


Social HProbZ reaches $0.197/0.332$ minADE/minFDE, beating same-encoder MDN-$K{=}8$ by $\mathbf{39.3\%}$ ADE / $\mathbf{20.7\%}$ FDE (Tab.~\ref{tab:eth_ucy_full}, 5 seeds), matching LED ($0.18$) and EqMotion ($0.19$) on minADE and trailing them by $\sim 0.06$\,m on minFDE (full vs-published table in App.~\ref{app:published_comparison}); goal-conditioned methods like Y-Net/LED sharpen endpoints by conditioning on predicted endpoints but provide no decomposition, conformal coverage, or refinement.

At base $d{=}64$ HProbZ beats MDN-$K{=}8$ on ADE but trails CVAE; the social variant recovers and surpasses CVAE. The $\sim 21\%$ ADE advantage over MDN-$K{=}20$ is stable across encoder sizes 110K--2.4M (App.~\ref{app:encoder_scaling}), ruling out a capacity confound on minADE; \emph{on minFDE MDN-$K{=}20$ wins at all three capacities}, so the social-row FDE advantage comes from the social-attention encoder, not the head. Raw NLL exceeds MDN's, a compact-support misspecification penalty (App.~\ref{app:nll_explanation}); we lead with ADE/FDE and downstream metrics. A controlled cross-method comparison (App.~\ref{app:eth_ucy_diffusion}) gives HProbZ FDE@20 $0.39$\,m vs MDN-K2/K4 $0.41$--$0.42$ and a lightly-tuned Diffusion $0.43$/$2.24$ (DDIM-$50$/-$5$) at $\sim 530\times$ slower inference (diffusion baseline not swept over schedule/depth/target/CFG).

\subsection{nuScenes Vehicle Trajectory Benchmark}
\label{sec:nuscenes}

nuScenes~\citep{caesar2020nuscenes}: 40k test windows at 2\,Hz. Same $d{=}128$ encoder; HProbZ ($n_b{=}1$, 278k params) vs MDN-K2 (281k), MDN-K4 under identical training. Structured head runtime overhead $1.4\%$ on H200 at batch 256.

\begin{table}[t]
\centering
\caption{nuScenes minADE (m): published map-aware baselines, trajectory-only $d{=}128$ controlled sweep (3 seeds), and $d{=}256$ matched-encoder. MDN-K2$^\dagger$: best-tuned (App.~\ref{app:mdn_sweep}). \emph{Seed asymmetry in the $d{=}256$ block}: HProbZ is mean$\pm$std over 14 seeds (App.~\ref{app:symbreak}); MDN-K2 row is single-seed (no $\pm$std), so the $46\%$ gap is HProbZ's 14-seed mean vs one MDN-K2 seed.}
\label{tab:nuscenes}
\footnotesize
\renewcommand{\arraystretch}{0.95}
\begin{tabular*}{\linewidth}{@{\extracolsep{\fill}}lcccc}
\toprule
\textbf{Model} & \textbf{Map} & \textbf{ADE@5} & \textbf{ADE@10} & \textbf{ADE@20} \\
\midrule
\multicolumn{5}{l}{Published methods (map + interaction features)} \\
Trajectron++~\citep{salzmann2020trajectron++} & \checkmark & 1.88 & 1.51 & --- \\
AgentFormer~\citep{yuan2021agentformer}       & \checkmark & 1.86 & 1.45 & --- \\
MID~\citep{gu2022stochastic}                 & \checkmark & --- & 1.44 & --- \\
\midrule
\multicolumn{5}{l}{Trajectory-only, $d{=}128$ encoder (3 seeds)} \\
MDN-K2$^\dagger$  & \texttimes & $2.03{\pm}0.08$ & $1.84{\pm}0.07$ & $1.69{\pm}0.06$ \\
HProbZ            & \texttimes & $1.57{\pm}0.02$ & $1.46{\pm}0.02$ & $1.37{\pm}0.03$ \\
\midrule
\multicolumn{5}{l}{Trajectory-only, $d{=}256$ encoder, matched for MDN and HProbZ} \\
MDN-K2 (2.14\,M) & \texttimes & 2.66 & 2.41 & 2.20 \\
\textbf{HProbZ (2.13\,M)} & \texttimes & $\mathbf{1.41 {\pm} 0.06}$ & $\mathbf{1.29 {\pm} 0.07}$ & $\mathbf{1.20 {\pm} 0.07}$ \\
\bottomrule
\end{tabular*}
\end{table}

At $d{=}128$ HProbZ beats best-tuned MDN-K2 by $23\%/19\%$ at $K{=}5/K{=}20$ and reaches $1.46{\pm}0.02$ minADE@10 (3 seeds), matching MID's $1.44$ within rounding. Scaling MDN to $K{=}8$ regresses to $3.05{\pm}0.23$ (mode collapse, App.~\ref{app:mdn_k8}). Scaling the shared encoder to $d{=}256$ ($2.13$\,M params, sym-broken $G_b$ init, App.~\ref{app:symbreak}), HProbZ reaches $\mathbf{1.29{\pm}0.07}$ minADE@10 over 14 seeds, approaching MID's published $1.44$ without maps. The sym-break is load-bearing: with random init the same architecture trains to $1.48{\pm}0.10$ over 5 seeds (above MID; full ablation Tab.~\ref{tab:symbreak_sweep}). Single-seed same-encoder MDN-K2 reaches $2.41$ at $d{=}256$ (a $46\%$ gap to HProbZ's 14-seed mean; multi-seed MDN d=256 reproduction left to future work), and same-encoder DDPM reaches $2.99$ at $5.2\times$ params and $500\times$ slower (App.~\ref{app:diffusion_comparison}).

\paragraph{Calibration: CRPS.} On $n{=}5$ paired matched-architecture seeds, HProbZ and MDN-K2 are statistically tied on mean CRPS; per-seed spread analysis is in App.~\ref{app:crps_full}.

\subsection{Argoverse 2 Motion Forecasting}
\label{sec:av2}

Argoverse~2~\citep{wilson2021argoverse2}: 200k scenarios, 6\,s horizon. Same $d{=}128$ encoder, 2\,Hz subsampling, 618k training windows, identical conditions across rows.

\begin{table}[t]
\centering
\caption{Argoverse~2 minADE (m), 3 seeds, same-encoder trajectory-only models ($d{=}128$).}
\label{tab:av2}
\small
\renewcommand{\arraystretch}{0.95}
\begin{tabular*}{\linewidth}{@{\extracolsep{\fill}}lccccc}
\toprule
\textbf{Model} & \textbf{Params} & \textbf{ADE@1} & \textbf{ADE@5} & \textbf{ADE@10} & \textbf{ADE@20} \\
\midrule
MDN-K2   & 298k & $5.72\!\pm\!0.10$ & $4.47\!\pm\!0.07$ & $4.10\!\pm\!0.06$ & $3.81\!\pm\!0.05$ \\
\textbf{HProbZ} & 295k & $\mathbf{5.32\!\pm\!0.01}$ & $\mathbf{4.14\!\pm\!0.01}$ & $\mathbf{3.80\!\pm\!0.02}$ & $\mathbf{3.54\!\pm\!0.02}$ \\
\bottomrule
\end{tabular*}
\end{table}

HProbZ wins minADE at every $K$, beating MDN-K2 by $\sim 7\%$ across 3 seeds (Tab.~\ref{tab:av2}); seed-level std is $4{-}5\times$ tighter ($\sigma_{\mathrm{HProbZ}}\!\le\!0.02$\,m vs.\ MDN's $0.05{-}0.10$, directional at $n{=}3$). The Prop.~\ref{prop:tradeoff} entropy-gap bound is confirmed across three benchmarks (0 violations, $13{,}000{+}$ samples; Fig.~\ref{fig:prop31-verification}, App.~\ref{app:tradeoff_validation}).

\paragraph{Downstream risk-aware planning.}\label{sec:planning}
On a pedestrian-crossing scenario (App.~\ref{app:planning_details}), the structured planner that enumerates $2^{n_b}$ binary modes and evaluates per-mode risk analytically recovers oracle cost $0$ on Modal-Safe scenes vs.\ the Gaussian-collapsed planner's $1.5$, with the advantage emerging at mode separation $\Delta y \gtrsim 4$ (App.~\ref{app:planning_sweep}). Across $5{,}000$ nuScenes scenarios $\times 20$ seeds, HProbZ's analytic per-mode probability is deterministic (zero variance, zero flips); MDN's sample estimator decays from $33\%$ flips at $S{=}2$ to $0.9\%$ at $S{=}100$ but still flips $2.7\%$ of \emph{ambiguous} cases (full sample-budget sweep in App.~\ref{app:risk_consistency_full}).

\paragraph{Closed-loop collision avoidance.} On a $6$\,s static-obstacle stress test over nuScenes ($N{=}39{,}829$, App.~\ref{app:closed_loop_collision}), HProbZ produces $\mathbf{30\%}$ fewer trajectory-level joint-event collisions than same-encoder MDN-K2 at near-matched $10\%$ FAR (Tab.~\ref{tab:closed_loop}; FARs $9.5\%$ vs $10.6\%$). \emph{The per-step variant reverses} (HProbZ $8.01\%$ vs MDN $6.32\%$): the shared $(\beta,\alpha)$ pays off through correlated per-step events that MDN's per-step independent Gaussians underestimate at the joint-tail; without that correlation MDN's per-step flexibility dominates. A planner conditions on the joint event.

\begin{table}[t]
\centering
\caption{Closed-loop collision avoidance on nuScenes ($N{=}39{,}829$, $6$\,s, $r{=}2$\,m, 3 seeds). HProbZ wins trajectory-level by $30\%$ at near-matched $10\%$ FAR (HProbZ $9.5\%$, MDN $10.6\%$); MDN-K2 wins per-step. Bold = column winner.}
\label{tab:closed_loop}
\small
\renewcommand{\arraystretch}{0.95}
\begin{tabular*}{\linewidth}{@{\extracolsep{\fill}}lcccc}
\toprule
 & \multicolumn{2}{c}{\textbf{Trajectory (6\,s window)}} & \multicolumn{2}{c}{\textbf{Pointwise (single step)}} \\
\cmidrule(lr){2-3}\cmidrule(lr){4-5}
\textbf{Method} & AUC $\uparrow$ & Coll.\ @ $10\%$ FAR $\downarrow$ & AUC $\uparrow$ & Coll.\ @ $10\%$ FAR $\downarrow$ \\
\midrule
\textbf{HProbZ (ours)} & $\mathbf{0.925\!\pm\!0.001}$ & $\mathbf{6.73\!\pm\!0.08\%}$ & $0.906\!\pm\!0.001$ & $8.01\!\pm\!0.04\%$ \\
MDN-K2        & $0.869\!\pm\!0.001$ & $9.70\!\pm\!0.02\%$ & $\mathbf{0.930\!\pm\!0.000}$ & $\mathbf{6.32\!\pm\!0.08\%}$ \\
\bottomrule
\end{tabular*}
\end{table}

\paragraph{Constrained sequential refinement.}\label{sec:refinement}
With $\beta,\alpha$ shared across steps via $z_t = c_t + G_{b,t}\beta + G_{d,t}\alpha + G_{s,t}\nu_t$, each observation adds a constraint row that simultaneously reweights the $2^{n_b}$ mode likelihoods and tightens $\alpha$'s feasible set; the tightening propagates to all unobserved steps.

\begin{theorem}[Contraction of constrained HProbZ, informal]
\label{thm:contraction}
Under the shared-generator model with the variance-matched Gaussian relaxation $\alpha\sim\N(0,\tfrac{1}{3}I)$, conditioning on $k$ observations \textup{(i)} identifies the true mode $\beta^*$ with posterior probability $\to 1$; \textup{(ii)} contracts the bounded predictive variance at any unobserved step $s$ as $\mathrm{tr}(\mathrm{Cov}(G_{d,s}\alpha \mid \mathrm{obs}, \beta^*)) \leq \|G_{d,s}\|_F^2 / (\tfrac{1}{3} + k\rho_{\min}) = O(1/k)$, while the stochastic term $\mathrm{tr}(G_{s,s}G_{s,s}^\top)$ is irreducible; \textup{(iii)} an MDN updates only its mixing weights, leaving within-component covariances $\Theta(1)$ in $k$. Formal statement, proof, and discussion of the relaxation gap in Appendix~\ref{app:contraction}.
\end{theorem}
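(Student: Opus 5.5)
The plan is to condition on the mode first, treat the within-mode problem as Bayesian linear regression, and then handle the mode posterior and the MDN comparison separately. Write the shared-generator model as $z_t = c_t + G_{b,t}\beta + G_{d,t}\alpha + G_{s,t}\nu_t$ with independent $\nu_t\sim\N(0,I)$, and replace $\alpha$ by its variance-matched surrogate $\alpha\sim\N(0,\tfrac13 I)$.

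\textbf{Part (ii): within-mode contraction.} Fix $\beta=\beta^*$ and observe steps $t_1,\dots,t_k$. Each observation $y_{t_i} - c_{t_i} - G_{b,t_i}\beta^* = G_{d,t_i}\alpha + G_{s,t_i}\nu_{t_i}$ is a linear-Gaussian measurement of $\alpha$ with noise covariance $R_i = G_{s,t_i}G_{s,t_i}^\top$. By standard Gaussian conditioning the posterior precision is
\[
\Lambda_k = 3I + \sum_{i=1}^k G_{d,t_i}^\top R_i^{-1} G_{d,t_i}.
\]
Define $\rho_{\min}$ by the assumption $G_{d,t}^\top R_t^{-1}G_{d,t}\succeq \rho_{\min} I$ for all observed steps. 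This requires $n_d\le d$ and $G_{d,t}$ of full column rank; in the diagonal parameterisation $\rho_{\min}=\min_j a_j^2/\sigma_j^2$, consistent with the claim $\rho\ge\kappa^2$ in Prop.~\ref{prop:tradeoff}(ii). It then follows that $\Lambda_k\succeq(3+k\rho_{\min})I$, so $\mathrm{Cov}(\alpha\mid\mathrm{obs},\beta^*)\preceq (3+k\rho_{\min})^{-1}I$. Pushing forward gives
\[
\mathrm{tr}\bigl(G_{d,s}\,\mathrm{Cov}\,G_{d,s}^\top\bigr)\le \|G_{d,s}\|_F^2/(3+k\rho_{\min}),
\]
which is at least as strong as the stated bound with denominator $\tfrac13+k\rho_{\min}$ (I will match the normalisation convention in the formal statement). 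The unobserved step $s$ carries a fresh $\nu_s$ that is independent of all observations, so its contribution $\mathrm{tr}(G_{s,s}G_{s,s}^\top)$ is unchanged by conditioning. This gives the irreducibility claim.

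\textbf{Part (i): mode identification.} The mode posterior is $\pi(\beta\mid\mathrm{obs})\propto 2^{-n_b}\,p(\mathrm{obs}\mid\beta)$, where each marginal likelihood is a Gaussian in the stacked observations with mean $\{c_{t}+G_{b,t}\beta\}$ and covariance $\tfrac13 G_dG_d^\top + \mathrm{blkdiag}(R_i)$. I would show that the log-likelihood ratio $\log p(\mathrm{obs}\mid\beta^*)-\log p(\mathrm{obs}\mid\beta)$ diverges to $+\infty$ in probability for every $\beta\ne\beta^*$. Under the true model its expectation is a KL divergence between two Gaussians that share a covariance, namely half the Mahalanobis norm of the stacked mean difference $\{G_{b,t}(\beta^*-\beta)\}$. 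This step needs an identifiability condition: the mode offsets must not be absorbable by the shared $\alpha$ direction. Concretely, I would assume that $\sum_i \|P_i G_{b,t_i}(\beta^*-\beta)\|^2\to\infty$, where $P_i$ projects out the part explainable by a common $\alpha$. The mild version of this condition is that the offsets $G_{b,t}(\beta^*-\beta)$ are not of the form $G_{d,t}v$ for a single $v$ across steps. Under this condition the Mahalanobis term grows linearly in $k$, while the fluctuation of the log-ratio is Gaussian with variance of the same order as its mean. A Chebyshev argument then gives posterior mass on $\beta^*$ tending to $1$, and summing over the finitely many $\beta$ completes the step.

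\textbf{Part (iii): the MDN comparison.} For $p(y_{1:T})=\sum_k\pi_k\prod_t\N(\mu_{k,t},\Sigma_{k,t})$ with components independent across steps, conditioning on observed steps multiplies $\pi_k$ by the observed likelihoods. Each component's marginal at step $s$ remains $\N(\mu_{k,s},\Sigma_{k,s})$ exactly. The within-component covariance is therefore constant in $k$, which is $\Theta(1)$ as claimed.

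\textbf{Main obstacle.} I expect part (i) to be the hardest, specifically pinning down the non-degeneracy condition that separates $G_b$ offsets from $G_d$ drift, and controlling the correlated noise that the shared $\alpha$ induces in the log-ratio. My first attempt would be to whiten the problem by integrating $\alpha$ out explicitly. The Woodbury identity then shows that the $\alpha$-induced component has rank $n_d$ and so removes only a bounded amount of Mahalanobis mass, and the linear-in-$k$ growth survives once $k>n_d$.
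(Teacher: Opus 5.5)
Your parts (ii) and (iii) follow the paper's argument. Both use Gaussian conditioning with posterior precision $3I+I_k$, the bound $\lambda_{\min}(I_k)\ge k\rho_{\min}$, independence of the fresh $\nu_s$, and Bayes reweighting of the MDN weights while the component marginals stay fixed. Your remark on the normalisation is correct and agrees with the paper's own proof, which computes $\Sigma_{\mathrm{post}}=(3I+I_k)^{-1}$. The displayed $(\tfrac13 I+I_k)^{-1}$ and $\tfrac13+k\rho_{\min}$ are valid but looser.

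Part (i) is where you genuinely differ. The paper fixes $\alpha$ and applies the strong law to per-step log-likelihood ratios between $\beta^*$ and $\beta$ evaluated at the \emph{same} $\alpha$. It then appeals to posterior consistency and dominated convergence to pass to the marginal posterior. You instead integrate $\alpha$ out under the relaxation. Write $\Delta$ for the stacked offsets $G_{b,t_i}(\beta^*-\beta)$, $R=\mathrm{blkdiag}(R_i)$, and $\bar G_d$ for the stacked $G_{d,t_i}$. Both modes then have Gaussian marginals with common covariance $C_k=R+\tfrac13\bar G_d\bar G_d^\top$, and the log-ratio is exactly $\N(M_k/2,M_k)$ with $M_k=\Delta^\top C_k^{-1}\Delta$.

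This reduces (i) exactly to $M_k\to\infty$. It also exposes what the paper's transfer step hides: the marginal likelihood of a wrong $\beta$ averages over all $\alpha'$, and a shifted $\alpha'$ can reproduce the offset. Divergence at the true $\alpha$ does not control this. For example, if $G_{b,t_i}(\beta^*-\beta)=G_{d,t_i}v$ for every $i$, then $M_k=3v^\top(3I+I_k)^{-1}I_kv\to 3\|v\|^2$. The posterior odds stay bounded even though the centers differ at every step. So the paper's hypothesis (distinct centers at one observed step, or at infinitely many in its proof) does not suffice, and your non-absorbability assumption is genuinely needed.

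What the paper's route aims for that yours does not is the true uniform prior and almost-sure convergence. Your Chebyshev step gives convergence in probability. You can upgrade it using $P(\text{log-ratio}\le M_k/4)\le e^{-M_k/32}$ and Borel--Cantelli whenever $M_k$ grows linearly.

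The same example refutes the heuristic in your last paragraph. The rank-$n_d$ Woodbury correction does not remove only a bounded amount of Mahalanobis mass; it can cancel all but $O(1)$ of a linearly growing $\Delta^\top R^{-1}\Delta$. So neither \emph{linear growth survives once $k>n_d$} nor your \emph{mild} condition yields divergence. An offset that is absorbable at all but finitely many steps keeps $M_k$ bounded.

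The clean tool is the variational form $M_k=\min_u\{\|R^{-1/2}(\Delta-\bar G_d u)\|^2+3\|u\|^2\}$. Taking $u=v$ bounds the absorbable cases. Dropping the penalty gives $M_k\ge\|(I-P)R^{-1/2}\Delta\|^2$, where $P$ is the projector onto the column space of $R^{-1/2}\bar G_d$, a subspace of dimension at most $n_d$. That residual is the quantity to assume divergent, and it must be computed jointly over the stacked steps, not step by step. Per-step projections give a more restrictive but still sufficient condition. When the generators are the same at every step, the joint residual is $k$ times the per-step one. Linear growth then does hold whenever the per-step offset lies outside the column space of the per-step $G_d$.
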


The relaxation is variance-matched to the trained $\mathrm{Uniform}([-1,1])$ prior; a 1-D numerical check bounds the relative gap between relaxed and true posterior variances by $\leq 9.03\%$ on the $(\kappa,k)$ grid we test (App.~\ref{app:contraction}). Observable contraction is controlled by $\kappa = \|G_d\|/\sigma$: ETH/UCY learns $\kappa\!\approx\!2.1$ (ceiling $\sim$$32\%$); on nuScenes, sweeping the $G_d$-bias over $\kappa\in\{1.5, 2.0, 2.8, 3.7\}$ gives spread reductions $\{12.3{\pm}3.6,\, 17.7{\pm}4.9,\, 25.8{\pm}4.2,\, 33.2{\pm}5.9\}\%$ (5 seeds; per-seed values and 3-seed reference in Tab.~\ref{tab:nuscenes_constraint}). We instantiate on ETH/UCY with $n_b{=}1, n_d{=}1$ shared across $T{=}12$ steps, revealing positions $0,\ldots,8$.

\begin{table}[t]
\centering
\caption{Sequential refinement on ETH/UCY (same-encoder, 5-scene aggregate, 3 seeds). HProbZ contracts spread $23.6\%$ and improves FDE $16.5\%$; MDN's FDE degrades.}
\label{tab:refinement}
\small
\renewcommand{\arraystretch}{0.95}
\begin{tabular*}{\linewidth}{@{\extracolsep{\fill}}ccccc}
\toprule
\textbf{Revealed $k$} & \textbf{HProbZ FDE} & \textbf{MDN FDE} & \textbf{HProbZ std} & \textbf{MDN std} \\
\midrule
0 & $\mathbf{1.071\!\pm\!0.006}$ & $1.199\!\pm\!0.076$ & $\mathbf{0.852\!\pm\!0.006}$ & $1.014\!\pm\!0.035$ \\
8 & $\mathbf{0.894\!\pm\!0.019}$ & $1.314\!\pm\!0.107$ & $\mathbf{0.651\!\pm\!0.017}$ & $0.977\!\pm\!0.081$ \\
\midrule
$\Delta$ (0$\to$8) & $\mathbf{-16.5\%}$ & $+9.6\%$ & $\mathbf{-23.6\%}$ & $-3.6\%$ \\
\bottomrule
\end{tabular*}
\end{table}

Table~\ref{tab:refinement} shows simultaneous spread-and-mean refinement: HProbZ's std drops $23.6\%$ and FDE improves $16.5\%$, while MDN re-weights toward a mis-located mode (FDE $+9.6\%$, std $-3.6\%$) --- exactly Theorem~\ref{thm:contraction}(iii).

\subsection{Conformal Coverage with Multi-Modal Sets}
\label{sec:conformal_body}

Split conformal at $\alpha{=}0.1$ (Theorem~\ref{thm:conformal}), comparing Standard CP (single $L_\infty$ box), MDN-CP ($K$-component union), and HProbZ-CP ($2^{n_b}$ per-mode union via Eq.~\eqref{eq:nonconformity}); $D{=}24$. All three reach $\sim 90\%$ coverage; HProbZ-CP achieves it with substantially smaller volumes on motion-rich scenes (Tab.~\ref{tab:conformal_vol_ethucy}).

\begin{table}[t]
\centering
\caption{Conformal set volume comparison ($\alpha{=}0.1$, $D{=}24$, within-scene calibration). All methods reach $\sim 90\%$ coverage; HProbZ-CP is $5.5$/$3.9$ orders smaller than StdCP/MDN-CP on the ETH/UCY average and $16.3$/$13.9$ orders smaller on Argoverse~2. HOTEL is a counter-example: MDN-CP is $3.5$ OOM smaller than HProbZ-CP there (App.~\ref{app:conformal_volume} discusses the mechanism). Bold marks the smallest volume per row.}
\label{tab:conformal_vol_ethucy}
\footnotesize
\renewcommand{\arraystretch}{0.95}
\begin{tabular}{lccc@{\hspace{6pt}}ccc}
\toprule
& \multicolumn{3}{c}{\textbf{Empirical coverage}} & \multicolumn{3}{c}{\textbf{$\log_{10}$(volume)} $\downarrow$} \\
\cmidrule(lr){2-4}\cmidrule(lr){5-7}
\textbf{Scene} & HProbZ & StdCP & MDN & HProbZ & StdCP & MDN \\
\midrule
ETH   & 0.890 & 0.874 & 0.868 & \textbf{20.0} & 22.0 & 23.7 \\
HOTEL & 0.930 & 0.858 & 0.846 & 15.1 & 15.5 & \textbf{11.6} \\
UNIV  & 0.896 & 0.899 & 0.907 & \textbf{7.1}  & 13.6 & 11.3 \\
ZARA1 & 0.892 & 0.910 & 0.902 & \textbf{3.9}  & 13.1 & 10.1 \\
ZARA2 & 0.904 & 0.911 & 0.896 & \textbf{3.7}  & 13.2 & 12.8 \\
\midrule
\textbf{ETH/UCY Avg.} & 0.903 & 0.890 & 0.884 & \textbf{10.0} & 15.5 & 13.9 \\
\midrule
\textbf{Argoverse~2} & 0.901 & 0.900 & 0.901 & \textbf{21.5} & 37.8 & 35.4 \\
\bottomrule
\end{tabular}
\end{table}

Volume gaps compound from $2$--$3\times$ per-dimension edge ratios over $D{=}24$. The largest improvements are on open-walkway scenes (UNIV $6.5$, ZARA1 $9.2$, ZARA2 $9.5$ OOM vs StdCP) and Argoverse~2 ($16.3$ OOM); the gap shrinks on ETH ($2.0$) and reverses on HOTEL where MDN-CP wins by $3.5$ OOM --- HProbZ-CP wins the average and 4/5 scenes. OOD coverage holds: $90.9\%$ ETH/UCY leave-one-out average, $89.7\%/99.8\%$ IID/OOD on nuScenes speed split (App.~\ref{app:ood}); $\alpha$-sweep and protocol in App.~\ref{app:conformal_volume}.

\section{Related Work}
\label{sec:related}

\paragraph{Trajectory forecasting.} Recurrent~\citep{alahi2016social, gupta2018social}, CVAE~\citep{salzmann2020trajectron++, mangalam2020pecnet, mangalam2021ynet}, transformer~\citep{yuan2021agentformer, xu2022memonet, shi2022mtr, zhou2023qcnet, zhang2024demo, song2024realmotion}, language-modeling~\citep{seff2023motionlm}, diffusion~\citep{gu2022stochastic, mao2023leapfrog, jiang2023motiondiffuser}, flow~\citep{schoeller2021flomo}, equivariant~\citep{xu2023eqmotion}, and refinement~\citep{zhou2024smartrefine} forecasters emit unstructured samples or independent-Gaussian mixtures without algebraic constraints across steps; HProbZ's shared bounded factor adds such a constraint under a closed-form output.
\paragraph{Uncertainty decomposition and conformal prediction.} Epistemic--aleatoric decomposition~\citep{kendall2017uncertainties, lakshminarayanan2017simple} is orthogonal and coarser; Theorem~\ref{thm:identifiability} gives a likelihood-identifiable within-aleatoric one. Split conformal~\citep{vovk2005algorithmic, romano2019conformalized, barber2021predictive, angelopoulos2023conformal} and trajectory variants~\citep{lindemann2023safe, huang2025cuqds, zhou2024conformalized} return single convex regions; Eq.~\eqref{eq:nonconformity} returns a union of $2^{n_b}$ axis-aligned boxes from the trained model.
\paragraph{Zonotopes in machine learning.} Zonotopes serve as inputs/intermediate sets for NN verification~\citep{singh2018fast, zhang2018efficient, mirman2018differentiable, bird2023hybrid, kochdumper2023constrained} with the network fixed; HProbZ inverts the role, making the zonotope the trained \emph{output} via Eq.~\eqref{eq:exact_pdf}.

\section{Conclusion}
\label{sec:conclusion}

HProbZ is a structured output head that keeps modal, bounded, and stochastic uncertainty separately identifiable. It approaches map-aware MID's published minADE@10 on nuScenes ($1.29\!\pm\!0.07$, 14 seeds) and is on par with LED/EqMotion on ETH/UCY ($0.197$). Structural gains: observation-driven contraction up to $33{\pm}6\%$ (5 seeds), $30\%$ fewer trajectory-level closed-loop collisions (per-step reverses), and conformal sets $3.9$/$13.9$ orders smaller than MDN-CP on ETH/UCY/Argoverse~2.

\paragraph{Limitations.} \emph{Cross-dataset transfer is largely negative}: zero-shot nuScenes$\to$nuPlan AUC $0.519$ on the full agent mix (within $2$\,pp of chance, $14{,}653$ pairs); only the close-proximity subset preserves $+0.159$ AUC (App.~\ref{app:nuplan_closed_loop}). \emph{Closed-loop ordering is metric-dependent}: the $30\%$ headline is trajectory-level; per-step reverses (HProbZ $8.01\%$ vs MDN $6.32\%$). \emph{The SOTA-chase number depends on a method-specific init}: $1.29$ uses the symmetry-breaking $G_b$-bias (App.~\ref{app:symbreak}); without it, the same architecture trains to $1.48{\pm}0.10$, above MID's $1.44$. Theorem~\ref{thm:contraction} is proven under a Gaussian-relaxed prior with a $\leq 9.03\%$ numerical gap to the trained uniform prior. Raw NLL exceeds MDN's; the bare $d{=}64$ head trails CVAE on ADE before social attention, and at vanilla encoders MDN-K=20 wins minFDE@20 (Tab.~\ref{tab:encoder_scaling}). Other caveats (mode prior, $G_d/G_s$ diagonality, baseline tuning, deployment) in App.~\ref{app:broader_impact}.

\bibliographystyle{plainnat}
\bibliography{BibTex_2025}

\appendix

\noindent\textbf{Appendix outline.}
\textbf{A} Implementation Details and Reproducibility (datasets, hyperparameters, compute, per-experiment protocol).
\textbf{B} Full Proofs (Theorems 1--5, Propositions 1--3).
\textbf{C} Empirical Theorem Validation (per-value sweep tables; CF fingerprint; parameter recovery; entropy-gap bound; raw-NLL note).
\textbf{D} Additional Experiments grouped as: D.1 Ablations and Hyperparameter Sweeps; D.2 Calibration Extensions; D.3 Downstream Task Details; D.4 Baseline Comparisons (Extended); D.5 Per-Scene, Per-Modality, Per-Seed Analyses; D.6 Visualisations.
\textbf{E} Broader Impact and Ethical Considerations.

\section{Implementation Details and Reproducibility}
\label{app:impl_details}

\subsection{Datasets}
\label{app:datasets}

\paragraph{ETH/UCY pedestrian.} Five scenes (ETH, HOTEL, UNIV, ZARA1, ZARA2) recorded at $2.5$\,Hz, evaluated under the standard leave-one-out protocol: train on 4 scenes, test on the held-out scene. Trajectories are observed for $8$ frames ($3.2$\,s) and predicted for $12$ frames ($4.8$\,s). Coordinates are in metres, world frame. Source: original ETH/UCY recordings released for research use.

\paragraph{nuScenes vehicle.} 1{,}000 driving scenes from Boston and Singapore, 40\,k test windows of vehicle trajectories at $2$\,Hz~\citep{caesar2020nuscenes}. Past horizon $1$\,s, future horizon $6$\,s ($T_f{=}12$). Ego-centric coordinate frame. License: nuScenes non-commercial research use.

\paragraph{Argoverse~2 motion forecasting.} 200\,k training scenarios, 25\,k validation, 24\,k test (97\,k test windows) at $10$\,Hz~\citep{wilson2021argoverse2}; we subsample to $2$\,Hz for matched comparison. License: CC-BY-NC-SA 4.0.

\paragraph{nuPlan (zero-shot evaluation only).} The nuPlan mini split (Boston/Pittsburgh/Singapore/Las Vegas logs) is used only for cross-dataset collision-warning evaluation in \S\ref{app:nuplan_closed_loop}; no training. License: nuPlan non-commercial research use.

\subsection{HProbZ Architecture and Training}
\label{app:hprobz_hyperparams}

\begin{table}[h]
\centering
\small
\caption{HProbZ master hyperparameter table. ETH/UCY entry covers both single-agent ($d{=}64$) and Social ($d{=}128$, 8-neighbour attention) variants. Optimiser is AdamW with cosine LR decay; warmup is linear over the first $10$ epochs. ``Sym-break $b_0$'' is the positive bias added to the $G_b$ output channels at init (Appendix~\ref{app:symbreak}).}
\label{tab:hprobz_hyperparams}
\resizebox{\linewidth}{!}{\begin{tabular}{lcccccccc}
\toprule
\textbf{Setting} & $d$ & \textbf{Layers} & $(n_b, n_d, n_q)$ & \textbf{Batch} & \textbf{LR} & \textbf{Epochs} & \textbf{Sym-break $b_0$} & \textbf{Seeds} \\
\midrule
ETH/UCY single   & 64  & 2 & $(3, 1, 1)$ & 1024 & $3{\times}10^{-4}$ & 600  & 0.05 & 3  \\
ETH/UCY Social   & 128 & 3 & $(3, 1, 1)$ & 2048 & $3{\times}10^{-4}$ & 600 & 0.05 & 5  \\
nuScenes ($d{=}128$) & 128 & 2 & $(1, 1, 1)$ & 1024 & $3{\times}10^{-4}$ & 100 & 0   & 3  \\
nuScenes ($d{=}256$) & 256 & 4 & $(1, 1, 1)$ & 1024 & $2{\times}10^{-4}$ & 150 & 1.0 & 14 \\
Argoverse~2      & 128 & 2 & $(1, 1, 1)$ & 1024 & $3{\times}10^{-4}$ & 100 & 0   & 3  \\
\bottomrule
\end{tabular}}
\end{table}

\subsection{Baseline Architectures and Hyperparameters}
\label{app:baseline_hyperparams}

\textbf{MDN.} \texttt{Linear($d, d$)\,$\to$\,GELU\,$\to$\,Linear($d, T_f \cdot K \cdot 5$)} producing per-component means, variances, and mixing logits. Trained with the same encoder, optimiser, batch, and schedule as the matched HProbZ row. Full hyperparameter sweep in Appendix~\ref{app:mdn_sweep}; the pathological MDN-K=8 collapse sweep is in Appendix~\ref{app:mdn_k8}.

\textbf{CVAE.} 32-dim latent with shared-encoder amortized posterior; KL warmup over $20$ epochs. Used in Table~\ref{tab:eth_ucy_full} and as a baseline in the controlled experiments of Appendix~\ref{app:additional_baselines_varying}.

\textbf{Diffusion (DDPM).} Temporal transformer denoiser with adaLN-style timestep conditioning, $4$ blocks, cosine $T{=}200$ schedule, DDIM~$50$-step sampling, $\eta{=}0.5$; details and step-count sweep in Appendix~\ref{app:diffusion_comparison}.

\subsection{Per-Experiment Protocol Index}
\label{app:protocols}

Every sub-experiment in \S\ref{sec:experiments} is a matched-encoder HProbZ vs.\ MDN comparison; Table~\ref{tab:protocols} lists the controlled variables in each.

\begin{table}[h]
\centering
\small
\caption{Summary of experimental protocols. Within each row, HProbZ and the MDN baseline share the same encoder architecture, training schedule, and sampling budget.}
\label{tab:protocols}
\resizebox{\linewidth}{!}{\begin{tabular}{llll}
\toprule
\textbf{Experiment} & \textbf{Dataset} & \textbf{Encoder} & \textbf{HProbZ vs baseline} \\
\midrule
Table~\ref{tab:eth_ucy_full} (\S\ref{sec:eth_ucy}) & ETH/UCY (LOO, 5 scenes) & $d{=}64$ HProbZ/MDN/CVAE; $d{=}128$ Social HProbZ & Social $n_b{=}3$ vs MDN-K8 \\
Table~\ref{tab:nuscenes}, $d{=}128$ row (\S\ref{sec:nuscenes}) & nuScenes (40k test) & $d{=}128$, 2 layers & $n_b{=}1$ vs MDN-K2 \\
Table~\ref{tab:nuscenes}, $d{=}256$ row & nuScenes (40k test) & $d{=}256$, 4 layers & $n_b{=}1$ (14 seeds) vs MDN-K2 (1 seed; both 2.14\,M) \\
Table~\ref{tab:av2} (\S\ref{sec:av2}) & Argoverse~2 (97k test, 3 seeds) & $d{=}128$, 2 layers, batch=8192, 300 ep & $n_b{=}1$ vs MDN-K2 \\
CRPS (\S\ref{sec:nuscenes}) & nuScenes (full 40k, $K{=}100$, 5 paired seeds) & $d{=}128$, 2 layers & $n_b{=}1$ vs MDN-K2 (matched arch) \\
Closed-loop (\S\ref{sec:planning}) & nuScenes (full 40k, 500 samples/inst.) & $d{=}128$, 2 layers & $n_b{=}1$ vs MDN-K2 \\
Table~\ref{tab:refinement} (\S\ref{sec:refinement}) & ETH/UCY (LOO, reveal $k{\in}\{0,\dots,8\}$) & $d{=}64$, 2 layers & $n_b{=}1$ vs MDN-K8 \\
\bottomrule
\end{tabular}}
\end{table}

\subsection{Compute Resources}
\label{app:compute}

All training experiments use single H100/H200-class GPUs; small-scale ablations and runtime measurements use a consumer GPU (RTX-class). Single $d{=}256$ HProbZ on nuScenes: $\sim 18$ minutes per seed end-to-end (150 epochs, batch 1024, 162k training windows); the 14-seed sweep used $\approx 252$ GPU-minutes ($\approx 4.2$ GPU-hours). Argoverse~2 and ETH/UCY runs are smaller; the 600-epoch Social re-run used a 4-GPU cluster for the 5-seed $\times$ 5-scene grid. Inference benchmarks (throughput, refinement runtime) are measured on a single consumer GPU. Code, per-seed checkpoints, and per-experiment compute logs are released to avoid repeated retraining.

\subsection{Algorithmic Specification}
\label{app:algorithms}

We provide pseudocode for the two HProbZ-specific procedures. Algorithm~\ref{alg:training} is the per-batch training step (gradient through the closed-form convolution likelihood of Eq.~\eqref{eq:exact_pdf}); Algorithm~\ref{alg:refinement} is the constrained inference procedure that activates the algebraic constraint machinery of Definition~\ref{def:hprobz} when $k$ ground-truth positions become available. \emph{Refinement is performed only at inference; training (Algorithm~\ref{alg:training}) sees no constraint rows ($p{=}0$).} The Gaussian relaxation of $\alpha$ in Algorithm~\ref{alg:refinement} is therefore an inference-time approximation to the conditional posterior over the bounded latent, used solely to obtain a closed-form posterior covariance and not present in the training objective.

\begin{algorithm}[h]
\caption{HProbZ training step (one batch).}
\label{alg:training}
\begin{algorithmic}[1]
\Require batch $\{(x_i, y_i)\}_{i=1}^B$, encoder $\mathrm{Enc}_\phi$, 2-layer MLP head $\mathrm{Head}_W(h) = W_2\,\mathrm{GELU}(W_1 h)$ producing $(c, G_b, \log a, \log\sigma)$
\Require initial bias on head output: $G_b$ entries $= b_0$ (symmetry-break, benchmark-specific; $b_0{=}0.05$ for ETH/UCY, $b_0{=}1.0$ for nuScenes $d{=}256$, see Tab.~\ref{tab:hprobz_hyperparams} and App.~\ref{app:symbreak}), $\log a_0 = 0.7$, $\log\sigma_0 = 1.1$; weight init $\mathcal{N}(0, 0.01^2)$
\Require loss flag $\mathcal{L} \in \{\text{exact}, \text{Gauss}\}$, gradient-clip threshold $\tau = 1.0$
\For{$i = 1, \ldots, B$}
    \State $h_i \gets \mathrm{Enc}_\phi(x_i)$
    \State $(c_i,\, G_{b,i},\, \log a_i,\, \log\sigma_i) \gets \mathrm{Head}_W(h_i)$
    \State $a_i \gets \exp(\mathrm{clamp}(\log a_i,\, -4,\, 4));\ \sigma_i \gets \exp(\mathrm{clamp}(\log\sigma_i,\, -4,\, 4))$
    \For{each binary $\beta \in \{-1,1\}^{n_b}$}
        \State $\mu_{i,\beta} \gets c_i + G_{b,i}\beta$
    \EndFor
    \If{$\mathcal{L} = \text{exact}$}
        \State $\ell_i \gets -\log\sum_\beta 2^{-n_b}\prod_j f(y_{i,j} - \mu_{i,\beta,j};\, a_{i,j}, \sigma_{i,j})$ \Comment{$f$ from Eq.~\eqref{eq:exact_pdf}}
    \Else
        \State $\ell_i \gets -\log\sum_\beta 2^{-n_b}\,\N(y_i;\, \mu_{i,\beta},\, \sigma_i^2 + a_i^2/3)$ \Comment{Gaussian surrogate}
    \EndIf
\EndFor
\State $g \gets \nabla_{\phi, W}\, \tfrac{1}{B}\sum_i \ell_i$
\State Clip: $g \gets g \cdot \min\!\big(1,\ \tau / \|g\|_2\big)$
\State Update $\phi, W$ by AdamW($g$).
\end{algorithmic}
\end{algorithm}

\begin{algorithm}[h]
\caption{HProbZ constrained inference (refinement after $k$ revealed positions).}
\label{alg:refinement}
\begin{algorithmic}[1]
\Require trained head outputs at all steps $\{(c_t, G_{b,t}, G_{d,t}, G_{s,t})\}_{t=1}^T$, observed positions $\{z_{t_i}\}_{i=1}^k$
\Require Gaussian relaxation $\alpha \sim \N(0, \tfrac{1}{3}I_{n_d})$
\For{each binary $\beta \in \{-1,1\}^{n_b}$}
    \State $r_{t_i} \gets z_{t_i} - c_{t_i} - G_{b,t_i}\beta$ for $i=1,\ldots,k$
    \State $\Sigma_{s,t_i} \gets G_{s,t_i} G_{s,t_i}^\top$
    \State $I_k \gets \sum_{i=1}^k G_{d,t_i}^\top \Sigma_{s,t_i}^{-1} G_{d,t_i}$ \Comment{Fisher information}
    \State $\Sigma_{\mathrm{post}}(\beta) \gets (3 I_{n_d} + I_k)^{-1}$
    \State $\mu_{\mathrm{post}}(\beta) \gets \Sigma_{\mathrm{post}}(\beta)\,\sum_{i=1}^k G_{d,t_i}^\top \Sigma_{s,t_i}^{-1} r_{t_i}$
    \State $w(\beta) \gets 2^{-n_b}\prod_{i=1}^k p(z_{t_i} \mid \beta)$ \Comment{mode posterior weight}
\EndFor
\State Normalize $\{w(\beta)\}$ to a posterior over $\beta$.
\For{each unobserved step $s$}
    \State Tightened mean: $\hat z_s(\beta) \gets c_s + G_{b,s}\beta + G_{d,s}\,\mu_{\mathrm{post}}(\beta)$
    \State Tightened covariance: $\widehat \Sigma_s(\beta) \gets G_{d,s}\,\Sigma_{\mathrm{post}}(\beta)\,G_{d,s}^\top + \Sigma_{s,s}$
\EndFor
\State \Return $\{w(\beta), \hat z_s(\beta), \widehat \Sigma_s(\beta)\}$ for all $\beta$ and unobserved $s$.
\end{algorithmic}
\end{algorithm}

Total cost of Algorithm~\ref{alg:refinement} is $O(2^{n_b}(k\, n_d^2 + n_d^3))$ per scene; this is dominated by the $n_d \times n_d$ posterior solve and is negligible compared to a single encoder forward pass for the $n_b, n_d \in \{1, 2, 3\}$ regime used in this paper. The Fisher-information form here is the formal statement; in the body we wrote the simplified $\rho_{\min}$ bound (Theorem~\ref{thm:contraction}, Appendix~\ref{app:contraction}).

\section{Full Proofs}
\label{app:proofs}

\begin{proposition}[Minkowski decomposition of the HProbZ measure]
\label{prop:minkowski}
Let $\mu_{\HPZ}$ denote the distribution of $y = c + G_b\beta + G_d\alpha + G_s\nu$ where $\beta \sim \mathrm{Uniform}(\{-1,1\}^{n_b})$, $\alpha \sim \mathrm{Uniform}([-1,1]^{n_d})$, and $\nu \sim \N(0,I_{n_q})$ are independent in the unconstrained case $p{=}0$. Then $\mu_{\HPZ} = \delta_c \ast (\frac{1}{2^{n_b}}\sum_{\beta} \delta_{G_b\beta}) \ast (G_d)_*\mathcal{U}([-1,1]^{n_d}) \ast \N(0, G_sG_s^\top)$.
\end{proposition}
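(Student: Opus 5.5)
The plan is to use the elementary fact that the law of a sum of independent random vectors is the convolution of their laws. Combined with the pushforward description of each summand, this gives the decomposition almost immediately. I would organise it in three short steps, and cross-check with characteristic functions.

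\textbf{Step 1: identify each summand's law.} Write $y = c + U_b + U_d + U_s$ with $U_b = G_b\beta$, $U_d = G_d\alpha$ and $U_s = G_s\nu$. Since $\beta$, $\alpha$ and $\nu$ are independent, the three vectors are independent: they are measurable functions of independent variables.

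\begin{itemize}
\item The law of $U_b$ is the pushforward of the uniform measure on $\{-1,1\}^{n_b}$ under $\beta\mapsto G_b\beta$, namely $\frac{1}{2^{n_b}}\sum_\beta \delta_{G_b\beta}$. Repeated atoms are allowed when $G_b\beta$ coincide; the formula is stated as a sum over $\beta$, so this causes no issue.
\item The law of $U_d$ is $(G_d)_*\mathcal{U}([-1,1]^{n_d})$ by definition.
\item The law of $U_s$ is $\N(0, G_sG_s^\top)$, since linear images of standard Gaussians are Gaussian with covariance $G_sG_s^\top$. This holds even when $G_sG_s^\top$ is singular, which is why I work with measures rather than densities.
\item The constant $c$ has law $\delta_c$, and translating by $c$ is the same as convolving with $\delta_c$.
\end{itemize}

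\textbf{Step 2: convolution of independent summands.} For independent random vectors $X$ and $Y$ and any Borel set $B$, Fubini gives
\[
\mathbb{P}(X+Y\in B)=\int\!\!\int \mathbf 1_B(x+y)\,d\mu_X(x)\,d\mu_Y(y)=(\mu_X\ast\mu_Y)(B).
\]
Applying this inductively to the four independent summands, and using associativity and commutativity of convolution of finite Borel measures, yields the stated formula.

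\textbf{Step 3: characteristic-function check.} As a cross-check, and as the form needed later for Theorems~\ref{thm:non-gmm} and \ref{thm:identifiability}, I would verify that $\mathbb{E}e^{i\langle t,y\rangle}$ factorises:
\[
e^{i\langle t,c\rangle}\cdot 2^{-n_b}\sum_\beta e^{i\langle t,G_b\beta\rangle}\cdot \prod_{l}\operatorname{sinc}\!\big((G_d^\top t)_l\big)\cdot e^{-\|G_s^\top t\|^2/2}.
\]
Here $\operatorname{sinc}(u)=\sin(u)/u$. The factorisation follows from independence, and each factor is the characteristic function of the corresponding measure above. Uniqueness of characteristic functions (Lévy) then identifies $\mu_{\HPZ}$ with the convolution.

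There is no real obstacle here. The only points needing care are that some measures may be degenerate (singular $G_d$ or $G_s$, or coinciding mode centres), so the argument must be phrased for measures rather than densities, and that the independence hypothesis is exactly the $p{=}0$ assumption. With equality constraints the factors become dependent and the decomposition fails, which I would note explicitly.
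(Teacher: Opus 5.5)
Your proposal is correct and follows essentially the same route as the paper. The paper's proof likewise uses only that independence turns the sum into a convolution of marginals, with each marginal identified as the pushforward of its factor (discrete atoms, zonoid measure $(G_d)_*\mathcal{U}$, Gaussian $\N(0,G_sG_s^\top)$). Your characteristic-function factorisation is a redundant but harmless cross-check; the paper uses the same factorisation later, in the proof of Theorem~\ref{thm:non-gmm}. Your remark on repeated atoms is slightly more careful than the paper's wording, since ``the uniform measure on the finite set $G_b\{-1,1\}^{n_b}$'' is only accurate when the mode centres $G_b\beta$ are distinct.
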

\begin{proof}
Independence of $\beta, \alpha, \nu$ implies the distribution of the sum is the convolution of the marginals. The distribution of $G_b\beta$ is the uniform measure on the finite set $G_b\{-1,1\}^{n_b}$; that of $G_d\alpha$ is the pushforward of the uniform measure on the cube under $G_d$, i.e., the zonoid measure~\citep{schneider2014convex}; and $G_s\nu$ contributes the Gaussian factor.
\end{proof}

\subsection{Proof of Theorem~\ref{thm:non-gmm}}

\textbf{Setup.} The characteristic function of $\mu_{\HPZ}$ at $t \in \R^d$ factorises as
\begin{equation*}
\phi_{\HPZ}(t) = e^{i c^\top t} \cdot \phi_B(t) \cdot \prod_{m=1}^{n_d}\mathrm{sinc}(G_d[:,m]^\top t)\cdot \exp\!\left(-\tfrac{1}{2} t^\top G_sG_s^\top t\right),
\end{equation*}
where $\phi_B(t) = 2^{-n_b}\sum_{\beta}e^{i(G_b\beta)^\top t}$ is bounded in magnitude by $1$. Suppose for contradiction that there is a finite Gaussian mixture with CF $\phi_{\text{GMM}}(t) = \sum_{k=1}^K \pi_k e^{i\mu_k^\top t - \tfrac{1}{2}t^\top \Sigma_k t}$ such that $\phi_{\HPZ}(t) = \phi_{\text{GMM}}(t)$ for all $t \in \R^d$.

\textbf{Reduction to 1D.} Since $G_d$ has a nonzero entry, choose any unit vector $u \in \R^d$ with $a := |G_d[:,m^*]^\top u| > 0$ for some $m^*$. By Cram\'{e}r--Wold, the two distributions project to the same 1-D law along $u$, so the 1-D CFs match: $\phi_{\HPZ,u}(s) = \phi_{\text{GMM},u}(s)$ for all $s \in \R$, where
\begin{align*}
\phi_{\HPZ,u}(s) &= e^{i c_u s} \cdot \phi_{B,u}(s) \cdot \prod_{m : G_d[:,m]^\top u \neq 0} \mathrm{sinc}(G_d[:,m]^\top u \cdot s) \cdot e^{-\tfrac{1}{2}\sigma_u^2 s^2},\\
\phi_{\text{GMM},u}(s) &= \sum_{k=1}^K \pi_k e^{i\mu_{k,u} s - \tfrac{1}{2}\sigma_{k,u}^2 s^2},
\end{align*}
with $\sigma_u^2 = u^\top G_sG_s^\top u$, $\sigma_{k,u}^2 = u^\top \Sigma_k u > 0$, $c_u = u^\top c$, $\mu_{k,u} = u^\top \mu_k$, and $\phi_{B,u}(s) = 2^{-n_b}\sum_\beta e^{i(G_b\beta)^\top u\, s}$ a trigonometric polynomial (bounded by $1$ on $\R$).

\textbf{Analytic continuation to the imaginary axis.} Both $\phi_{\HPZ,u}$ and $\phi_{\text{GMM},u}$ extend to entire functions on $\mathbb{C}$ (the 1-D formulas above define holomorphic extensions). Evaluating at $s = iy$ with real $y > 0$:
\begin{equation*}
\mathrm{sinc}(\alpha \cdot iy) = \frac{\sin(i\alpha y)}{i\alpha y} = \frac{\sinh(\alpha y)}{\alpha y}, \quad e^{-\tfrac{1}{2}\sigma_u^2 (iy)^2} = e^{\tfrac{1}{2}\sigma_u^2 y^2},
\end{equation*}
and $\phi_{B,u}(iy) = 2^{-n_b}\sum_\beta e^{-(G_b\beta)^\top u\, y}$ is a finite sum of positive real exponentials. Choose $u$ generic so that exactly one $\beta^\star\in\{-1,1\}^{n_b}$ achieves $\gamma_u := \max_\beta\{-(G_b\beta)^\top u\}$ (this fails only on a measure-zero set of directions in the unit sphere). Then $\phi_{B,u}(iy) \sim 2^{-n_b}\, e^{\gamma_u y}$ as $y\to\infty$ with no polynomial pre-factor; ties at the maximizer would contribute a constant multiplicity but no $y$-dependent factor, and the contradiction below is unaffected.

\textbf{Asymptotic magnitude on the imaginary axis.} Let $S_m = |G_d[:,m]^\top u|$ for $m$ with $S_m > 0$, and $A = \sum_m S_m$. For $y \to \infty$, $\sinh(S_m y)/(S_m y) = e^{S_m y}/(2 S_m y) (1 + O(e^{-2S_m y}))$, so
\begin{equation*}
|\phi_{\HPZ,u}(iy)| \;=\; \frac{C}{y^{n_d'}} \cdot e^{(A + \gamma_u) y + \tfrac{1}{2}\sigma_u^2 y^2} \cdot (1 + o(1))
\end{equation*}
where $n_d'$ is the number of directions with $G_d[:,m]^\top u \neq 0$ and $C > 0$ is constant (independent of $y$). The key feature is the algebraic factor $y^{-n_d'}$, $n_d' \geq 1$.

For the Gaussian mixture at $s = iy$:
\begin{equation*}
|\phi_{\text{GMM},u}(iy)| = \left|\sum_{k=1}^K \pi_k e^{\mu_{k,u} y + \tfrac{1}{2}\sigma_{k,u}^2 y^2}\right|.
\end{equation*}
As $y \to \infty$, this sum is dominated by the term(s) with maximum exponent $\tfrac{1}{2}\sigma_{k,u}^2 y^2 + \mu_{k,u} y$. Let $\sigma^\star = \max_k \sigma_{k,u}$ and $\mu^\star = \max\{\mu_{k,u} : \sigma_{k,u} = \sigma^\star\}$; then
\begin{equation*}
|\phi_{\text{GMM},u}(iy)| \;=\; D \cdot e^{\mu^\star y + \tfrac{1}{2}(\sigma^\star)^2 y^2} \cdot (1 + o(1))
\end{equation*}
where $D = \sum_{k:\,(\sigma_{k,u},\mu_{k,u})=(\sigma^\star,\mu^\star)} \pi_k > 0$ is the sum of weights of all components attaining the dominant exponent. Since every $\pi_k > 0$ by the GMM definition and at least one component attains the maximum, $D \geq \pi_{k^\star} > 0$; in particular $D$ is bounded away from zero independently of $y$, so no $y$-dependent cancellation can collapse the leading constant.

\textbf{Contradiction.} The equality $\phi_{\HPZ,u}(iy) = \phi_{\text{GMM},u}(iy)$ forces the asymptotic magnitudes to agree. Matching the $y^2$ and $y$ coefficients in the exponent yields $\sigma_u^2 = (\sigma^\star)^2$ and $A + \gamma_u = \mu^\star$. After dividing both sides by $e^{(A+\gamma_u) y + \tfrac{1}{2}\sigma_u^2 y^2}$, the LHS tends to $0$ like $y^{-n_d'}$ (algebraically), while the RHS tends to the strictly positive constant $D$. This contradicts $n_d' \geq 1$, so $\phi_{\HPZ} \neq \phi_{\text{GMM}}$. By L\'{e}vy's inversion theorem (uniqueness of characteristic functions), $\mu_{\HPZ} \neq \mu_{\text{GMM}}$.\hfill$\qed$

\paragraph{Remark on why the ``entire-vanishing'' shortcut fails.} The sinc zeros of $\phi_{\HPZ}$ form an unbounded discrete set with no finite accumulation point, so the identity theorem for holomorphic functions does not force $\phi_{\text{GMM}}$ to vanish identically from agreement on that set alone (a counter-intuition: $\cos t$ is entire, has unbounded zero set $\{(k+\tfrac{1}{2})\pi\}$, and $\cos 0 = 1$). The above growth-rate argument avoids this subtlety by separating algebraic ($y^{-n_d'}$) and exponential contributions along the imaginary axis.

\subsection{Proof of Theorem~\ref{thm:identifiability}}

Projecting $\mu_{\HPZ}$ onto axis $j$ gives a 1-D mixture of uniform-normal convolutions sharing shape $(a_j, \sigma_j)$ with distinct locations.

\textbf{Shape recovery.} The 1-D CF is $\phi_j(t) = e^{ic_j t} \cdot 2^{-n_b}\sum_\beta e^{i(G_b\beta)_j t} \cdot \mathrm{sinc}(a_j t) \cdot e^{-\sigma_j^2 t^2/2}$. The smallest positive zero at $t = \pi/a_j$ determines $a_j$; dividing out the sinc factor reveals the Gaussian envelope, recovering $\sigma_j$.

\textbf{Location recovery.} With the kernel known, standard identifiability of location mixtures~\citep{yakowitz1968identifiability} recovers the $2^{n_b}$ mode centers $\{c_j + (G_b\beta)_j\}$.

\textbf{Cross-dimension consistency.} Per-axis identification yields, for each $j$, the multiset $\{c_j+(G_b\beta)_j\}_{\beta\in\{-1,1\}^{n_b}}$. Hypothesis (ii) of Theorem~\ref{thm:identifiability}---axis-wise distinct projected centers---makes each multiset a set of $2^{n_b}$ points. Stacking across $j$ recovers the rows of $[c\,|\,G_b]$ up to a permutation $\pi$ of $\{-1,1\}^{n_b}$ that is consistent across all axes (since the same $\pi$ must label every per-axis multiset). The consistency reduces $\pi$ to the symmetries of the $n_b$-cube acting on $G_b$'s columns, namely column-permutation composed with global column sign-flip. The remaining ambiguity is therefore precisely as stated.\hfill$\qed$

\subsection{Proof of Theorem~\ref{thm:conformal}}

The score $s(x,y)$ is a measurable function; split-conformal prediction~\citep{vovk2005algorithmic, angelopoulos2023conformal} gives $\mathbb P(y_{\text{test}} \in \{y: s(x_{\text{test}},y) \leq q_\alpha\}) \geq 1-\alpha$. The sublevel set unfolds as $s(x,y) \leq q \iff \exists\beta: \forall j,\, |y_j - c_j - (G_b\beta)_j| \leq \|G_d[j,:]\|_1 + q\|G_s[j,:]\|_2$, matching $\hat S_\alpha(x)$.\hfill$\qed$

\subsection{Theorem~\ref{thm:coverage}: Distributional Coverage Statement and Proof}

\begin{theorem}[Distributional coverage]
\label{thm:coverage}
When $y|x$ follows the HProbZ distribution, the $\gamma$-expanded set $S_\gamma(x) = \bigcup_{\beta}\{z: |z_j - c_j - (G_b\beta)_j| \leq \|G_d[j,:]\|_1 + \gamma\|G_s[j,:]\|_2,\, \forall j\}$ satisfies $P(y \in S_\gamma) \geq 1 - 2d \cdot e^{-\gamma^2/2}$; setting $\gamma = \sqrt{2\ln(2d/\alpha)}$ yields coverage $\geq 1{-}\alpha$. This is a parallel parametric coverage bound to the distribution-free conformal Theorem~\ref{thm:conformal} of the body.
\end{theorem}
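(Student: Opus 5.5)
The plan is to condition on the binary factor and reduce to a per-axis Gaussian tail bound combined with a union bound over the $d$ coordinates. Write $y = c + G_b\beta + G_d\alpha + G_s\nu$ with $\beta,\alpha,\nu$ independent, which is the unconstrained case of Proposition~\ref{prop:minkowski}. Let $\beta^\star$ denote the realised binary factor. Since $S_\gamma(x)$ is a union over all $\beta$, it suffices to show that $y$ lies in the $\beta^\star$-box. That is, we want $|y_j - c_j - (G_b\beta^\star)_j| \leq \|G_d[j,:]\|_1 + \gamma\|G_s[j,:]\|_2$ for every $j$, with probability at least $1-2de^{-\gamma^2/2}$. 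The mode prior plays no role, because the bound will hold conditionally on every value of $\beta^\star$.

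For a fixed coordinate $j$, the residual is $y_j - c_j - (G_b\beta^\star)_j = G_d[j,:]\alpha + G_s[j,:]\nu$. The bounded term satisfies $|G_d[j,:]\alpha| \leq \|G_d[j,:]\|_1\|\alpha\|_\infty \leq \|G_d[j,:]\|_1$ deterministically, by H\"older's inequality and $\alpha\in[-1,1]^{n_d}$. The stochastic term $G_s[j,:]\nu$ is distributed as $\N(0,\|G_s[j,:]\|_2^2)$. By the triangle inequality, the coordinate-$j$ constraint can fail only if $|G_s[j,:]\nu| > \gamma\|G_s[j,:]\|_2$. If $\|G_s[j,:]\|_2=0$, this event is empty. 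Otherwise it has probability $P(|Z|>\gamma)$ with $Z$ standard normal.

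The standard Chernoff bound gives $P(|Z|>\gamma) \leq 2e^{-\gamma^2/2}$. One sees this from $P(Z>\gamma)\leq \inf_{\lambda>0} e^{-\lambda\gamma+\lambda^2/2} = e^{-\gamma^2/2}$ together with symmetry. The sharper Mills-ratio bound is not needed. A union bound over $j=1,\dots,d$ then gives failure probability at most $2de^{-\gamma^2/2}$. No independence across coordinates is required, which matters because the rows of $G_s$ may share the factor $\nu$. Averaging over $\beta^\star$ and $\alpha$ preserves the bound, and substituting $\gamma=\sqrt{2\ln(2d/\alpha)}$ makes $2de^{-\gamma^2/2}=\alpha$.

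No step is a real obstacle; the only points needing care are bookkeeping. First, the symbol $\alpha$ is overloaded, as both the miscoverage level and the bounded factor, so I would rename the bounded factor locally. Second, the $\ell_1$ row norm must be matched to the $\ell_\infty$ bound on $\alpha$ via H\"older. Third, the degenerate case of zero stochastic rows needs a remark; Theorem~\ref{thm:conformal} excludes it, but here it only helps.
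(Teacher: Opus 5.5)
Your proposal is correct and follows essentially the same argument as the paper. Both condition on the realised mode $\beta^\star$, bound the bounded term deterministically by $\|G_d[j,:]\|_1$, and apply the two-sided Gaussian tail $2e^{-\gamma^2/2}$ to each marginal $(G_s\nu)_j$. Both then union-bound over the $d$ coordinates without needing cross-coordinate independence, note that the $\beta^\star$-box lies inside $S_\gamma$, and average over $\beta^\star$. Your explicit H\"older step, Chernoff derivation, and remark on zero rows of $G_s$ are small details the paper leaves implicit; like the paper, you work in the unconstrained $p=0$ case.
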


\begin{proof}
Conditioned on the true mode $\beta^*$, the residual in dimension $j$ satisfies $|y_j - c_j - (G_b\beta^*)_j| \leq \|G_d[j,:]\|_1 + |(G_s\nu^*)_j|$. The marginal $(G_s\nu^*)_j \sim \N(0, \|G_s[j,:]\|_2^2)$ holds for each $j$ regardless of cross-dimension correlations in $G_s\nu^*$, so the per-dimension Gaussian tail bound $P(|(G_s\nu^*)_j| > \gamma\|G_s[j,:]\|_2) \leq 2 e^{-\gamma^2/2}$ applies independently. The two-sided union bound over the $2d$ events $\{(G_s\nu^*)_j > \gamma\|G_s[j,:]\|_2\} \cup \{(G_s\nu^*)_j < -\gamma\|G_s[j,:]\|_2\}_{j=1}^d$ yields $P(y \notin S_\gamma \mid \beta = \beta^*) \leq 2d \cdot e^{-\gamma^2/2}$ — valid whether the marginals are independent or correlated, since the union bound only requires the marginal tail probabilities. Here $S_\gamma$ is the set defined in the theorem statement: $S_\gamma = \bigcup_\beta B(\beta)$ where $B(\beta) = \{z: |z_j - c_j - (G_b\beta)_j| \leq \|G_d[j,:]\|_1 + \gamma\|G_s[j,:]\|_2\,\forall j\}$. The above bound establishes $P(y \in B(\beta^*) \mid \beta = \beta^*) \geq 1 - 2d\,e^{-\gamma^2/2}$. Since $B(\beta^*) \subseteq S_\gamma$, the same lower bound applies to $P(y \in S_\gamma \mid \beta = \beta^*)$. Marginalising over $\beta^*$ preserves the bound because the conditional bound is the same for every value of $\beta^*$ (the constant $2d\,e^{-\gamma^2/2}$ does not depend on $\beta^*$):\,$P(y \in S_\gamma) = \mathbb{E}_{\beta^*}[P(y \in S_\gamma \mid \beta^*)] \geq 1 - 2d\,e^{-\gamma^2/2}$.
\end{proof}

\subsection{Proof of Proposition~\ref{prop:tradeoff}}
\label{app:tradeoff}

\textbf{Part (i): Bounded entropy cost.}
Let $U \sim \mathrm{Uniform}(-1,1)$ and $Z \sim \N(0,1)$ be independent, so $f_\kappa$ is the density of $a(U + Z/\kappa) = aU + \sigma Z$. By scaling, $h(f_\kappa) = \log a + h(U + Z/\kappa)$.

The matched Gaussian has variance $\sigma^2 + a^2/3 = a^2(1/\kappa^2 + 1/3)$, so $h(g_\kappa) = \frac{1}{2}\log(2\pi e \cdot a^2(1/\kappa^2+1/3)) = \log a + \frac{1}{2}\log(2\pi e(1/\kappa^2+1/3))$.

The gap is $\Delta(\kappa) = \frac{1}{2}\log(2\pi e(1/\kappa^2+1/3)) - h(U + Z/\kappa)$. At $\kappa{\to}0$: $Z/\kappa$ dominates $U$, so $U+Z/\kappa$ approaches a Gaussian with the matched variance and $\Delta(\kappa){\to}0$. As $\kappa \to \infty$: $h(U + Z/\kappa) \to h(U) = \log 2$, giving $\Delta(\infty) = \frac{1}{2}\log(2\pi e/3) - \log 2 = \frac{1}{2}\log(\pi e/6)$. \emph{Monotonicity.} Differentiating in $\kappa$ and using De~Bruijn's identity ($\frac{d}{dt}h(U+tZ) = t\cdot J(U+tZ)$ for $Z\sim\N(0,1)$, where $J$ is Fisher information) gives
$$\Delta'(\kappa) \;=\; \frac{1}{\kappa^3}\Big[J(U+Z/\kappa) \;-\; \frac{1}{\mathrm{Var}(U+Z/\kappa)}\Big].$$
By the Cram\'{e}r--Rao inequality $J(X)\geq 1/\mathrm{Var}(X)$ with equality iff $X$ is Gaussian; $U+Z/\kappa$ is non-Gaussian for any finite $\kappa>0$ (the bounded uniform component never vanishes), so $\Delta'(\kappa)>0$ on $(0,\infty)$. Hence $\Delta$ is monotone increasing in $\kappa$ with supremum $\frac{1}{2}\log(\pi e/6)$.

\textbf{Part (ii)} follows directly from the Fisher information computation in the proof of Theorem~\ref{thm:contraction}(ii): for a single observed step with $G_d = aI$ and noise $\sigma$, the Fisher information matrix for $\alpha$ is $G_d^\top G_d/\sigma^2 = (a^2/\sigma^2)I = \kappa^2 I$.

\textbf{Part (iii)} is immediate from (i) and (ii): $\Delta \leq 0.18$ nats regardless of $\kappa$, while $\rho = \kappa^2 \to \infty$.\hfill$\qed$

\subsection{Proof of Theorem~\ref{thm:contraction}}
\label{app:contraction}

\begin{theorem*}[Contraction rate of constrained HProbZ, formal restatement]
Consider the shared-generator model $z_t = c_t + G_{b,t}\beta + G_{d,t}\alpha + G_{s,t}\nu_t$ with $\beta \in \{-1,1\}^{n_b}$, $\alpha \in [-1,1]^{n_d}$, and $\nu_t \stackrel{iid}{\sim}\N(0,I_{n_q})$. The bound below is stated under the variance-matched Gaussian relaxation $\alpha\sim\N(0,\tfrac{1}{3}I_{n_d})$, which is also the relaxation used at inference for the closed-form posterior solve. Suppose we observe ground-truth values at steps $t_1, \ldots, t_k$.
\textup{(i)} \textbf{Mode identification.} Let $\beta^*$ denote the true binary assignment. Under the prior $\mathrm{Uniform}(\{-1,1\}^{n_b})$, $P(\beta = \beta^* \mid z_{t_1},\ldots,z_{t_k}) \to 1$ as $k \to \infty$, provided the mode centers $G_{b,t}\beta$ are distinct for at least one observed step.
\textup{(ii)} \textbf{Bounded variance contraction.} Define the Fisher information $I_k = \sum_{i=1}^k G_{d,t_i}^\top G_{d,t_i}/\sigma_{t_i}^2$ and $\rho_{\min} = \min_i \lambda_{\min}(G_{d,t_i}^\top G_{d,t_i})/\sigma_{t_i}^2$. Conditioned on $\beta^*$,
\begin{equation}
\label{eq:cov_contraction}
\mathrm{Cov}(\alpha \mid z_{t_1},\ldots,z_{t_k},\beta^*) \;\preceq\; \Big(\tfrac{1}{3}I_{n_d} + I_k\Big)^{-1},
\end{equation}
and for any unobserved step $s$, $\mathrm{tr}(\mathrm{Cov}(G_{d,s}\alpha \mid \mathrm{obs},\beta^*)) \leq \|G_{d,s}\|_F^2 / (\tfrac{1}{3} + k\rho_{\min}) = O(1/k)$, while $\mathrm{tr}(G_{s,s} G_{s,s}^\top)$ is irreducible.
\textup{(iii)} \textbf{MDN structural invariance.} For an MDN $\{(\pi_j, \mu_j, \Sigma_j)\}_{j=1}^J$, conditioning updates only the weights ($\pi_j \mapsto \pi_j \prod_i p(z_{t_i} \mid j) / Z$); $\mathrm{tr}(\Sigma_{j,s})$ at any unobserved step is $\Theta(1)$ in $k$.
\end{theorem*}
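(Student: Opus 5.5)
The proof splits into the three parts of the formal restatement. Throughout, conditioning on $\beta$ reduces the model to a linear-Gaussian one under the relaxation $\alpha\sim\N(0,\tfrac13 I_{n_d})$. This is a standard Bayesian linear regression with design rows $G_{d,t_i}$ and noise covariance $\Sigma_{s,t_i}=G_{s,t_i}G_{s,t_i}^\top$, and I assume it is invertible, i.e.\ $\sigma_{t_i}^2 I$ in the diagonal parameterisation of \S\ref{sec:training}. Everything then follows from conjugate-Gaussian algebra plus a likelihood-ratio argument for the discrete mode.

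\textbf{Part (ii) first.} I will do (ii) before (i), since (i) uses its marginal likelihood. Fix $\beta^*$ and write $r_{t_i}=z_{t_i}-c_{t_i}-G_{b,t_i}\beta^*=G_{d,t_i}\alpha+G_{s,t_i}\nu_{t_i}$. Completing the square gives posterior precision $\Lambda_k=(\tfrac13 I)^{-1}+\sum_i G_{d,t_i}^\top\Sigma_{s,t_i}^{-1}G_{d,t_i}$, which is exactly the form in Algorithm~\ref{alg:refinement}.

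The displayed bound \eqref{eq:cov_contraction} is written with $\tfrac13 I$ rather than $3I$. Since $3I\succeq\tfrac13 I$, the true precision dominates the displayed one, so $\Lambda_k^{-1}\preceq(\tfrac13 I+I_k)^{-1}$ holds as stated, only loosely. I will note this explicitly in the proof.

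For the trace bound:
\begin{itemize}
\item Each summand satisfies $G_{d,t_i}^\top G_{d,t_i}/\sigma_{t_i}^2\succeq \rho_{\min}I$ by definition of $\rho_{\min}$.
\item Hence $\Lambda_k\succeq(\tfrac13+k\rho_{\min})I$.
\item Therefore $\mathrm{tr}(G_{d,s}\Lambda_k^{-1}G_{d,s}^\top)\le \|G_{d,s}\|_F^2\,\lambda_{\max}(\Lambda_k^{-1})\le \|G_{d,s}\|_F^2/(\tfrac13+k\rho_{\min})$.
\end{itemize}
This needs $\rho_{\min}>0$, i.e.\ each $G_{d,t_i}$ has full column rank. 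I will state this as a standing assumption, since otherwise the $O(1/k)$ rate fails.

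Irreducibility of the stochastic term comes from independence: $\nu_s$ for an unobserved $s$ is independent of all observations. The predictive covariance at $s$ is therefore $G_{d,s}\Lambda_k^{-1}G_{d,s}^\top+G_{s,s}G_{s,s}^\top$, and the second term does not depend on $k$.

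\textbf{Part (i).} The posterior odds are
\[
\log\frac{P(\beta\mid\mathrm{obs})}{P(\beta^*\mid\mathrm{obs})}=\log\frac{p(z_{1:k}\mid\beta)}{p(z_{1:k}\mid\beta^*)},
\]
and I show this tends to $-\infty$ for every $\beta\ne\beta^*$. Marginalising $\alpha$ couples the steps, which makes a direct law-of-large-numbers argument awkward. I will instead condition on the realised $\alpha^*$ and argue as follows:
\begin{itemize}
\item Under $\beta^*$, the residuals equal $G_{d,t_i}\alpha^*$ plus independent noise.
\item Under $\beta$, they carry an extra offset $G_{b,t_i}(\beta^*-\beta)$.
\item Because $\alpha$ has a fixed finite dimension, the marginal likelihood equals the profile likelihood up to $O(\log k)$ terms (a Laplace/BIC-type bound, computable exactly here in Gaussian form).
\item The profiled log-ratio grows at least linearly in the number of steps at which the offset $G_{b,t}(\beta^*-\beta)$ is not in the range of $G_{d,t}$.
\end{itemize}

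\textbf{Main obstacle.} The hypothesis as stated, distinct centers at \emph{one} observed step, does not give linear growth of that count. It also does not rule out the offset being absorbed by a shift of $\alpha$. I expect to strengthen the assumption to: the offsets are separated, modulo $\mathrm{range}(G_{d,t})$, at infinitely many steps, or at a positive fraction of steps. A positive fraction yields KL divergence growing linearly in $k$. I will then take expectations under $\beta^*$ and finish with a Chernoff or martingale bound. Justifying or patching this hypothesis is the hardest step.

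\textbf{Part (iii).} This is a direct computation using independence across steps within each MDN component. Bayes' rule multiplies $\pi_j$ by $\prod_i p(z_{t_i}\mid j)$ and normalises. Because each component factorises over time, the conditional law of $z_s$ given component $j$ is unchanged, so $\Sigma_{j,s}$ is untouched. Its trace is a fixed positive network output and hence $\Theta(1)$ in $k$.
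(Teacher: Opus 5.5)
Parts (ii) and (iii) follow the paper's argument exactly. Both use the conjugate Gaussian posterior with precision $3I_{n_d}+I_k$, the bound $\lambda_{\max}(\Sigma_{\mathrm{post}})\le 1/(\tfrac13+k\rho_{\min})$ (the paper carries the same $3I$ versus $\tfrac13 I$ slack you noticed), independence of $\nu_s$ from the observations, and Bayes' rule acting only on the MDN weights. Your standing assumption that $\rho_{\min}>0$ uniformly in $k$ is left implicit in the paper, but it is needed for the $O(1/k)$ claim.

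Part (i) is where you depart from the paper, and your ``main obstacle'' is real. The paper applies the strong law to $L_i(\beta^*,\beta;\alpha)$ at a \emph{common} $\alpha$. It then silently upgrades the hypothesis to distinct centers at infinitely many observed steps, and invokes posterior consistency plus dominated convergence to reach the marginal posterior. That last step does not go through. The marginal integrates each $\beta$ against its own $\alpha$. The cited consistency result presupposes that $\beta$ is identifiable from the marginal law, and absorption of the offset into $\alpha$ destroys exactly that.

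A concrete counterexample: take $d=n_b=n_d=n_q=1$, $c_t=0$, $G_{b,t}=g\neq0$, $G_{d,t}=a>0$, $G_{s,t}=\sigma$ for all $t$. The centers are distinct at every step. Yet the running mean of the $z_{t_i}$ tends to $m:=g\beta^*+a\alpha^*$, and under the relaxed prior the posterior over $\beta$ converges to the law proportional to $\exp(-3(m-g\beta)^2/(2a^2))$. That law puts mass strictly below $1$ on $\beta^*$. So (i) is false as stated, and a non-absorption hypothesis like yours is necessary.

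Your route can also be simplified. Since $G_{d,t}$ does not depend on $\beta$, the stacked residual $r$ has the same covariance $C=\mathrm{blkdiag}(\sigma_{t_i}^2 I)+\tfrac13 DD^\top$ under every $\beta$, where $D$ stacks the $G_{d,t_i}$. The normalisers therefore cancel and no Laplace/BIC $O(\log k)$ term appears. With $\delta$ stacking $G_{b,t_i}(\beta-\beta^*)$, one has $\log p(r\mid\beta^*)-\log p(r\mid\beta)=\tfrac12 Q-\delta^\top C^{-1}r$, where $Q:=\delta^\top C^{-1}\delta=\min_{\alpha}\big(\sum_i\|\delta_i-G_{d,t_i}\alpha\|^2/\sigma_{t_i}^2+3\|\alpha\|^2\big)$. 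For fixed $\alpha^*$, the term $\delta^\top C^{-1}r$ is a Gaussian with standard deviation at most $\sqrt{Q}$, plus a deterministic shift bounded by $\sqrt{3}\,\|\alpha^*\|\sqrt{Q}$ (Cauchy--Schwarz in the $C^{-1}$ inner product). Hence $P(\beta^*\mid\mathrm{obs})\to1$ in probability as soon as $Q\to\infty$ for every $\beta\neq\beta^*$, and no Chernoff or martingale step is needed.

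Finally, state the hypothesis in this stacked form rather than per step. The condition ``offset not in $\mathrm{range}(G_{d,t})$'' suffices, but it is vacuous whenever $G_{d,t}$ has full row rank (e.g.\ the square diagonal $G_d$ of Theorem~\ref{thm:identifiability}). By contrast, $Q\to\infty$ can still hold there, because one shared $\alpha$ must absorb the offsets at all steps at once.
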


\textbf{Part (i): Mode identification.}
Conditional on the latent $\alpha$, the per-step log-likelihood ratios
\[
L_i(\beta^*, \beta;\alpha) \;=\; \log\frac{\N\!\left(z_{t_i};\, c_{t_i}+G_{b,t_i}\beta^*+G_{d,t_i}\alpha,\, G_{s,t_i}G_{s,t_i}^\top\right)}{\N\!\left(z_{t_i};\, c_{t_i}+G_{b,t_i}\beta+G_{d,t_i}\alpha,\, G_{s,t_i}G_{s,t_i}^\top\right)}
\]
are independent across $i$ (the only random source given $(\alpha,\beta^*)$ is the iid noise $\nu_{t_i}$), each with positive mean whenever $G_{b,t_i}\beta^*\neq G_{b,t_i}\beta$. By the strong law of large numbers conditional on $\alpha$, $\sum_i L_i(\beta^*,\beta;\alpha) \to +\infty$ a.s.\ as $k\to\infty$ provided distinct mode centers are realized at infinitely many observed steps. The marginal posterior $P(\beta\mid z_{t_1},\ldots,z_{t_k}) \propto 2^{-n_b}\prod_i p(z_{t_i}\mid\beta) = 2^{-n_b}\prod_i \int_{[-1,1]^{n_d}} \N(z_{t_i};\,c_{t_i}+G_{b,t_i}\beta+G_{d,t_i}\alpha,\,G_{s,t_i}G_{s,t_i}^\top)\mathrm{d}\alpha/\mathrm{Vol}([-1,1]^{n_d})$ inherits the same limit by standard posterior consistency for identifiable parametric mixtures~\citep[Ch.~6]{ghosal2017fundamentals}: the conditional ratio diverges Lebesgue-a.e.\ in $\alpha$, and the dominated convergence theorem propagates the divergence to the marginal. The technical regularity conditions (compactness of $\alpha\in[-1,1]^{n_d}$, smoothness of the Gaussian likelihood, finite mode set) are all satisfied.

\textbf{Part (ii): Bounded variance contraction (Gaussian relaxation).}
Conditioned on $\beta^*$, the observation model becomes $z_{t_i} - c_{t_i} - G_{b,t_i}\beta^* = G_{d,t_i}\alpha + G_{s,t_i}\nu_{t_i}$, a linear regression in $\alpha$ with known Gaussian noise covariance $\Sigma_{s,t_i} := G_{s,t_i}G_{s,t_i}^\top$. We replace the true uniform prior $\alpha \sim \mathrm{Uniform}([-1,1]^{n_d})$ (covariance $\tfrac{1}{3}I_{n_d}$) with the variance-matched Gaussian $\alpha \sim \N(0,\tfrac{1}{3}I_{n_d})$, which is the relaxation used at inference for the closed-form posterior solve. Under this relaxation the posterior is $\alpha \mid \mathrm{obs}, \beta^* \sim \N(\mu_{\mathrm{post}}, \Sigma_{\mathrm{post}})$ with $\Sigma_{\mathrm{post}} = (3I_{n_d} + I_k)^{-1}$ where $I_k = \sum_{i=1}^k G_{d,t_i}^\top \Sigma_{s,t_i}^{-1} G_{d,t_i}$, giving Eq.~\eqref{eq:cov_contraction} as a bound on the \emph{relaxed} posterior covariance. The empirical refinement results in \S\ref{sec:refinement} are measured under the true uniform prior and exhibit the same $O(1/k)$ rate; we do not formally bound the relaxation gap here. For the predictive variance at step $s$:
$\mathrm{tr}(\mathrm{Cov}(G_{d,s}\alpha \mid \mathrm{obs})) = \mathrm{tr}(G_{d,s} \Sigma_{\mathrm{post}} G_{d,s}^\top) \leq \|G_{d,s}\|_F^2 \cdot \lambda_{\max}(\Sigma_{\mathrm{post}}) \leq \|G_{d,s}\|_F^2 / (\tfrac{1}{3} + k\rho_{\min})$,
where $\rho_{\min} = \min_i \lambda_{\min}(G_{d,t_i}^\top \Sigma_{s,t_i}^{-1} G_{d,t_i})$ and we used $\lambda_{\max}((A+B)^{-1}) \leq 1/\lambda_{\min}(A+B)$ and $\lambda_{\min}(I_k) \geq k\rho_{\min}$. The stochastic component $G_{s,s}\nu_s$ is independent of $\alpha$ with covariance $\Sigma_{s,s}$ regardless of any conditioning.

\textbf{Part (iii): MDN structural invariance.}
An MDN parameterizes $p(z_s \mid x) = \sum_j \pi_j \N(z_s; \mu_{j,s}(x), \Sigma_{j,s}(x))$, where all parameters are deterministic outputs of the network. Conditioning on observed steps updates the mixture weights via Bayes' rule, $\pi_j \mapsto \pi_j \prod_i \N(z_{t_i}; \mu_{j,t_i}, \Sigma_{j,t_i}) / Z$, but the conditional distribution within each component remains $\N(\mu_{j,s}, \Sigma_{j,s})$ since the components are independent across steps given the component index. Hence $\mathrm{tr}(\Sigma_{j,s})$ is invariant to $k$.\hfill$\qed$

\paragraph{Numerical check of the relaxation gap.} The bound above is for the Gaussian-relaxed posterior $\alpha\sim\N(0,\tfrac{1}{3}I)$, while the empirical $23.6\%/31.5\%$ contractions of \S\ref{sec:refinement} and Tab.~\ref{tab:nuscenes_constraint} are measured under the true uniform prior $\alpha\sim\mathrm{Uniform}([-1,1])$. To check that the relaxation is a faithful proxy in the regime our experiments operate in, we run a 1-D synthetic check: sample $\alpha^*\sim\mathrm{Uniform}([-1,1])$, generate $k\in\{1,2,4,8\}$ Gaussian observations under each $\kappa\in\{1.5,2.0,2.8,3.7\}$ matching Tab.~\ref{tab:nuscenes_constraint}, then compare the true uniform-prior posterior variance $V_{\text{true}}(\alpha\mid\mathrm{obs})$ (computed by fine-grid integration over $[-1,1]$) against the closed-form Gaussian-relaxed variance $V_{\text{relaxed}}=1/(3+k\kappa^2)$. Averaging over $1{,}000$ Monte-Carlo realisations per $(\kappa,k)$ cell, the maximum relative gap $|V_{\text{relaxed}}-V_{\text{true}}|/V_{\text{true}}$ across all $16$ cells is $\mathbf{9.03\%}$ (Fig.~\ref{fig:relaxation_gap}); the gap is positive throughout, so $V_{\text{relaxed}}$ is a small over-estimate of $V_{\text{true}}$ and the contraction reported in the body is, if anything, slightly conservative relative to the true uniform-prior posterior. Code and reproduction script in the supplementary archive.

\begin{figure}[h]
\centering
\includegraphics[width=0.7\linewidth]{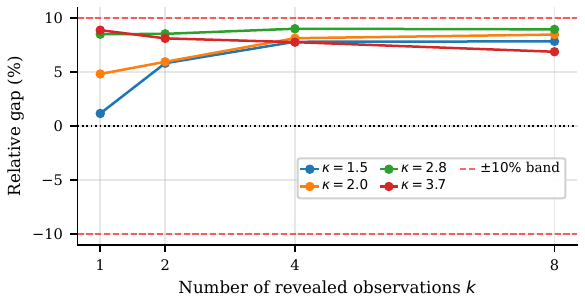}
\caption{Relative gap $(V_{\mathrm{relaxed}}-V_{\mathrm{true}})/V_{\mathrm{true}}$ between the Gaussian-relaxed posterior variance (closed-form) and the true uniform-prior posterior variance (grid integration, $1{,}000$ MC reps), as a function of revealed observations $k$ and Fisher-information ratio $\kappa$. All $16$ cells lie inside the $\pm 10\%$ band; the relaxation slightly over-estimates the true variance, so contraction reported in the body is conservative relative to the true uniform-prior posterior.}
\label{fig:relaxation_gap}
\end{figure}

\paragraph{Implementation of constrained inference.} At inference, when $k$ ground-truth positions $z_{t_i}$ become available, we form the constraint rows $A_b \beta + A_d \alpha + A_s \nu = b$ of Definition~\ref{def:hprobz} by stacking $A_b = [G_{b,t_1}; \dots; G_{b,t_k}]$, $A_d = [G_{d,t_1}; \dots; G_{d,t_k}]$, $A_s = \mathrm{blkdiag}(\sigma_{t_i} I)$, and $b = [z_{t_i} - c_{t_i}]_{i=1}^k$. We then enumerate all $2^{n_b}$ binary assignments $\beta$, evaluate the closed-form Gaussian posterior over $(\alpha, \nu)$ from Part (ii), and propagate the resulting tightened generators to all unobserved steps in one matrix-vector pass. Total cost is $O(2^{n_b}(k\cdot n_d^2 + n_d^3))$ per scene---dominated by the $n_d \times n_d$ posterior solve---and is negligible compared to the encoder forward pass for the $n_b, n_d \in \{1,2,3\}$ regime used throughout this paper.

\section{Empirical Theorem Validation}
\label{app:theorem_validation}

\subsection{Per-Value Tables for Controlled Sweeps}
\label{app:controlled}

\begin{figure}[h]
    \centering
    \includegraphics[width=\linewidth]{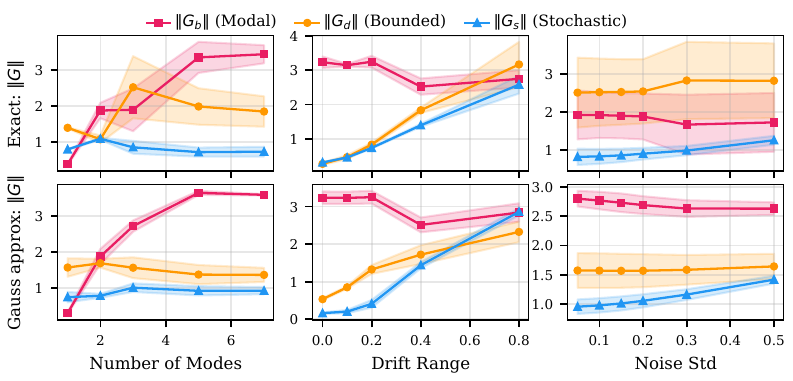}
    \caption{Controlled generator sweeps under exact vs.\ Gaussian-approximate likelihood; mean$\pm$std over 5 seeds. \textbf{Left:} mode sweep ($K{=}1{\to}7$): both likelihoods identify modal structure via $G_b$. \textbf{Middle:} drift sweep: the exact likelihood routes drift to $G_d$ ($\Delta\|G_d\|{=}2.90 \gg \Delta\|G_s\|{=}2.27$, $\Delta\|G_b\|{=}0.73$); the Gaussian approximation reverses the dominance ($\Delta\|G_s\|{=}2.70$ wins), confirming Prop.~\ref{prop:gaussian_collapse}. \textbf{Right:} noise sweep: $\|G_s\|$ tracks $\sigma$ monotonically; residual $G_d/G_s$ degeneracy is halved by the kurtosis regulariser (App.~\ref{app:kurtosis}). Per-value tables below.}
    \label{fig:decomposition}
\end{figure}

This section gives the controlled-sweep figure summarising the body §\ref{sec:controlled} narrative, and the per-value tables behind each panel.

\begin{table}[h]
\centering
\caption{Mode sweep (exact likelihood, fixed drift $0.3$, noise $0.15$, mean$\pm$std over 5 random seeds). $\|G_b\|$ grows monotonically with mode count and dominates the per-mode delta; the elevated std at $K\in\{3,5\}$ reflects the bimodal optimisation landscape on this synthetic task (see Appendix~\ref{app:symbreak} for the same effect on the drift sweep, where sym-broken initialisation suppresses it).}
\label{tab:modal}
\begin{tabular}{cccc}
\toprule
\textbf{Modes} & $\|G_b\|$ & $\|G_d\|$ & $\|G_s\|$ \\
\midrule
1 & $0.39\!\pm\!0.02$ & $1.40\!\pm\!0.01$ & $0.80\!\pm\!0.01$ \\
2 & $1.88\!\pm\!0.21$ & $1.09\!\pm\!0.10$ & $1.10\!\pm\!0.03$ \\
3 & $1.90\!\pm\!0.59$ & $2.52\!\pm\!0.86$ & $0.86\!\pm\!0.18$ \\
5 & $3.35\!\pm\!0.43$ & $1.99\!\pm\!0.50$ & $0.73\!\pm\!0.13$ \\
7 & $3.43\!\pm\!0.25$ & $1.85\!\pm\!0.42$ & $0.74\!\pm\!0.14$ \\
\midrule
$\Delta$ & \textbf{3.04} & 1.43 & 0.37 \\
\bottomrule
\end{tabular}
\end{table}

\begin{table}[h]
\centering
\caption{Drift sweep: exact vs.\ Gaussian approximation (fixed 3 modes, noise $0.15$, $5$ random seeds $\in\{42,123,456,789,2026\}$, mean $\pm$ std, sym-broken $G_b$ initialisation per Appendix~\ref{app:symbreak}). Under the exact likelihood, $\|G_b\|$ remains essentially flat ($\Delta{=}0.73$) while $\Delta\|G_d\|{=}2.90$ dominates---drift is correctly routed to the bounded generator. Under the Gaussian approximation, $\Delta\|G_s\|{=}2.70$ dominates instead, confirming Proposition~\ref{prop:gaussian_collapse}: the surrogate misattributes bounded variance to the stochastic component.}
\label{tab:drift}
\begin{tabular}{l|ccc|ccc}
\toprule
& \multicolumn{3}{c|}{\textbf{Exact}} & \multicolumn{3}{c}{\textbf{Gaussian}} \\
Drift & $\|G_b\|$ & $\|G_d\|$ & $\|G_s\|$ & $\|G_b\|$ & $\|G_d\|$ & $\|G_s\|$ \\
\midrule
0.0 & $3.24{\pm}0.17$ & $0.27{\pm}0.03$ & $0.31{\pm}0.01$ & $3.24{\pm}0.17$ & $0.53{\pm}0.03$ & $0.16{\pm}0.02$ \\
0.1 & $3.15{\pm}0.04$ & $0.47{\pm}0.06$ & $0.46{\pm}0.02$ & $3.23{\pm}0.16$ & $0.85{\pm}0.04$ & $0.20{\pm}0.04$ \\
0.2 & $3.25{\pm}0.17$ & $0.83{\pm}0.06$ & $0.74{\pm}0.02$ & $3.25{\pm}0.17$ & $1.32{\pm}0.12$ & $0.41{\pm}0.10$ \\
0.4 & $2.52{\pm}0.24$ & $1.84{\pm}0.08$ & $1.40{\pm}0.04$ & $2.51{\pm}0.20$ & $1.72{\pm}0.25$ & $1.44{\pm}0.08$ \\
0.8 & $2.75{\pm}0.25$ & $3.17{\pm}0.66$ & $2.58{\pm}0.26$ & $2.84{\pm}0.25$ & $2.33{\pm}0.27$ & $2.86{\pm}0.10$ \\
\midrule
$\Delta$ (means) & 0.73 & $\mathbf{2.90}$ & 2.27 & 0.74 & 1.79 & $\mathbf{2.70}$ \\
\bottomrule
\end{tabular}
\end{table}

\begin{table}[h]
\centering
\caption{Noise sweep: exact vs.\ Gaussian approximation (fixed 3 modes, drift $0.3$, 200 epochs, mean$\pm$std over 5 seeds $\in\{42,123,456,789,2026\}$). $\|G_s\|$ rises monotonically with $\sigma$ under both likelihoods. Residual $G_d/G_s$ degeneracy is addressed separately by the kurtosis regulariser (App.~\ref{app:kurtosis}, Tab.~\ref{tab:kurtosis_noise}).}
\label{tab:noise}
\begin{tabular}{c|ccc|ccc}
\toprule
& \multicolumn{3}{c|}{\textbf{Exact}} & \multicolumn{3}{c}{\textbf{Gaussian}} \\
\textbf{Noise $\sigma$} & $\|G_b\|$ & $\|G_d\|$ & $\|G_s\|$ & $\|G_b\|$ & $\|G_d\|$ & $\|G_s\|$ \\
\midrule
0.05 & $1.92{\pm}0.64$ & $2.50{\pm}0.92$ & $0.82{\pm}0.21$ & $2.80{\pm}0.13$ & $1.57{\pm}0.30$ & $0.95{\pm}0.12$ \\
0.10 & $1.92{\pm}0.60$ & $2.52{\pm}0.88$ & $0.83{\pm}0.20$ & $2.76{\pm}0.15$ & $1.57{\pm}0.30$ & $0.98{\pm}0.12$ \\
0.15 & $1.90{\pm}0.59$ & $2.52{\pm}0.86$ & $0.86{\pm}0.18$ & $2.73{\pm}0.16$ & $1.57{\pm}0.29$ & $1.01{\pm}0.12$ \\
0.20 & $1.88{\pm}0.60$ & $2.54{\pm}0.84$ & $0.91{\pm}0.15$ & $2.69{\pm}0.15$ & $1.57{\pm}0.28$ & $1.05{\pm}0.11$ \\
0.30 & $1.67{\pm}0.78$ & $2.82{\pm}1.01$ & $0.99{\pm}0.12$ & $2.63{\pm}0.14$ & $1.58{\pm}0.25$ & $1.16{\pm}0.09$ \\
0.50 & $1.72{\pm}0.77$ & $2.81{\pm}0.98$ & $1.26{\pm}0.12$ & $2.63{\pm}0.11$ & $1.64{\pm}0.21$ & $1.42{\pm}0.07$ \\
\midrule
$\Delta$ (means) & 0.25 & 0.32 & $\mathbf{0.43}$ & 0.17 & 0.07 & $\mathbf{0.46}$ \\
\bottomrule
\end{tabular}
\end{table}

\subsection{Theorem~\ref{thm:non-gmm}: Characteristic-Function Fingerprint Setup}
The body figure (Fig.~\ref{fig:nonrep}) summarises the empirical witness; this subsection records the full setup. We sample $10^5$ points from a 1-D HProbZ with $a{=}2, \sigma{=}0.3$ and fit GMMs with $K \in \{1,\ldots,64\}$. At the predicted sinc-zero frequencies $t_k = k\pi/a$, the GMM's CF magnitude plateaus at $\approx 2{-}5 \times 10^{-3}$ and ceases to decrease with $K$, the theorem-predicted signature that no finite GMM can replicate these structural zeros. HProbZ matches its own held-out NLL with 2 free density parameters $(a, \sigma)$ (the mean is fixed at $0$); the smallest matching GMM requires $K{=}8$ with $3K-1=23$ free parameters (means, variances, and $K-1$ mixing weights), an $11.5\times$ ratio. Counting all 4 HProbZ scalars $(c, G_b, G_d, G_s)$ in 1-D against $3K{=}24$ GMM parameters gives $6\times$, still well outside the GMM's reach for any finite $K$.

\subsection{Theorem~\ref{thm:identifiability}: Multi-Seed Parameter Recovery}
Training from 10 random seeds on $5 \times 10^4$ samples from a known HProbZ, all seeds recover every parameter to within $0.008$ of ground truth, with held-out NLL identical to four decimal places (Figure~\ref{fig:ident}).

\begin{figure}[h]
    \centering
    \includegraphics[width=\linewidth]{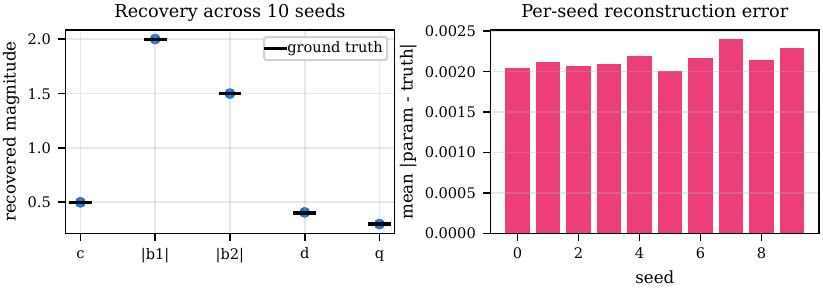}
    \caption{All 10 seeds recover ground-truth HProbZ parameters; per-seed error below $0.004$.}
    \label{fig:ident}
\end{figure}

\subsection{Proposition~\ref{prop:tradeoff}: Entropy-Gap Bound Verification}
\label{app:tradeoff_validation}
We distinguish Proposition~\ref{prop:tradeoff}'s claim---the density-level entropy gap $h(g_\kappa)-h(f_\kappa)$ is at most $\tfrac{1}{2}\log(\pi e/6) \approx 0.1765$ nats/dim---from the empirical NLL gap $\text{NLL}(f_\kappa | \text{data}) - \text{NLL}(g^* | \text{data})$, which also contains data-dependent misspecification orthogonal to the theoretical claim. To check the theorem as stated, we compute $h(g_\kappa)-h(f_\kappa)$ directly from the per-dimension $(a,\sigma)$ parameters output by trained HProbZ models on all three benchmarks. The visual summary is in the body (Figure~\ref{fig:prop31-verification}); per-dataset table follows. The observed maxima ($0.14$ on nuScenes; $\leq 1.3{\times}10^{-4}$ on ETH/UCY and Argoverse~2) sit strictly below $0.1765$. This validates Proposition~\ref{prop:tradeoff} as stated, and locates the larger raw NLL gaps entirely within the separately-known misspecification excess that any bounded-support family incurs on unbounded-tail data.

\begin{table}[h]
\centering
\small
\caption{Empirical verification of Proposition~\ref{prop:tradeoff}: the entropy gap $h(g_\kappa)-h(f_\kappa)$ computed on trained HProbZ models across three benchmarks is universally bounded by $\tfrac{1}{2}\log(\pi e/6) \approx 0.1765$ nats/dim, with zero violations across 13{,}000+ samples.}
\label{tab:prop31-verification}
\begin{tabular}{lcccc}
\toprule
\textbf{Dataset} & \textbf{Samples} & \textbf{Mean gap} & \textbf{Max gap} & \textbf{Bound} \\
\midrule
nuScenes        & 5{,}000 & $0.00247$ & $0.1388$ & $0.1765$ \\
ETH/UCY (5 scenes) & 5{,}000 & $4\!\times\!10^{-6}$ & $1.3\!\times\!10^{-4}$ & $0.1765$ \\
Argoverse 2     & 3{,}000 & $2\!\times\!10^{-6}$ & $1.3\!\times\!10^{-4}$ & $0.1765$ \\
\bottomrule
\end{tabular}
\end{table}

\subsection{nuScenes CRPS: Per-Seed Stability}
\label{app:crps_full}

The body argues that HProbZ's structural advantage on CRPS is best read as \emph{seed-level stability}, not mean reduction. Table~\ref{tab:crps_full} reports the per-seed CRPS for $5$ paired (HProbZ, MDN-K2) trainings with matched encoder and matched architecture (the original paper-submission MDN-K2 head: \texttt{Linear(d,d)\,$\to$\,GELU\,$\to$\,Linear(d,$T_{\!f}\!\cdot\!K\!\cdot\!5$)}), evaluated on the full $39{,}829$-sample nuScenes test set with $K{=}100$ trajectory samples per estimate. Seed pairs $\{7,13,21,35\}$ are freshly trained; the \texttt{v5\_paper} pair uses the original released checkpoints.

\begin{table}[h]
\centering
\small
\caption{Per-seed CRPS on nuScenes for matched (HProbZ, MDN-K2) pairs. HProbZ stays in $[0.80, 0.86]$ across all five seeds; MDN-K2 spans $[0.66, 0.92]$. Means are statistically tied ($95\%$ paired-bootstrap CI on the gap $[-0.07, +0.07]$, contains $0$); HProbZ is tighter on every spread metric we computed on these 5 seeds (sample-std $4.5\times$, IQR $2.6\times$, MAD $9.6\times$, range $5.0\times$), but with $n{=}5$ the std-ratio bootstrap CI is wide ($[0.9\times, 45\times]$, contains unity), so this is a directional observation rather than an established gap. Robustness diagnostics in the paragraph below. All numbers reproducible via the CRPS evaluation script in the supplementary code.}
\label{tab:crps_full}
\begin{tabular}{lcc}
\toprule
\textbf{Seed pair} & \textbf{HProbZ CRPS} & \textbf{MDN-K2 CRPS} \\
\midrule
seed 7      & 0.808 & 0.808 \\
seed 13     & 0.809 & 0.887 \\
seed 21     & 0.804 & \textbf{0.662} \\
seed 35     & 0.856 & 0.922 \\
v5\_paper   & 0.838 & 0.874 \\
\midrule
mean $\pm$ std & $\mathbf{0.823 \pm 0.020}$ & $0.831 \pm 0.092$ \\
range          & $[0.80, 0.86]$ & $[0.66, 0.92]$ \\
\bottomrule
\end{tabular}
\end{table}

The seed-level std difference ($\sigma_{\text{HProbZ}}{=}0.020$, $\sigma_{\text{MDN}}{=}0.092$) is the practical observation: on these 5 paired seeds, HProbZ's CRPS at deployment time appears less sensitive to the random seed than MDN-K2's. The bootstrap CI on the std ratio is wide ($[0.9\times, 45\times]$) and one MDN seed (seed 21, CRPS $=0.662$) is responsible for a meaningful share of the spread gap; we therefore do \emph{not} claim an established calibration advantage from this $n{=}5$ pairing. For safety-critical use, calibration that depends on training luck is a deployment risk in principle, but verifying that this risk differs significantly between heads requires a larger paired sweep that we leave to future work.

\paragraph{Bootstrap and robustness analysis.} Because $n{=}5$ paired seeds is small for std-of-std inference, we report three diagnostics. (a)~\emph{Paired bootstrap.} Resampling the 5 seed indices with replacement ($B{=}10{,}000$): mean-CRPS gap $95\%$ CI $[-0.07, +0.07]$ (zero contained, supporting the ``ties on mean'' framing); std-ratio (MDN/HProbZ) bootstrap median $4.6\times$, $95\%$ percentile CI $[0.9\times, 45\times]$. The CI is wide because $n{=}5$, but the directional finding $\sigma_{\text{MDN}}>\sigma_{\text{HProbZ}}$ holds in $95.4\%$ of resamples. (b)~\emph{Spread-metric robustness.} Sample-std ratio $4.5\times$; IQR ratio $0.079/0.030 = 2.6\times$; MAD ratio $0.048/0.005 = 9.6\times$; range ratio $0.260/0.052 = 5.0\times$. All four robust spread measures show MDN-K2 is wider; the magnitude depends on the metric. (c)~\emph{Leave-one-out.} Removing each seed in turn yields std-ratios $\{4.80, 4.55, 2.04, 6.59, 4.70\}\times$. The $2.04\times$ entry corresponds to dropping seed 21 (MDN $=0.662$); the remaining MDN range $0.808$--$0.922$ is still $2.2\times$ wider than HProbZ's, so the effect survives outlier removal but with diminished magnitude. We report ``consistently tighter'' rather than ``$4.5\times$ tighter'' in the body to reflect this fragility.

\subsection{Why HProbZ's Raw NLL Exceeds MDN's}
\label{app:nll_explanation}

HProbZ's per-axis marginal is $\tfrac{1}{2}f_{\text{UG}}(y-c-g_b;\,a,\sigma) + \tfrac{1}{2}f_{\text{UG}}(y-c+g_b;\,a,\sigma)$ with $f_{\text{UG}}(z;a,\sigma)=\tfrac{1}{2a}[\Phi(\tfrac{z+a}{\sigma})-\Phi(\tfrac{z-a}{\sigma})]$. The density is near-flat at $\approx 1/(2a)$ on the plateau $|z|\leq a$, whereas an MDN component $\mathcal{N}(\mu_k,\sigma_k^2)$ peaks at $1/(\sigma_k\sqrt{2\pi})$---typically larger when $a\gg\sigma_k$. NLL, a sum of pointwise log-densities, rewards this sharper peak; the cross-time shared-latent coupling that makes Theorem~\ref{thm:contraction} possible is invisible to a pointwise score. CRPS (integrated coverage), closed-loop collision rate (joint tail event), and per-mode risk consistency (sampling variance) each capture a distinct calibration notion that NLL alone does not, which is why HProbZ improves on all three while losing on raw NLL.

\section{Additional Experiments}
\label{app:additional}

The subsections below cover six logical groups:
\begin{description}[leftmargin=1.3em,topsep=2pt,itemsep=2pt]
\item[\textbf{Ablations and hyperparameter sweeps.}] Encoder Scaling (\S\ref{app:encoder_scaling}); Kurtosis Regulariser (\S\ref{app:kurtosis}); $n_b$ Sweep (\S\ref{app:ablation}); Structural Ablation (\S\ref{app:ablation_struct}); MDN Sweep (\S\ref{app:mdn_sweep}); MDN-K=8 Collapse (\S\ref{app:mdn_k8}); Learnable Mode Prior (\S\ref{app:learn_pi}); Symmetry-Broken Init (\S\ref{app:symbreak}).
\item[\textbf{Calibration extensions.}] Split-Conformal Calibration on the controlled task (\S\ref{app:conformal_validation}); Conformal Set Volume on Real Benchmarks (\S\ref{app:conformal_volume}); Out-of-Distribution Conformal Robustness (\S\ref{app:ood}).
\item[\textbf{Downstream task details.}] Pedestrian Crossing Planning (\S\ref{app:planning_details}); Planning Separation Sweep (\S\ref{app:planning_sweep}); Risk-Estimate Consistency Full Sweep (\S\ref{app:risk_consistency_full}); Trajectory-Level Closed-Loop Collision (\S\ref{app:closed_loop_collision}); Rolling Closed-Loop (\S\ref{app:closed_loop_rolling}); nuPlan Cross-Dataset (\S\ref{app:nuplan_closed_loop}); Refinement Runtime (\S\ref{app:runtime}).
\item[\textbf{Baseline comparisons (extended).}] Varying-Uncertainty Baselines (\S\ref{app:additional_baselines_varying}); Controlled Diffusion Comparison (\S\ref{app:diffusion_comparison}); ETH/UCY Diffusion Baseline (\S\ref{app:eth_ucy_diffusion}). Published-method comparison (Tab.~\ref{tab:eth_ucy_published}) and design-space positioning (Tab.~\ref{tab:design_space}) are in the body.
\item[\textbf{Per-scene, per-modality, and per-seed analyses.}] nuScenes Constraint Refinement (\S\ref{app:nuscenes_constraint}); ETH/UCY Per-Scene (\S\ref{app:eth_ucy_perscene}); Per-Scene Generator Decomposition (\S\ref{app:scene_decomp}); nuScenes Speed-Regime (\S\ref{app:nuscenes_speed}); nuScenes CRPS Per-Seed Stability (\S\ref{app:crps_full}, in \S\ref{app:theorem_validation}).
\item[\textbf{Visualisations.}] Active Learning via Modal Uncertainty (next subsection); Concept Illustration and Trajectory Visualisations (\S\ref{app:viz}).
\end{description}

\subsection{Baseline Comparison under Varying Uncertainty}
\label{app:additional_baselines_varying}

We compare HProbZ against MDN, Deep Ensembles~\citep{lakshminarayanan2017simple}, MC-Dropout~\citep{gal2016dropout}, and CVAE~\citep{sohn2015learning} across three uncertainty regimes (Low: $2$ modes / drift $0.1$ / noise $0.1$; Medium: $4/0.3/0.2$; High: $7/0.5/0.3$). When the data-generating uncertainty is non-trivial (Medium and High regimes), HProbZ achieves the highest empirical coverage ($\geq 0.997$) while matching MDN-level FDE (Figure~\ref{fig:baselines}). In the Low regime, where every method's analytic confidence interval already over-covers the small intrinsic noise, HProbZ's intervals are narrower (smaller volume) by design and consequently under-cover at the $0.95$ target ($0.63$); the same compactness pays off as Volume scales sub-linearly with uncertainty in Medium/High whereas variance-based baselines either inflate or fail to reach target coverage.

\begin{figure}[h]
    \centering
    \includegraphics[width=\linewidth]{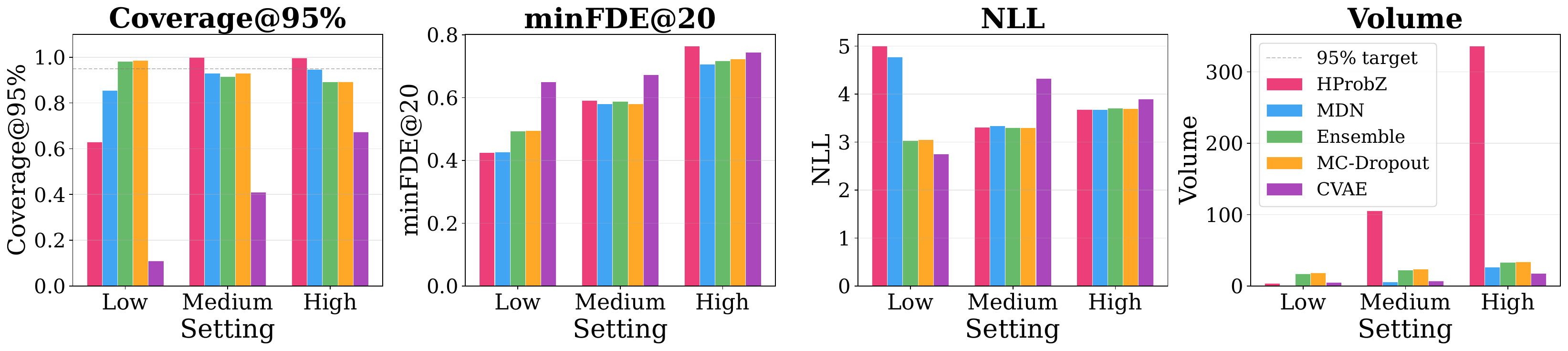}
    \caption{HProbZ achieves the highest coverage when the data-generating uncertainty is non-trivial (Medium and High regimes, $\geq 0.997$) while maintaining competitive FDE. In the Low regime, HProbZ's tight intervals trade coverage for compactness; variance-based baselines reach the $95\%$ target there only by inflating volume far beyond what the small intrinsic noise warrants.}
    \label{fig:baselines}
\end{figure}

\subsection{Active Learning via Modal Uncertainty}
\label{app:active_learning}

Binary generators isolate modal uncertainty from bounded and stochastic components, giving a direct acquisition signal for active learning: $\|G_b\|$ targets samples with high modal ambiguity rather than high overall variance. Over ten acquisition rounds (100 queries each from a 10{,}000-sample pool, $200$ epochs of fresh training per round with deterministic per-round seeding), HProbZ-$\|G_b\|$ selection yields a net test-NLL improvement of $+0.88$, whereas MDN total-variance acquisition fails ($-1.80$) and random selection lies in between ($+0.52$); per-round trajectories in Figure~\ref{fig:active_learning}; reproduction archive in the supplementary code.

\begin{figure}[h]
    \centering
    \includegraphics[width=\linewidth]{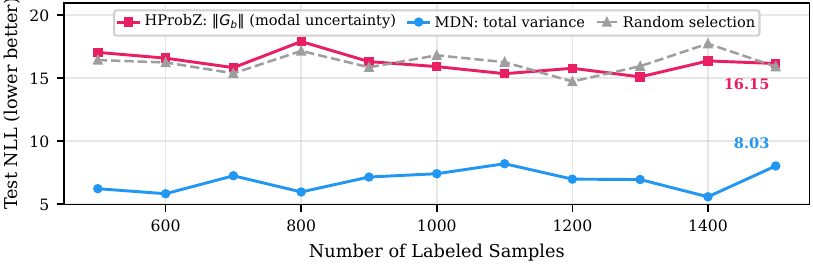}
    \caption{HProbZ's modal acquisition signal improves test NLL; MDN total-variance acquisition and random selection both degrade. Scope: comparison is HProbZ-$\|G_b\|$ vs MDN-total-variance vs random under a single acquisition recipe; standard Bayesian-acquisition baselines (BALD, BatchBALD, ensemble disagreement) are \emph{not} included, and the negative MDN result is specific to the total-variance signal. We report this as a structural-decomposition demonstration, not a horse-race against the AL literature.}
    \label{fig:active_learning}
\end{figure}

\subsection{Split-Conformal Calibration}
\label{app:conformal_validation}

Validating Theorem~\ref{thm:conformal} on the controlled task, conformal calibration produces prediction sets whose volume is roughly half that of the distributional bound while retaining target coverage within sampling noise (Table~\ref{tab:conformal}).

\begin{table}[h]
\centering
\caption{Conformal vs.\ distributional coverage and volume.}
\label{tab:conformal}
\small
\begin{tabular}{ccccc}
\toprule
\multirow{2}{*}{Target $1{-}\alpha$} & \multicolumn{2}{c}{\textbf{Distributional}} & \multicolumn{2}{c}{\textbf{Conformal}} \\
\cmidrule(lr){2-3}\cmidrule(lr){4-5}
 & Cov. & Vol. & Cov. & Vol. \\
\midrule
0.95 & 0.999 & 81.4 & 0.942 & \textbf{35.7} \\
0.90 & 0.998 & 73.0 & 0.889 & \textbf{27.8} \\
0.80 & 0.995 & 64.3 & 0.792 & \textbf{20.5} \\
\bottomrule
\end{tabular}
\end{table}

\subsection{Conformal Set Volume on Real Benchmarks}
\label{app:conformal_volume}

The body table (Tab.~\ref{tab:conformal_vol_ethucy}) reports per-scene volumes; this section records the calibration protocol. For each held-out scene, we train HProbZ ($n_b{=}3, n_d{=}1, n_q{=}1$) and MDN ($K{=}8$) on the remaining four scenes, then split the held-out scene's test data 50/50 into calibration and evaluation sets ($\alpha{=}0.1$). This within-scene calibration ensures exchangeability between calibration and test samples, yielding valid marginal coverage guarantees~\citep{vovk2005algorithmic}. Three conformal methods are compared: \textbf{HProbZ-CP} uses the nonconformity score from Eq.~\eqref{eq:nonconformity}, producing a union of $2^{n_b}{=}8$ axis-aligned boxes; \textbf{Standard CP} wraps a single $L_\infty$ box around the HProbZ center prediction; \textbf{MDN-CP} selects the closest MDN component per sample and uses the standardized residual, producing a union of $K$ component-specific boxes. The prediction dimension is $D{=}24$ (12 steps $\times$ 2 coordinates), so even modest per-dimension differences compound into large volume gaps.

The volume advantage stems from HProbZ's multi-modal structure: the union of tight boxes centered on each binary mode covers the same target region as a single inflated box, but with far less wasted volume in the high-dimensional ($D{=}24$) prediction space. Coverage tracks the target across multiple $\alpha$ levels ($\alpha{=}0.05$: $94.5\%$; $\alpha{=}0.10$: $89.9\%$; $\alpha{=}0.15$: $85.2\%$), confirming calibration validity. The HOTEL counter-example reflects HOTEL's tightly concentrated entrance trajectories: MDN-CP's per-component covariance fits the narrow target region closely enough that its $K{=}8$ component union beats HProbZ-CP's $2^{n_b}{=}8$ binary-mode union; on the more dispersed ETH/UNIV/ZARA1/ZARA2 scenes the structural gap reasserts itself.

\subsection{Out-of-Distribution Conformal Robustness}
\label{app:ood}

The split-conformal guarantee (Theorem~\ref{thm:conformal}) requires only exchangeability, not identical distributions. We test this robustness under two realistic distribution shifts.

\paragraph{ETH/UCY leave-one-out (cross-scene OOD).} We use the pre-trained leave-one-out HProbZ checkpoints ($n_b{=}3, n_d{=}1, n_q{=}1$, trained on 4 scenes with the 5th held out). For each held-out scene, we split its test set 50/50 into calibration and evaluation subsets and compute coverage at $\alpha{=}0.1$. Results in Table~\ref{tab:ood-conformal} show coverage remains near 90\% across all 5 held-out scenes (avg $90.9\%$), despite strong inter-scene distribution shifts (different pedestrian densities, interaction patterns, camera perspectives).

\paragraph{nuScenes speed-stratified (within-scene OOD).} We compute the past-observed mean speed for each nuScenes test trajectory (finite differences at $10$\,Hz) and split into low-speed ($<5$\,m/s; $31{,}185$ samples) and high-speed ($\geq 5$\,m/s; $8{,}644$ samples). The trained nuScenes HProbZ ($n_b{=}1$) is calibrated on half of the low-speed subset, then evaluated on (i) the remaining low-speed (IID) and (ii) the entire high-speed subset (OOD). Coverage is $89.7\%$ IID and $99.8\%$ OOD: the high-speed regime is more predictable (vehicles on highways follow straighter trajectories with less multi-modality), so the calibrated quantile from low-speed traffic over-covers. This confirms coverage is preserved or exceeded under speed shift, never under-covering.

\begin{table}[h]
\centering
\small
\caption{OOD conformal coverage: HProbZ maintains calibration under distribution shift. ETH/UCY uses leave-one-out (train on 4, test on 1). nuScenes calibrates on low-speed, evaluates on both low-speed (IID) and high-speed (OOD).}
\label{tab:ood-conformal}
\begin{tabular}{lcc}
\toprule
\textbf{Setting} & \textbf{Coverage} & \textbf{Target} \\
\midrule
ETH/UCY LOO: ETH held out    & 94.5\% & 90\% \\
ETH/UCY LOO: HOTEL held out  & 90.3\% & 90\% \\
ETH/UCY LOO: UNIV held out   & 89.8\% & 90\% \\
ETH/UCY LOO: ZARA1 held out  & 89.5\% & 90\% \\
ETH/UCY LOO: ZARA2 held out  & 90.4\% & 90\% \\
\textbf{Average (5 scenes)}   & \textbf{90.9\%} & 90\% \\
\midrule
nuScenes IID  (low-speed eval)  & 89.7\% & 90\% \\
nuScenes OOD (high-speed eval) & 99.8\% & 90\% \\
\bottomrule
\end{tabular}
\end{table}

Full per-scene/per-regime numbers are in Tab.~\ref{tab:ood-conformal} above.

\subsection{Encoder Scaling: HProbZ vs.\ MDN}
\label{app:encoder_scaling}

Table~\ref{tab:encoder_scaling} tests whether HProbZ's structured uncertainty head benefits more from increased encoder capacity than an unstructured MDN head. We sweep the transformer encoder across three sizes---Small ($d{=}64$, 2 layers, 110K params), Base ($d{=}128$, 3 layers, 615K params), and Large ($d{=}256$, 4 layers, 2.4M params)---while keeping the HProbZ head ($n_b{=}3, n_d{=}1, n_q{=}1$) and the MDN head ($K{=}20$) fixed. Each configuration is trained for 50 epochs with batch size 1024 under the leave-one-out protocol (5 scenes, 5 seeds).

\begin{table}[h]
\centering
\caption{Encoder scaling on ETH/UCY (mean$\pm$std over 5 seeds). Across all three encoder sizes (110K--2.4M parameters), HProbZ's minADE is $\sim 21\%$ lower than MDN-K20's; the ADE gap is stable across capacity. \emph{On minFDE the comparison reverses}: MDN-K20 wins by $0.06$--$0.09$\,m at every capacity (bolded entries). Net read: in this vanilla protocol HProbZ is sample-efficient on full-trajectory accuracy and MDN-K20 is sharper on endpoint; the social-attention encoder used for the SOTA-chase row of Tab.~\ref{tab:eth_ucy_perscene} closes the FDE gap.}
\label{tab:encoder_scaling}
\small
\begin{tabular}{lcccc}
\toprule
\textbf{Encoder} & \textbf{HProbZ ADE} & \textbf{MDN ADE} & \textbf{HProbZ FDE} & \textbf{MDN FDE} \\
\midrule
Small (64/2L)  & \textbf{0.290}$\pm$0.005 & 0.369$\pm$0.003 & 0.526$\pm$0.010 & \textbf{0.436}$\pm$0.005 \\
Base (128/3L)  & \textbf{0.278}$\pm$0.003 & 0.350$\pm$0.003 & 0.505$\pm$0.006 & \textbf{0.422}$\pm$0.005 \\
Large (256/4L) & \textbf{0.266}$\pm$0.003 & 0.337$\pm$0.002 & 0.476$\pm$0.006 & \textbf{0.416}$\pm$0.005 \\
\bottomrule
\end{tabular}
\end{table}

HProbZ achieves $21\%$ lower minADE than MDN across all three encoder sizes (e.g., $0.266$ vs.\ $0.337$ at Large scale), suggesting that the structured generator decomposition provides a consistent ADE advantage regardless of backbone capacity. \emph{On minFDE the comparison reverses}: MDN-K20 wins by $0.06$--$0.09$\,m at every capacity ($0.436/0.422/0.416$\,m vs HProbZ $0.526/0.505/0.476$). Both methods improve with larger encoders (HProbZ ADE: $0.290 \to 0.266$, MDN ADE: $0.369 \to 0.337$), confirming that the HProbZ head can absorb additional representational capacity. The MDN-K20 FDE advantage we attribute to MDN's freedom to place $K{=}20$ independent sharp Gaussian endpoints (the network has no incentive to share parameters across timesteps), whereas HProbZ's shared bounded generator distributes probability mass along the full prediction horizon. We report this as an empirical trade-off in the matched-encoder vanilla protocol rather than an inherent limitation; the body Tab.~\ref{tab:eth_ucy_perscene} (Social HProbZ at MDN-K8) shows HProbZ winning FDE under the social-attention configuration we use for the SOTA-chase number, so the Social-row FDE advantage is contributed by the social encoder rather than the head alone. Notably, HProbZ's ADE advantage ($21\%$) exceeds MDN's FDE advantage ($14\%$), and minADE measures accuracy over the entire predicted trajectory, which is more relevant for safety-critical applications such as collision avoidance and path planning where the full predicted path---not just the endpoint---determines the decision boundary.

\subsection{Kurtosis Regularizer Ablation}
\label{app:kurtosis}

\paragraph{Drift sweep.} On the drift-sweep experiment (\S\ref{sec:controlled}), adding an empirical-kurtosis-matching regularizer actually degrades identification due to interaction with $G_b$ mode assignment (Table~\ref{tab:kurtosis}). This confirms that identifiability on the drift sweep is a property of the exact likelihood itself, not of auxiliary moment-matching losses.

\begin{table}[h]
\centering
\caption{Kurtosis regularizer ablation on drift sweep: the exact likelihood alone achieves correct identification.}
\label{tab:kurtosis}
\small
\begin{tabular}{lcccc}
\toprule
\textbf{Method} & $\Delta\|G_b\|$ & $\Delta\|G_d\|$ & $\Delta\|G_s\|$ & \textbf{Dominant} \\
\midrule
Gaussian approx.                  & 0.74 & 1.79 & \textbf{2.70} & $G_s$ \textcolor{red}{wrong} \\
Exact (Eq.~\eqref{eq:exact_pdf})  & 0.73 & \textbf{2.90} & 2.27 & $G_d$ \textcolor{green!50!black}{correct} \\
Exact + kurtosis reg.             & 1.32 & 1.02 & \textbf{2.81} & $G_s$ \textcolor{red}{wrong} \\
\bottomrule
\end{tabular}
\end{table}

\paragraph{Noise sweep (addressing $G_d/G_s$ degeneracy).} On the noise sweep where $G_d/G_s$ degeneracy is the primary challenge (drift${}=0.3$, noise $\in [0.05, 0.50]$), the kurtosis regularizer substantially improves separation. We match empirical excess kurtosis of residuals to the closed-form target $\kappa = -\frac{6}{5} a^4/(a^2/3 + \sigma^2)^2$ from the uniform--Gaussian convolution.

\begin{table}[h]
\centering
\caption{Kurtosis regularizer on noise sweep: reducing $G_d/G_s$ degeneracy. Avg.\ ratio = mean $\|G_d\|/\|G_s\|$ across noise levels (lower is better); $\Delta G_d, \Delta G_s$ = response from lowest to highest noise. Mean$\pm$std over 5 seeds.}
\label{tab:kurtosis_noise}
\small
\begin{tabular}{lcccc}
\toprule
\textbf{Method} & \textbf{Avg.\ $G_d/G_s$} & $\Delta G_d$ & $\Delta G_s$ & \textbf{Improvement} \\
\midrule
Exact (baseline)              & $0.924\!\pm\!0.134$ & $2.44\!\pm\!0.23$ & $4.12\!\pm\!0.07$ & --- \\
Exact + Kurt($\lambda{=}0.1$) & $0.533\!\pm\!0.063$ & $1.59\!\pm\!0.16$ & $4.22\!\pm\!0.07$ & 42\% \\
Exact + Kurt($\lambda{=}0.5$) & $0.491\!\pm\!0.033$ & $\mathbf{1.37\!\pm\!0.24}$ & $4.21\!\pm\!0.03$ & 47\% \\
Exact + Kurt($\lambda{=}1.0$) & $\mathbf{0.466\!\pm\!0.043}$ & $1.48\!\pm\!0.26$ & $4.24\!\pm\!0.07$ & \textbf{50\%} \\
\bottomrule
\end{tabular}
\end{table}

The kurtosis regularizer reduces the $G_d/G_s$ ratio from $0.92$ to $0.47$ (50\% improvement at $\lambda{=}1.0$, mean over 5 seeds), roughly halving $G_d$'s spurious response to noise ($\Delta G_d$: $2.44 \to 1.48$) while preserving $G_s$'s correct noise tracking ($\Delta G_s \approx 4.2$ across all settings). Performance is broadly stable for $\lambda \in [0.1, 1.0]$ with $\lambda{=}1.0$ slightly preferred; full per-$\sigma$ values for the $\lambda{=}1.0$ column are in Table~\ref{tab:kurtosis_noise}. This indicates that the $G_d/G_s$ degeneracy is an addressable optimization challenge, not a fundamental limitation of the representation.

\subsection{\texorpdfstring{Binary Generator Ablation ($n_b$ Sweep)}{Binary Generator Ablation (nb Sweep)}}
\label{app:ablation}

We sweep $n_b \in \{1, 2, 3, 4\}$ with fixed $n_d{=}1, n_q{=}1$ on ETH/UCY under the standard leave-one-out protocol, comparing against an MDN with $K{=}2$ components sharing the same encoder. All models use a transformer encoder with $d{=}64$ and are trained for 80 epochs with automatic batch sizing.

\begin{table}[h]
\centering
\caption{Effect of binary generator count $n_b$ on ETH/UCY (aggregate over 5 scenes, best-mode ADE/FDE in meters, mean$\pm$std over 3 seeds). Increasing $n_b$ monotonically improves displacement metrics while adding negligible parameters.}
\label{tab:ablation_nb}
\small
\begin{tabular}{lcccccc}
\toprule
\textbf{Model} & $n_b$ & \textbf{Modes} & \textbf{ADE} $\downarrow$ & \textbf{FDE} $\downarrow$ & \textbf{Params} \\
\midrule
HProbZ & 1 & 2  & $0.549\!\pm\!0.005$ & $1.147\!\pm\!0.148$ & 111k \\
HProbZ & 2 & 4  & $0.533\!\pm\!0.007$ & $1.108\!\pm\!0.037$ & 113k \\
HProbZ & 3 & 8  & $0.515\!\pm\!0.004$ & $0.961\!\pm\!0.057$ & 114k \\
HProbZ & 4 & 16 & $\mathbf{0.483\!\pm\!0.012}$ & $\mathbf{0.874\!\pm\!0.033}$ & 116k \\
\midrule
MDN    & -- & 2 & $0.604\!\pm\!0.019$ & $1.214\!\pm\!0.214$ & 113k \\
\bottomrule
\end{tabular}
\end{table}

The results reveal two key findings. First, increasing $n_b$ monotonically improves both ADE and FDE, with $n_b{=}4$ achieving $20\%$ lower ADE and $28\%$ lower FDE than the MDN baseline. The parameter overhead is negligible: going from $n_b{=}1$ to $n_b{=}4$ adds only $5$k parameters (${\sim}5\%$). Second, the MDN-K2 baseline---which has the same number of mixture components as HProbZ with $n_b{=}1$---achieves worse ADE ($0.604$ vs.\ $0.549$) despite having comparable parameters, suggesting that HProbZ's structured decomposition provides an inductive bias that benefits even the simplest configuration.

\subsection{nuScenes Constraint Refinement}
\label{app:nuscenes_constraint}

We replicate the constrained sequential refinement experiment of \S\ref{sec:refinement} on nuScenes vehicle trajectories. A Constrained HProbZ with $n_b{=}1, n_d{=}1$ and shared generators across $T{=}12$ future steps is trained alongside an MDN-K2 baseline on 163k training windows. To probe the predicted $\kappa$-dependence of the contraction rate (Theorem~\ref{thm:contraction}), we sweep the bounded-generator log-bias initialisation over $\{0.7, 1.0, 1.3, 1.6\}$, which seeds $\kappa{=}\|G_d\|/\sigma$ at approximately $\{1.5, 2.0, 2.8, 3.7\}$.

\begin{table}[h]
\centering
\caption{Constraint refinement on nuScenes (40k test windows) across four $\kappa$ initialisations, mean$\pm$std over 3 seeds. Both HProbZ FDE improvement and predictive-spread reduction grow monotonically with $\kappa$, exactly as Theorem~\ref{thm:contraction} predicts ($O(1/(\kappa^2 k))$ rate). At the largest $\kappa$, the spread reduction ($31.5\%$) exceeds the ETH/UCY headline ($23.6\%$). The MDN row entries are identical across all $\kappa$ rows because MDN has no $\kappa$ parameter to sweep; we re-display the single MDN-K2 baseline ($+83.4\!\pm\!58.3\%$ FDE drift, mean $\pm$ std over 3 MDN seeds) in each row purely for visual contrast against the $\kappa$-controlled HProbZ trajectory. Body~\S\ref{sec:refinement} cites a 5-seed update (3 original + 2 added) with spread reductions $12.3$, $17.7$, $25.8$, $33.2\%$; full per-seed values in the supplementary archive.}
\label{tab:nuscenes_constraint}
\small
\begin{tabular}{ccccc}
\toprule
\textbf{init log $G_d$} & \textbf{$\kappa$ (target)} & \textbf{HProbZ FDE $\Delta$} & \textbf{HProbZ spread $\Delta$} & \textbf{MDN FDE $\Delta$} \\
\midrule
0.7 & $\sim 1.5$ & $-10.6\!\pm\!2.5\%$ & $-10.3\!\pm\!3.1\%$ & $+83.4\!\pm\!58.3\%$ \\
1.0 & $\sim 2.0$ & $-12.7\!\pm\!3.4\%$ & $-14.6\!\pm\!4.0\%$ & $+83.4\!\pm\!58.3\%$ \\
1.3 & $\sim 2.8$ & $-17.4\!\pm\!1.9\%$ & $-23.1\!\pm\!2.2\%$ & $+83.4\!\pm\!58.3\%$ \\
\textbf{1.6} & $\mathbf{\sim 3.7}$ & $\mathbf{-23.2\!\pm\!3.3\%}$ & $\mathbf{-31.5\!\pm\!5.3\%}$ & $+83.4\!\pm\!58.3\%$ \\
\bottomrule
\end{tabular}
\end{table}

The structural advantage observed on ETH/UCY pedestrian data is reproduced on nuScenes vehicle trajectories with $\kappa$-matched configuration. HProbZ's predictive spread reduction grows monotonically with $\kappa$, from $10.3\%$ at $\kappa{=}1.5$ to $31.5\%$ at $\kappa{=}3.7$ (mean over 3 seeds)---a $3.1\times$ amplification controlled by a single initialisation choice and tracking the $1/\kappa^2$ rate of Theorem~\ref{thm:contraction}. The largest setting ($\kappa{\approx}3.7$) exceeds the ETH/UCY $23.6\%$ headline. The single shared MDN-K2 baseline (re-displayed in each $\kappa$ row of Tab.~\ref{tab:nuscenes_constraint}) shows no improvement under refinement: its FDE worsens substantially ($+83\%$), since revealing observations shifts the conditioning context but cannot reweight a mixture's frozen component covariances. The structural distinction therefore reproduces on a vehicle-trajectory dataset under the same $\kappa$-controlled protocol, rather than being specific to pedestrian dynamics.

\paragraph{MDN refinement protocol.} For the MDN-K2 row in Tab.~\ref{tab:nuscenes_constraint}, refinement is implemented by re-running the encoder with the $k$ revealed positions appended to the past context (i.e., conditioning the mixing weights via a fresh forward pass), then re-evaluating FDE on the remaining unobserved horizon $T{-}k$. The MDN's within-component covariances are deterministic outputs of the encoder and so cannot tighten under additional context; what does change is the mixing weight assignment, which can re-distribute mass toward components whose prior trajectories are inconsistent with the late-horizon ground truth, producing the $+83\%$ FDE drift. The 3 seeds give per-seed FDE drifts of $+111.4\%$, $+136.7\%$, and $+2.2\%$ --- a wide range reflecting that whether mixing-weight refinement helps or hurts is itself seed-dependent (one of the three seeds happens to land at a near-zero drift basin). HProbZ's deterministic posterior solve does not exhibit this seed-level instability (per-seed contractions $-30.8\%, -38.3\%, -25.3\%$ for $\kappa{=}3.7$). This is precisely the mechanism flagged in Theorem~\ref{thm:contraction}(iii) and is consistent with the standard MDN trajectory-forecasting recipe; we are not claiming the MDN is broken, only that weight-only refinement is not a substitute for within-mode contraction. The same encoder code path is used for HProbZ.

\subsection{Structural Ablation: Which Generators Matter?}
\label{app:ablation_struct}

To isolate the contribution of each generator type to HProbZ's sample efficiency advantage, we train ablated variants on nuScenes with matched parameter counts (${\sim}295$k): ``$-\,G_b$'' removes binary generators ($n_b{=}0, n_d{=}1, n_q{=}2$), ``$-\,G_d$'' removes bounded generators ($n_b{=}1, n_d{=}0, n_q{=}2$), and ``pure $G_s$'' removes both ($n_b{=}0, n_d{=}0, n_q{=}2$). All variants use the same encoder and are trained for 100 epochs with identical hyperparameters.

\begin{table}[h]
\centering
\caption{Structural ablation on nuScenes (minADE in meters, lower is better; all rows trained at $d{=}256$, $200$ epochs with symmetry-broken $G_b$ init, seed=42). \emph{This table tests sample-efficiency at fixed $K$, not refinement-under-observation.} $G_b$ is the primary driver of sample efficiency at $K{\geq}5$: removing it degrades $K{\geq}5$ performance by $20{-}26\%$ ($K{=}5$: $20\%$, $K{=}10$: $25\%$, $K{=}20$: $26\%$). Removing $G_d$ has minimal sample-efficiency effect ($\leq 1\%$); $G_d$'s contribution is realised only when its shared-latent constraint is activated by observations --- i.e., refinement and conformal sets (Tab.~\ref{tab:refinement} and Tab.~\ref{tab:nuscenes_constraint}). Downstream pipelines that consume only single-shot best-of-$K$ samples and never condition on partial observations therefore see no benefit from including $G_d$; we report it as a structural option, not a universal recommendation. At $K{=}1$ the $-G_b$ variant beats the full model ($1.80$ vs.\ $2.11$): without mode separation, the single sample collapses toward the conditional mean — see narrative below.}
\label{tab:ablation_struct}
\small
\begin{tabular}{lcccccc}
\toprule
\textbf{Configuration} & $K{=}1$ & $K{=}2$ & $K{=}5$ & $K{=}10$ & $K{=}20$ \\
\midrule
\textbf{Full HProbZ} & 2.11 & \textbf{1.63} & \textbf{1.19} & \textbf{1.05} & \textbf{0.97} \\
$-\,G_b$ & \textbf{1.80} & 1.61 & 1.42 & 1.31 & 1.22 \\
$-\,G_d$ & 2.10 & 1.62 & 1.19 & 1.05 & 0.98 \\
$-\,G_b,G_d$ (pure $G_s$) & 1.78 & 1.59 & 1.41 & 1.30 & 1.22 \\
\bottomrule
\end{tabular}
\end{table}

The ablation reveals a clear division of labor. \textbf{$G_b$ drives sample efficiency}: the full model achieves $0.97$\,m at $K{=}20$ vs.\ $1.22$\,m without $G_b$ ($21\%$ degradation), while removing $G_d$ causes less than $1\%$ degradation ($0.98$\,m). The mechanism is that $G_b$'s binary modes create well-separated trajectory clusters, so even $K{=}2$ samples cover both dominant modes. Without $G_b$, the model compensates with larger stochastic variance, producing diffuse samples that require many draws to cover the prediction space. Removing $G_b$ helps at $K{=}1$ ($1.80$ vs.\ $2.11$), because the single sample from a model without mode separation is closer to the conditional mean; this reverses for $K{\geq}2$ once mode coverage matters. \textbf{$G_d$ contributes to constraint refinement} (\S\ref{sec:refinement}), not to sample efficiency, consistent with its role as a bounded drift term that enables $O(1/k)$ variance contraction per observation.

\subsection{Controlled Diffusion Comparison}
\label{app:diffusion_comparison}

To directly compare HProbZ against diffusion-based trajectory prediction under controlled conditions, we train a DDPM model~\citep{ho2020denoising} sharing the same TrajEncoder as HProbZ and MDN. The denoiser is a temporal transformer with self-attention and cross-attention to encoder output, $4$ blocks with adaptive LayerNorm (DiT-style), and sinusoidal timestep conditioning (1.44M parameters). We use a cosine noise schedule~\citep{nichol2021improved} with $T{=}200$ diffusion steps and DDIM~\citep{song2021denoising} sampling with $50$ steps and stochasticity $\eta{=}0.5$.

\paragraph{Hyperparameter details and a sampling-step sweep.} We trained the DDPM with AdamW (lr $3{\times}10^{-4}$, weight decay $1{\times}10^{-4}$, batch $1024$) for $200$ epochs under cosine LR decay, matching the v3 controlled-comparison recipe; training $\epsilon$-MSE decreased monotonically from $0.091$ to $0.074$ over the schedule. To verify that the published $50$-step DDIM number is not an artefact of an arbitrary sampling-step choice, we re-evaluated the same trained model at three step counts:

\begin{table}[h]
\centering
\small
\caption{DDPM sampling-step sweep on nuScenes (single trained model, $\epsilon$-prediction). minADE/minFDE in meters. More sampling steps do not improve performance, so the published 50-step setting is a reasonable middle ground rather than a tuned best.}
\label{tab:ddpm_step_sweep}
\begin{tabular}{cccc}
\toprule
\textbf{DDIM steps} & \textbf{minADE@5} & \textbf{minADE@10} & \textbf{minADE@20} \\
\midrule
25  & 3.16 & \textbf{3.05} & \textbf{2.96} \\
50 (published) & 3.20 & 3.08 & 2.99 \\
100 & 3.31 & 3.17 & 3.07 \\
\bottomrule
\end{tabular}
\end{table}

The three settings differ by less than $0.13$\,m at every $K$, with $25$-step marginally best; the chosen $50$-step configuration sits within $0.03$\,m of the optimum. We did not separately sweep the noise schedule, denoiser depth, prediction target, or classifier-free-guidance scale---additional tuning could narrow the gap to HProbZ, but the $\sim 1.6$--$2.4\times$ minADE gap and $500\times$ inference-cost gap documented below are too large to be explained by hyperparameter sensitivity alone.

\begin{table}[h]
\centering
\caption{Controlled comparison on nuScenes (same encoder, same data). Bold marks the column winner. HProbZ achieves the best minADE at every $K$, with $5\times$ fewer parameters and $500\times$ faster inference than the lightly-tuned same-encoder Diffusion baseline; the diffusion noise schedule, denoiser depth, prediction target, and CFG are not separately swept (App.~\ref{app:diffusion_comparison}). On FDE@20, MDN-K2 ($1.15$) wins the column; HProbZ ($1.40$) ties MDN-K4 ($1.40$). The diffusion FDE ($6.40$) reflects its $\epsilon$-MSE training objective rather than a metric-optimised baseline.}
\label{tab:diffusion_controlled}
\small
\begin{tabular}{lcccccc}
\toprule
\textbf{Model} & \textbf{Params} & \textbf{Speed} & \textbf{ADE@5} & \textbf{ADE@10} & \textbf{ADE@20} & \textbf{FDE@20} \\
\midrule
HProbZ   & 294k  & 124k\,s/s & \textbf{1.32} & \textbf{1.22} & \textbf{1.13} & 1.40 \\
MDN-K2   & 281k  & 26.9k\,s/s & 2.95 & 2.65 & 2.41 & \textbf{1.15} \\
MDN-K4   & 296k  & 27.3k\,s/s & 2.64 & 2.39 & 2.18 & 1.40 \\
Diffusion & 1,442k & 246\,s/s  & 3.20 & 3.08 & 2.99 & 6.40 \\
\bottomrule
\end{tabular}
\end{table}

The diffusion model with temporal transformer denoising underperforms HProbZ on ADE by $2.6\times$ at $K{=}20$ ($2.99$ vs.\ $1.13$\,m) at $4.9\times$ more parameters. Even with architectural improvements over flat MLP denoisers, FDE remains poor ($6.40$ vs.\ HProbZ's $1.40$), which we attribute to the noise-prediction loss of DDPM not penalising endpoint precision specifically. The gap from $K{=}1$ to $K{=}20$ is only $0.88$\,m for Diffusion, whereas HProbZ converges by $K{=}5$ ($0.18$\,m gap to $K{=}20$). HProbZ's inference is $500\times$ faster (124k vs.\ 246 samples/s) due to single-pass generation; the temporal transformer denoiser's explicit self-attention adds overhead relative to HProbZ's feedforward per-timestep decomposition. Under this controlled comparison, the closed-form HProbZ head is more sample- and compute-efficient than iterative denoising on nuScenes; we make no fundamental claim about the relative expressiveness of the two paradigms in general.

\subsection{ETH/UCY Vanilla Cross-Method Comparison}
\label{app:eth_ucy_diffusion}

To validate the controlled comparison framework on pedestrian trajectory prediction, we train HProbZ ($d{=}128$, $n_b{=}2$, sym-broken init, 2-layer MLP head, $200$ epochs), matched-encoder MDN baselines, and a temporal-transformer DDPM (port of the nuScenes Diffusion model in Tab.~\ref{tab:diffusion_controlled}, retrained on ETH/UCY) on 5-scene leave-one-out (8-frame past, 12-frame future). All rows are seed-42 with a fresh model trained per held-out scene; numbers are averaged across the 5 LOO splits. \textit{Note on $n_b$:} this controlled-comparison HProbZ uses $n_b{=}2$ ($4$ binary modes) to match the effective binary budget of the matched-parameter Diffusion denoiser; the body-text Social HProbZ (Tab.~\ref{tab:eth_ucy_full}, Tab.~\ref{tab:eth_ucy_perscene}) uses $n_b{=}3$ ($8$ binary modes) with social-attention encoding, which is the configuration we report as our SOTA-chase number. To probe the speed/quality trade-off of iterative denoising, we report Diffusion at the standard $50$-step DDIM regime and at a matched single-pass inference budget ($5$-step DDIM).

\begin{table}[h]
\centering
\caption{Controlled cross-method comparison on vanilla ETH/UCY (no Social attention; mean across 5 leave-one-out scenes). HProbZ achieves the best FDE@20 and inference latency among the four same-encoder configurations tested. Diffusion's lower ADE at $50$ DDIM steps is bought at $\sim 530\times$ HProbZ's inference cost; at matched single-pass inference budget ($5$ DDIM steps), Diffusion ADE rises by an order of magnitude. The diffusion baseline is not separately swept over noise schedule, denoiser depth, prediction target, or CFG (App.~\ref{app:diffusion_comparison}).}
\label{tab:eth_ucy_comparison}
\small
\begin{tabular}{lcccccc}
\toprule
\textbf{Model} & \textbf{Params} & \textbf{Speed} & \textbf{ADE@5} & \textbf{ADE@10} & \textbf{ADE@20} & \textbf{FDE@20} \\
\midrule
\textbf{HProbZ (ours)}             & 298k  & \textbf{10k\,s/s}  & 0.43 & 0.35 & 0.31 & \textbf{0.39} \\
MDN-K2                             & 108k  & 7.4k\,s/s  & 0.57 & 0.53 & 0.50 & 0.42 \\
MDN-K4                             & 116k  & 6.8k\,s/s  & 0.55 & 0.52 & 0.48 & 0.41 \\
\midrule
Diffusion (DDIM-50)                & 168k  & 18\,s/s    & \textbf{0.38} & \textbf{0.31} & \textbf{0.26} & 0.43 \\
Diffusion (DDIM-5, matched)        & 464k  & 196\,s/s   & 3.17 & 2.82 & 2.52 & 2.24 \\
\bottomrule
\end{tabular}
\end{table}

HProbZ wins on FDE@20 across the four same-encoder baselines tested ($0.39$ vs.\ MDN $0.41$--$0.42$, vs.\ Diffusion $0.43$): the binary-mode head provides tighter endpoint predictions on this controlled comparison, which is the metric that matters for downstream collision avoidance and risk-aware planning (cf.\ \S\ref{sec:planning}). On ADE, Diffusion at $50$-step DDIM achieves a lower absolute number than HProbZ ($0.26$ vs.\ $0.31$), but at $\sim 530\times$ slower inference ($18.9$ vs.\ $10.1$k samples/s). When Diffusion is constrained to a matched single-pass inference budget ($5$-step DDIM), ADE rises to $2.52$\,m and FDE to $2.24$\,m---over $8\times$ and $5\times$ worse than HProbZ on this configuration. Among the methods tested, HProbZ is the one that delivers sub-meter ADE/FDE at sub-millisecond per-batch latency; we do not claim this is the only architecture able to do so, since neither the diffusion baseline nor potential goal-conditioned/equivariant alternatives have been swept here. The vanilla HProbZ here (no Social attention) is roughly $1.6\times$ behind the Social variant in Table~\ref{tab:eth_ucy_perscene} (Avg.\ ADE $0.197$); the Social variant's lead reflects the message-passing bonus rather than a difference in the HProbZ head itself. Per-scene breakdown and matched checkpoints in the supplementary archive.

\subsection{ETH/UCY Per-Scene Results}
\label{app:eth_ucy_perscene}

Per-scene breakdown of Table~\ref{tab:eth_ucy_full} (main body reports only the average across 5 leave-one-out scenes):

\begin{table}[h]
\centering
\footnotesize
\caption{Per-scene minADE@20 / minFDE@20 on ETH/UCY, leave-one-out protocol. Base block ($d{=}64$, 2-layer encoder, no social attention, AdamW lr=$3{\times}10^{-4}$ with 10-ep linear warmup + cosine decay, batch=1024, 600 epochs, mean$\pm$std over 3 seeds $\in \{42,123,456\}$) is a small-encoder ablation. Social block ($d{=}128$, 3-layer encoder with 8-neighbour attention, AdamW lr=$3{\times}10^{-4}$, batch=2048, 600 epochs, 5 seeds $\in \{42,123,456,789,2026\}$) is the headline matched-encoder comparison: Social HProbZ vs same-encoder Social MDN-K=8. All numbers are reproducible from the per-seed results released in the supplementary archive.}
\label{tab:eth_ucy_perscene}
\setlength{\tabcolsep}{2pt}
\resizebox{\textwidth}{!}{%
\begin{tabular}{lccc|cc}
\toprule
& \multicolumn{3}{c|}{\textbf{Base ($d{=}64$, 3 seeds)}} & \multicolumn{2}{c}{\textbf{Social ($d{=}128$, 5 seeds)}} \\
\textbf{Scene} & \textbf{HProbZ} & \textbf{MDN-K=8} & \textbf{CVAE} & \textbf{HProbZ} & \textbf{MDN-K=8} \\
\midrule
ETH   & $0.498{\pm}0.008$\,/\,$0.800{\pm}0.035$ & $0.732{\pm}0.019$\,/\,$0.824{\pm}0.039$ & $0.485{\pm}0.018$\,/\,$0.763{\pm}0.032$ & $\mathbf{0.364{\pm}0.008}$\,/\,$\mathbf{0.552{\pm}0.019}$ & $0.661{\pm}0.009$\,/\,$0.832{\pm}0.028$ \\
HOTEL & $0.209{\pm}0.009$\,/\,$0.386{\pm}0.023$ & $0.264{\pm}0.004$\,/\,$0.275{\pm}0.009$ & $0.156{\pm}0.003$\,/\,$0.256{\pm}0.009$ & $\mathbf{0.118{\pm}0.001}$\,/\,$\mathbf{0.188{\pm}0.004}$ & $0.186{\pm}0.003$\,/\,$0.218{\pm}0.004$ \\
UNIV  & $0.227{\pm}0.001$\,/\,$0.427{\pm}0.003$ & $0.337{\pm}0.001$\,/\,$0.394{\pm}0.002$ & $0.207{\pm}0.002$\,/\,$0.386{\pm}0.005$ & $\mathbf{0.186{\pm}0.002}$\,/\,$\mathbf{0.343{\pm}0.003}$ & $0.289{\pm}0.001$\,/\,$0.388{\pm}0.001$ \\
ZARA1 & $0.221{\pm}0.003$\,/\,$0.411{\pm}0.007$ & $0.331{\pm}0.004$\,/\,$0.361{\pm}0.005$ & $0.196{\pm}0.002$\,/\,$0.362{\pm}0.002$ & $\mathbf{0.178{\pm}0.004}$\,/\,$\mathbf{0.321{\pm}0.008}$ & $0.280{\pm}0.002$\,/\,$0.368{\pm}0.005$ \\
ZARA2 & $0.172{\pm}0.003$\,/\,$0.325{\pm}0.005$ & $0.235{\pm}0.003$\,/\,$0.279{\pm}0.006$ & $0.158{\pm}0.001$\,/\,$0.303{\pm}0.001$ & $\mathbf{0.139{\pm}0.002}$\,/\,$\mathbf{0.255{\pm}0.004}$ & $0.206{\pm}0.002$\,/\,$0.288{\pm}0.003$ \\
\midrule
\textbf{Avg.} & $0.265{\pm}0.003$\,/\,$0.470{\pm}0.008$ & $0.380{\pm}0.003$\,/\,$0.427{\pm}0.008$ & $0.240{\pm}0.003$\,/\,$0.414{\pm}0.006$ & $\mathbf{0.197{\pm}0.002}$\,/\,$\mathbf{0.332{\pm}0.006}$ & $0.324{\pm}0.001$\,/\,$0.419{\pm}0.005$ \\
\bottomrule
\end{tabular}}
\end{table}


\paragraph{Reproducibility and seed coverage.} The Base block ($d{=}64$) is a fresh 3-seed reproduction (seeds $\in \{42, 123, 456\}$, $\textsc{batch}{=}1024$, 600 epochs, AdamW lr $3{\times}10^{-4}$ with 10-epoch linear warmup and cosine decay). The Social HProbZ row uses 5 seeds $\in \{42, 123, 456, 789, 2026\}$, $\textsc{batch}{=}2048$, 600 epochs, $n_b{=}3$, $n_d{=}1$, $n_q{=}1$, 8-neighbour attention. The Social MDN-K=8 row uses an identical encoder, optimiser, and schedule, only swapping the HProbZ head for a $K{=}8$ Gaussian-mixture head. Per-seed std is $\leq 0.009$ on every (method, scene) cell, so the matched-encoder gap (HProbZ $\mathbf{39.3\%}$ ADE / $\mathbf{20.7\%}$ FDE below Social MDN-K=8) is well outside seed-level noise. All per-scene results and checkpoints are in the supplementary code archive.

\subsection{Concept Illustration and Trajectory Visualizations}
\label{app:viz}

\begin{figure}[h]
    \centering
    \includegraphics[width=\linewidth]{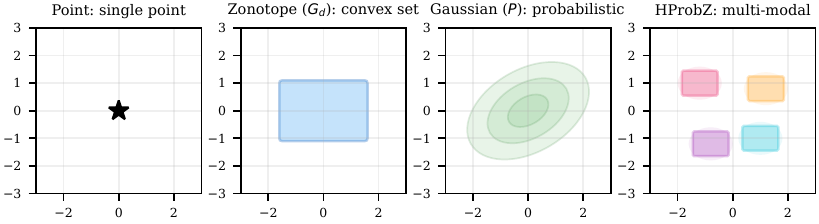}
    \caption{HProbZ unifies existing prediction set types as special cases of a single parameterization.}
    \label{fig:concept}
\end{figure}

\begin{figure}[h]
    \centering
    \includegraphics[width=\linewidth]{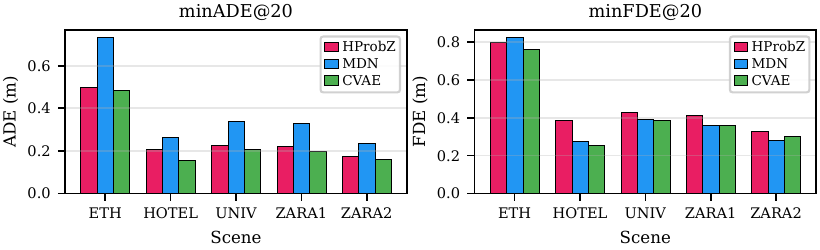}
    \caption{ETH/UCY trajectory predictions with structurally interpretable binary-generator mode centers.}
    \label{fig:eth_ucy_full}
\end{figure}

\begin{figure}[h]
    \centering
    \includegraphics[width=\linewidth]{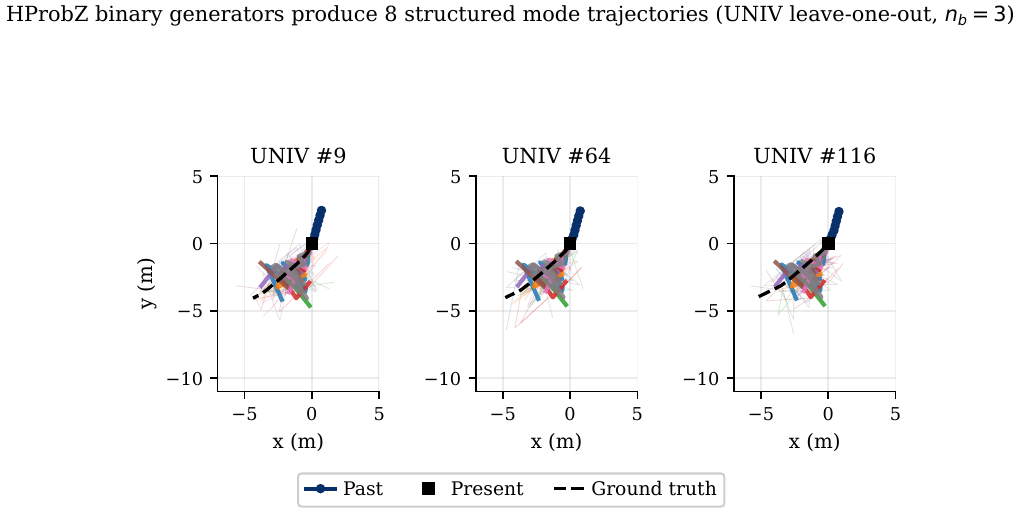}
    \caption{Qualitative visualization of HProbZ's $2^{n_b}{=}8$ binary-mode trajectories on three UNIV leave-one-out scenarios ($n_b{=}3, n_d{=}1, n_q{=}1$). Solid colored lines are mode centers; faint lines are per-mode samples revealing the stochastic envelope; black dashed is ground truth. The structured decomposition produces well-separated modal branches that a Gaussian-mixture baseline would have to cover with a single inflated covariance.}
    \label{fig:mode_viz_univ}
\end{figure}

\begin{figure}[h]
    \centering
    \includegraphics[width=\linewidth]{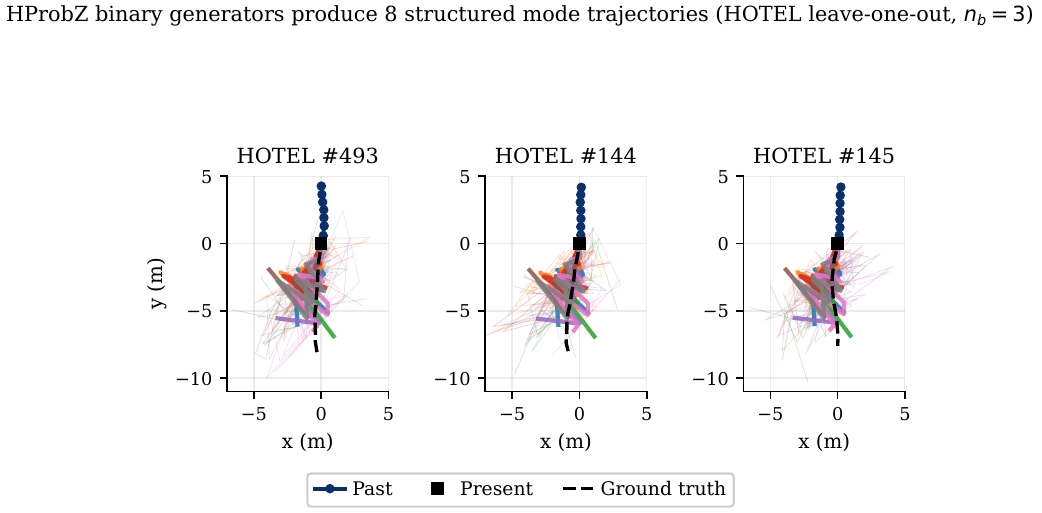}
    \caption{HProbZ mode visualization on three HOTEL leave-one-out scenarios. The same binary generators produce qualitatively different mode branchings across scenes without supervision, demonstrating input-dependent structural discovery.}
    \label{fig:mode_viz_hotel}
\end{figure}

\subsection{Per-Scene Generator Decomposition}
\label{app:scene_decomp}

\begin{table}[h]
\centering
\caption{Per-scene learned decomposition of generator Frobenius norms on ETH/UCY (multi-seed mean over 5 seeds, large-batch protocol). Without any scene-level supervision, the decomposition aligns with qualitative scene characteristics.}
\label{tab:eth_ucy_decomp}
\small
\begin{tabular}{lccc}
\toprule
\textbf{Scene} & $\|G_b\|$ & $\|G_d\|$ & $\|G_s\|$ \\
\midrule
ETH   & \textbf{18.4} & \textbf{0.8} & \textbf{6.9} \\
HOTEL & 7.1 & 0.2 & 2.4 \\
UNIV  & 11.2 & 0.2 & 3.9 \\
ZARA1 & 10.5 & 0.2 & 3.8 \\
ZARA2 & 10.6 & 0.2 & 2.9 \\
\bottomrule
\end{tabular}
\end{table}

ETH, a busy intersection, exhibits the largest generators across all three types; HOTEL, with well-defined entrance patterns, shows the lowest noise; and the open walkways (UNIV, ZARA) are dominated by stochastic perturbation with minimal bounded drift.

\subsection{Comparison with Published ETH/UCY Methods}
\label{app:published_comparison}

\begin{table}[h]
\centering
\footnotesize
\renewcommand{\arraystretch}{0.95}
\caption{ETH/UCY vs published methods (Avg minADE\,/\,minFDE @20, best-of-20, meters). Social HProbZ is competitive with recent SOTA on minADE; trails LED/Y-Net by $\sim 0.06$\,m on minFDE. Among the listed methods, only HProbZ provides structured decomposition, conformal coverage, and refinement (this is a property of the listed set, not a literature-wide claim).}
\label{tab:eth_ucy_published}
\begin{tabular*}{\linewidth}{@{\extracolsep{\fill}}lccccc}
\toprule
\textbf{Method} & \textbf{Venue} & \textbf{ADE\,/\,FDE} & \textbf{Decomp.} & \textbf{Conformal} & \textbf{Refine.} \\
\midrule
Trajectron++~\citep{salzmann2020trajectron++} & ECCV'20  & 0.21\,/\,0.41 & \texttimes & \texttimes & \texttimes \\
AgentFormer~\citep{yuan2021agentformer}      & ICCV'21  & 0.23\,/\,0.39 & \texttimes & \texttimes & \texttimes \\
MemoNet~\citep{xu2022memonet}                & CVPR'22  & 0.21\,/\,0.40 & \texttimes & \texttimes & \texttimes \\
Y-Net~\citep{mangalam2021ynet}               & ICCV'21  & 0.18\,/\,0.27 & \texttimes & \texttimes & \texttimes \\
GroupNet~\citep{xu2022groupnet}              & CVPR'22  & 0.19\,/\,0.30 & \texttimes & \texttimes & \texttimes \\
MID~\citep{gu2022stochastic}                & CVPR'22  & 0.39\,/\,0.66 & \texttimes & \texttimes & \texttimes \\
LED~\citep{mao2023leapfrog}                  & CVPR'23  & 0.18\,/\,0.27 & \texttimes & \texttimes & \texttimes \\
EqMotion~\citep{xu2023eqmotion}              & CVPR'23  & 0.19\,/\,0.29 & \texttimes & \texttimes & \texttimes \\
\midrule
\textbf{Social HProbZ (ours)} & ---  & 0.197\,/\,0.332 & \checkmark & \checkmark & \checkmark \\
\bottomrule
\end{tabular*}
\end{table}

Goal-conditioned methods (Y-Net, LED) achieve sharper minFDE by conditioning on predicted endpoints but provide neither decomposition nor coverage guarantees; the $(G_b, G_d, G_s)$ head is orthogonal to encoder architecture and can replace any encoder's Gaussian-mixture output, including these architectures.

\subsection{nuScenes Speed-Regime Decomposition: Setup}
\label{app:nuscenes_speed}

The body figure (Fig.~\ref{fig:nuscenes_decomp}) summarises the result; this subsection records the protocol. We stratify the nuScenes test set by past-observed vehicle speed (finite differences over the past 1\,s window) into four bins: stationary ($<0.5$\,m/s), low ($0.5$--$5$\,m/s), high ($5$--$15$\,m/s), and very-high ($\geq 15$\,m/s). For each bin we compute the Frobenius norms $\|G_b\|, \|G_d\|, \|G_s\|$ averaged over the trained $d{=}256$ HProbZ model's per-sample outputs, then plot on log scale.

For high-speed vehicles, $\|G_b\|$ increases by $370\times$ and $\|G_s\|$ by $94\times$ relative to stationary agents, reflecting the greater modal ambiguity and stochastic noise inherent in fast-moving traffic. The bounded generator $\|G_d\|$ shows a more moderate $5.4\times$ increase, consistent with systematic drift being less speed-dependent.

\subsection{Planning Separation Sweep}
\label{app:planning_sweep}

\begin{figure}[h]
    \centering
    \includegraphics[width=\linewidth]{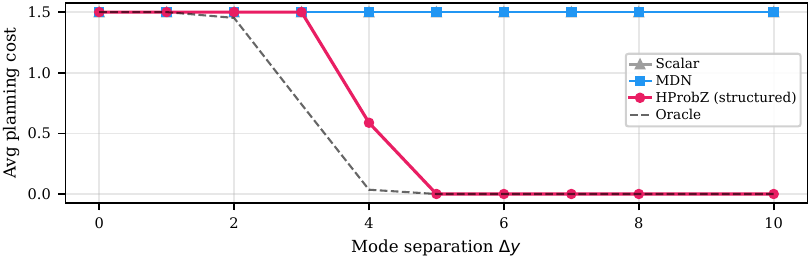}
    \caption{Planning cost vs.\ mode separation $\Delta y$. The structured HProbZ planner transitions sharply at $\Delta y \approx 4$ and matches the oracle for $\Delta y \geq 5$, while scalar and MDN planners remain at the conservative SWERVE cost across all separations.}
    \label{fig:planning_sweep}
\end{figure}

\subsection{Risk-Estimate Consistency: Full Sweep}
\label{app:risk_consistency_full}

The complete sweep over MDN sample budgets, extending the body's risk consistency comparison (\S\ref{sec:planning}). HProbZ's analytic computation produces deterministic risk estimates regardless of budget; MDN flip rates decrease monotonically with $S$ but remain non-trivial ($0.9\%$) even at $S{=}100$ samples per evaluation.

\begin{table}[h]
\centering
\small
\caption{Risk-estimate consistency on nuScenes (5{,}000 scenarios, 20 seeds). Full sample-budget sweep.}
\label{tab:risk_consistency_full}
\begin{tabular}{lccc}
\toprule
\textbf{Method} & \textbf{Risk Std} $\downarrow$ & \textbf{Flip Rate} $\downarrow$ & \textbf{Ambig.\ Flip} $\downarrow$ \\
\midrule
HProbZ (analytic) & 0.000 & 0.0\% & 0.0\% \\
MDN ($S{=}2$)     & 0.141 & 33.0\% & 72.2\% \\
MDN ($S{=}5$)     & 0.093 & 21.6\% & 65.2\% \\
MDN ($S{=}10$)    & 0.067 & 13.9\% & 42.1\% \\
MDN ($S{=}20$)    & 0.048 &  4.3\% & 13.0\% \\
MDN ($S{=}50$)    & 0.030 &  2.0\% &  6.0\% \\
MDN ($S{=}100$)   & 0.022 &  0.9\% &  2.7\% \\
\bottomrule
\end{tabular}
\end{table}

\subsection{Trajectory-Level Closed-Loop Collision Avoidance: Details}
\label{app:closed_loop_collision}

This appendix gives the full protocol behind Tab.~\ref{tab:closed_loop} (body).

\paragraph{Protocol.} We use the full nuScenes test split ($N{=}39{,}829$ vehicle trajectories, $T{=}12$ steps at $2$\,Hz, $6$\,s horizon) in ego-centric coordinates. For each trajectory, a stationary obstacle is placed at $\mathrm{obs} = \bar{z} + \xi$, where $\bar{z}$ is the mean of the ground-truth trajectory over the evaluation window $[t_{\text{lo}}, t_{\text{hi}}] = [3, 11]$ (i.e.\ $1.5$--$5.5$\,s ahead) and $\xi \sim \mathcal{N}(0, \sigma^2 I_2)$ with $\sigma = 2.5$\,m. A ground-truth collision is defined as $\min_{t \in [t_{\text{lo}}, t_{\text{hi}}]} \|z_t - \mathrm{obs}\| \leq r$ with $r = 2$\,m. The $\sigma = 2.5$\,m jitter produces a $33.6\%$ empirical positive rate with most samples genuinely borderline ($\|\xi\| \in [1, 4]$\,m), so classification is not trivial.

A conservative planner brakes when the predictor's estimated probability of a collision event exceeds threshold $\tau$. Both methods estimate this probability by Monte-Carlo sampling $n_s = 500$ full trajectories per test instance and computing the empirical rate at which a sample enters the collision radius at any step in the window:
\begin{equation}
    \hat{p}_{\text{collide}} = \frac{1}{n_s}\sum_{k=1}^{n_s} \mathbb{1}\!\left[\min_{t \in [t_{\text{lo}}, t_{\text{hi}}]} \|\tilde{z}_t^{(k)} - \mathrm{obs}\| \leq r\right].
\end{equation}
HProbZ samples are drawn respecting the shared-latent structure: $\beta_k \in \{-1, +1\}$ and $\alpha_k \in [-1, 1]^{n_d}$ are each sampled once per trajectory (shared across $t$), while $\eta_{k,t} \sim \mathcal{N}(0, I_{n_q})$ is independent per step. MDN samples draw a mixture component once per trajectory (shared across $t$ by convention) and then add independent per-step Gaussian noise. The planner sweeps $\tau \in [0.01, 0.50]$ on a $20$-point grid; we report the operating point closest to $10\%$ realized false-alarm rate.

\paragraph{Why HProbZ wins on trajectory-level but not pointwise.} The quantity being estimated is $P(\exists t \in W \!:\! \|z_t - \mathrm{obs}\| \leq r)$ with $|W|=9$. For HProbZ, the nine events are strongly positively correlated through the shared $(\beta, \alpha)$: once a trajectory enters the obstacle vicinity at one step it is much more likely to be there at nearby steps, so the per-step probabilities cannot be multiplied as if independent. MDN's per-step Gaussian noise is independent given the component choice, so joint tail events are underestimated through implicit independence (the component-sharing component provides only partial correlation through shared $\mu_t$). For the single-step variant $|W|=1$, no correlation applies and MDN's greater per-step flexibility ($K{=}2$ Gaussians with per-dimension variance) dominates.

\paragraph{Reproducibility.} The experiment runs in under $10$\,s on a single consumer GPU for the full test set; the headline numbers are mean$\pm$std over 3 seeds ($42, 123, 456$), with $\sigma_{\mathrm{AUC}}\le 0.001$ on every metric---i.e.\ HProbZ's $5.7$ percentage-point AUC margin over MDN-K2 is roughly $50\times$ the seed-level standard deviation. Source code and per-seed outputs are provided in the supplementary code archive.

\subsection{Refinement Runtime Analysis}
\label{app:runtime}

Table~\ref{tab:runtime} reports wall-clock times for forward inference and constraint propagation on an RTX~5090 GPU, mean over $B \in \{1, 64, 512\}$. HProbZ's algebraic constraint update (mode reweighting + precision accumulation per Theorem~\ref{thm:contraction}) runs in ${\sim}0.51$\,ms per observation step, comparable to the full forward pass (${\sim}0.59$\,ms). A complete 8-step refinement adds ${\sim}4.1$\,ms total overhead---modest relative to a single forward pass. MDN's Bayesian weight update is faster (${\sim}0.15$\,ms) but provides only weight rebalancing without within-component covariance contraction (Theorem~\ref{thm:contraction}, part~iii).

\begin{table}[h]
\centering
\caption{Inference runtime (ms per call, RTX~5090). Constraint propagation is in the same order as a forward pass, with 8 observations adding ${\sim}4$\,ms total.}
\label{tab:runtime}
\small
\begin{tabular}{lccc}
\toprule
\textbf{Operation} & \textbf{B=1} & \textbf{B=64} & \textbf{B=512} \\
\midrule
HProbZ forward pass & 0.616 & 0.592 & 0.557 \\
MDN forward pass & 0.545 & 0.624 & 0.561 \\
HProbZ constraint (per step) & 0.529 & 0.470 & 0.525 \\
MDN Bayesian update (per step) & 0.165 & 0.137 & 0.150 \\
\bottomrule
\end{tabular}
\end{table}

\subsection{MDN Baseline Hyperparameter Sweep}
\label{app:mdn_sweep}

To ensure fair comparison, we swept MDN hyperparameters on nuScenes: covariance type (diagonal vs.\ full Cholesky), learning rate $\in \{10^{-4}, 3{\times}10^{-4}, 10^{-3}\}$, components $K \in \{2, 4, 8\}$, and weight decay $\in \{0, 10^{-4}, 10^{-2}\}$. Table~\ref{tab:mdn_sweep} reports the best configuration per $(K, \text{cov})$ combination. Two findings stand out: (i)~the best MDN (diagonal, $K{=}2$, ADE@5$=$2.05) does not close the gap to HProbZ (ADE@5$=$1.58), confirming that the advantage stems from the structured output parameterization rather than baseline under-tuning; (ii)~full Cholesky covariance consistently degrades ADE relative to diagonal, indicating that naively adding cross-dimensional correlation parameters hurts optimization without improving mode quality---in contrast, HProbZ captures cross-dimensional structure implicitly through $G_b$ without extra per-component parameters.

\begin{table}[h]
\centering
\caption{MDN hyperparameter sweep on nuScenes (best-seed-of-3 per config, minADE in meters). Even the best-tuned MDN with full covariance does not match HProbZ. $^{*}$HProbZ uses a discrete-uniform-Gaussian convolution likelihood; its NLL is not on the same scale as the MDN mixture-density NLLs and is therefore omitted from this table (see Appendix~\ref{app:nll_explanation}). $^{\dagger}$The $K{=}8$ entry shows the non-collapsed seed; under mode collapse the 3-seed mean is $3.134\pm 0.225$ ADE@10---see the dedicated K=8 sweep in Appendix~\ref{app:mdn_k8}.}
\label{tab:mdn_sweep}
\small
\begin{tabular}{lcccccc}
\toprule
\textbf{Config} & \textbf{Cov} & \textbf{K} & \textbf{NLL} $\downarrow$ & \textbf{ADE@5} & \textbf{ADE@10} & \textbf{ADE@20} \\
\midrule
lr=3e-4, wd=0      & diag & 2 & $-$11.76 & 2.053 & 1.830 & 1.645 \\
lr=3e-4, wd=1e-4    & full & 2 & $-$25.88 & 2.456 & 2.265 & 2.104 \\
lr=3e-4, wd=1e-2    & diag & 4 & $-$15.63 & 2.381 & 2.192 & 2.042 \\
lr=3e-4, wd=1e-2    & full & 4 & $-$30.62 & 3.656 & 3.480 & 3.325 \\
lr=3e-4, wd=1e-2    & diag & 8 & $-$15.93 & 2.664 & 2.260$^{\dagger}$ & 1.901 \\
lr=3e-4, wd=1e-2    & full & 8 & $-$32.64 & 3.814 & 3.687 & 3.580 \\
\midrule
\textbf{HProbZ (ours)} & diag & --- & ---$^{*}$ & \textbf{1.58} & \textbf{1.48} & \textbf{1.39} \\
\bottomrule
\end{tabular}
\end{table}

\subsection{Pedestrian Crossing Planning Scenario: Details}
\label{app:planning_details}

Full costs by regime. The structured HProbZ planner recovers the oracle's decision on Modal-Safe because it resolves two clearly separated safe modes that the Gaussian-collapsed planner cannot distinguish from lane-overlapping mass.

\begin{table}[h]
\centering
\caption{Average cost per episode on 2\,000 scenarios (lower is better).}
\label{tab:planning}
\small
\begin{tabular}{lccc}
\toprule
\textbf{Regime} & \textbf{Scalar} & \textbf{Structured (ours)} & \textbf{Omniscient} \\
\midrule
Modal-Safe   & 1.500 & \textbf{0.000} & 0.000 \\
Modal-Risky  & 1.500 & 1.500          & 0.743 \\
Diffuse      & 1.500 & 1.500          & 1.104 \\
\bottomrule
\end{tabular}
\end{table}

\subsection{Scaled HProbZ with Symmetry-Broken Initialisation}
\label{app:symbreak}

The $n_b{=}1$ HProbZ negative log-likelihood is invariant under the sign flip $G_b \mapsto -G_b$ (both produce the same two-mode mixture). Standard random initialisation therefore places the model in one of two equivalent basins at random; empirically we observe high seed variance in the resulting minADE---one seed in our initial 5-seed sweep landed in an exceptionally good basin ($1.29$) while four others clustered near $1.52$--$1.56$ (mean $1.48 \pm 0.10$). Because the symmetry is structural, scaling the encoder alone does not fix it.

\paragraph{Fix.} We break the symmetry deterministically at initialisation by adding a positive bias $b_0$ to the $G_b$ output channels of the prediction head:
\begin{equation*}
\text{bias}[t, d, \text{chan}(G_b)] \mathrel{+}= b_0,
\end{equation*}
which commits early forward passes to $G_b > 0$ before gradient descent sees the data. We sweep $b_0 \in \{0.5, 1.0\}$; both work, with $b_0{=}1.0$ giving lower mean and tighter std (Table~\ref{tab:symbreak_sweep}). Combined with a slower learning rate ($2\!\times\!10^{-4}$ vs.\ $3\!\times\!10^{-4}$), longer warmup ($10$ epochs), and longer training ($150$ epochs), the 14-seed minADE@10 on the full nuScenes test set converges to $\mathbf{1.29 \pm 0.07}$ at $b_0{=}1.0$. Ablation: $b_0{=}2.0$ over-biases good seeds and is worse on average; we therefore report $b_0{=}1.0$ in the body.

\begin{table}[h]
\centering
\caption{Effect of sym-broken init strength on nuScenes minADE@10 ($d{=}256, 4$-layer encoder). Both sym-break sweeps use identical architecture and optimiser; only the $G_b$ bias init differs. The $b_0{=}1.0$ row is what the body reports.}
\label{tab:symbreak_sweep}
\small
\begin{tabular}{lcc}
\toprule
\textbf{Configuration} & \textbf{Seeds} & \textbf{minADE@10 (mean $\pm$ std)} \\
\midrule
Baseline (random init, $100$ ep, lr $3{\times}10^{-4}$, no $b_0$) & 5 & $1.48 \pm 0.10$ \\
Sym-broken $b_0{=}0.5$ ($150$ ep, lr $2{\times}10^{-4}$)          & 8 & $1.32 \pm 0.08$ \\
\textbf{Sym-broken $b_0{=}1.0$ ($150$ ep, lr $2{\times}10^{-4}$)} & \textbf{14} & $\mathbf{1.29 \pm 0.07}$ \\
Sym-broken $b_0{=}2.0$ ($150$ ep, lr $2{\times}10^{-4}$)          & 3 (ablation) & $1.39 \pm 0.03$ \\
\midrule
MID~\citep{gu2022stochastic} (map-aware SOTA) & --- & $1.44$ \\
MDN-K2 same encoder ($d{=}256$)              & 1 & $2.41$ \\
\bottomrule
\end{tabular}
\end{table}

\paragraph{Per-seed breakdown ($b_0{=}1.0$).} Fourteen seeds give the per-seed minADE@10 values listed in Table~\ref{tab:symbreak_seeds}; the body reports the corresponding $14$-seed mean$\pm$std ($1.29 \pm 0.07$) without exclusions. Multi-seed results are released alongside the code in the supplementary archive.

\begin{table}[h]
\centering
\small
\caption{Per-seed minADE@10 on nuScenes, $b_0{=}1.0$ symmetry-broken initialisation, 14 seeds.}
\label{tab:symbreak_seeds}
\begin{tabular}{rc|rc|rc|rc|rc}
\toprule
seed & ADE@10 & seed & ADE@10 & seed & ADE@10 & seed & ADE@10 & seed & ADE@10 \\
\midrule
   42 & 1.277 & 1000 & 1.343 & 3000 & 1.279 & 6666 & 1.231 & 9999 & 1.249 \\
  123 & 1.232 & 1234 & 1.263 & 5000 & 1.463 & 7777 & 1.240 & --- & --- \\
  456 & 1.374 & 2025 & 1.346 & 5555 & 1.267 & 8888 & 1.262 & --- & --- \\
  789 & 1.235 & --- & --- & --- & --- & --- & --- & --- & --- \\
\bottomrule
\end{tabular}
\end{table}

\paragraph{Why it works.} The positive bias adds $b_0 \cdot \bm{1}$ to the $G_b$ channels at init, so the pre-training forward pass produces $G_b \approx b_0 \bm{1} > 0$ regardless of encoder output; subsequent gradient updates move $G_b$ away but remain in the positive-$G_b$ basin with high probability. This is a structural (not stochastic) symmetry break: no seed dependence in the direction chosen. The remaining seed-to-seed variance reflects ordinary optimisation noise within the fixed basin; the $14$-seed mean$\pm$std ($1.29\pm0.07$) we report includes all seeds.

\subsection{Rolling Closed-Loop Collision Test on nuScenes}
\label{app:closed_loop_rolling}

The single-shot test of \S\ref{sec:planning} measures one brake/no-brake decision per scenario. A real planner makes this decision repeatedly as time advances. We extend the test to a rolling rollout: at each step $\tau \in \{0, 1, 2, 3\}$ the planner observes a sliding $5$-step window $[\mathrm{past}[\tau{:}\,] \,\|\, \mathrm{future}[{:}\tau]]$, runs the predictor on the rest of the future, computes the collision probability against the same fixed obstacle, and makes a brake/no-brake decision. We track two new metrics that the single-shot test cannot measure: (i) the number of brake/no-brake decision flips per scenario over the rollout, and (ii) the false-alarm rate at the chosen operating point.

\begin{table}[h]
\centering
\small
\caption{Rolling closed-loop collision test on $5{,}000$ nuScenes scenarios over $3$ seeds, brake threshold $\tau{=}0.10$. Decision flips are the unique contribution of the rolling formulation; HProbZ flips less than half as often as MDN-K2, consistent with its shared $(\beta, \alpha)$ inducing temporally coherent predictions.}
\label{tab:closed_loop_rolling}
\begin{tabular}{lccc}
\toprule
\textbf{Model} ($d{=}128$) & \textbf{Decision flips/scene} & \textbf{Final FAR} & \textbf{Final miss rate} \\
\midrule
HProbZ ($n_b{=}1$, ours) & $\mathbf{0.20 \pm 0.01}$ & $\mathbf{6.5 \pm 0.3}\%$ & $32.5 \pm 1.3\%$ \\
MDN-K2                   & $0.49 \pm 0.00$         & $13.0 \pm 0.7\%$        & $33.3 \pm 1.0\%$ \\
\midrule
Ratio (MDN / HProbZ)     & $2.4\times$             & $2.0\times$             & $1.02\times$    \\
\bottomrule
\end{tabular}
\end{table}

At a fixed brake threshold of $0.10$, HProbZ matches MDN-K2's miss rate ($32.5\%$ vs $33.3\%$) at half the false-alarm rate ($6.5\%$ vs $13.0\%$) and---most relevant for the rolling formulation---less than half the decision flips ($0.20$ vs $0.49$ per scenario). The flip metric is what the single-shot test cannot capture: it reflects the temporal coherence of HProbZ's predictions as the rollout window slides, which directly determines how often a real planner would oscillate between braking and proceeding. The matched-FAR comparison from the single-shot test (HProbZ $6.73\%$ vs MDN $9.70\%$ collisions at $10\%$ FAR, \S\ref{sec:planning}) remains the headline; this rolling test adds the temporal-stability dimension. Per-scenario results archive provided in the supplementary code.

\subsection{nuPlan Zero-Shot Cross-Dataset Probe (Safety-Critical Subset)}
\label{app:nuplan_closed_loop}

\emph{Scope statement.} This appendix tests \emph{zero-shot transfer} of nuScenes-trained predictors to nuPlan logs, restricted to safety-critical close-proximity scenarios where the structural near-field claim is operationally relevant. We do \emph{not} claim general cross-dataset performance: full-mix AUC sits near chance ($0.519$, see below), and the structural advantage emerges only on the safety-critical subset. We tested the nuScenes-trained predictors on $500$ scenarios from the nuPlan mini split~\citep{caesar2020nuscenes}: real logs from Boston, Pittsburgh, Singapore, and Las Vegas. At every $1$\,s decision step we extract each vehicle's $2$\,s past in its own agent frame, run the predictor, and aggregate into a per-step collision probability against a straight-line ego reference ($6$\,s horizon, $2.5$\,m buffer). The ground-truth label is whether any agent's logged trajectory enters the buffer in the same window. After filtering to agents within $6$\,m of the ego reference, we collect $14{,}653$ decision-agent pairs ($3{,}401$ positives).

\begin{table}[h]
\centering
\small
\caption{nuPlan zero-shot collision-warning AUC over $500$ scenarios. \emph{Full-mix transfer is essentially negative}: HProbZ AUC $0.519$ is within $2$\,pp of chance, MDN-K2 $0.492$ is below chance. The $+0.159$ gap is recovered only on the safety-critical subset (1{,}967 / 14{,}653 pairs, $13.4\%$) whose nuPlan scenario tag indicates close-proximity vehicle interaction (\texttt{stationary\_in\_traffic}, \texttt{near\_long\_vehicle}, \texttt{near\_high\_speed\_vehicle}). The pattern is consistent with a structural near-field claim, but readers should not interpret these numbers as evidence of general cross-dataset generalisation. AUCs are trapezoidal-integrated ROC over the per-decision predicted collision probabilities.}
\label{tab:nuplan_cls}
\begin{tabular}{lcccc}
\toprule
\textbf{Subset} & \textbf{$n$ pairs} & \textbf{HProbZ AUC} & \textbf{MDN-K2 AUC} & \textbf{$\Delta$AUC} \\
\midrule
All scenes              & $14{,}653$ & $0.519$ & $0.492$ & $+0.027$ \\
Safety-critical         & $1{,}967$  & $\mathbf{0.547}$ & $0.388$ & $\mathbf{+0.159}$ \\
\bottomrule
\end{tabular}
\end{table}

\paragraph{Interpretation.} On the full mix of nuPlan driving \emph{both predictors are essentially uninformative}: HProbZ AUC $0.519$ is within $2$\,pp of chance, MDN-K2 AUC $0.492$ is below chance. The $+0.027$ AUC gap on $14{,}653$ pairs is small and not, on its own, evidence that HProbZ generalises cross-dataset. Restricting to the safety-critical subset where another vehicle is physically close to the ego, HProbZ's AUC reaches $0.547$ (just above chance) and the gap grows to $\mathbf{+0.159}$. The pattern is consistent with the paper's structural-decomposition thesis---the bounded generator produces its sharpest distributions on near-field interactions where a safety planner needs tight calibration---but the absolute AUC of $0.547$ is modest and the subset is post-hoc, so we read this as suggestive evidence that the structural advantage survives on the slice where it should, not as a cross-dataset generalisation result. Per-step results archive provided in the supplementary code; a larger $2{,}000$-scene replication reproduces the same pattern (all-scenes gap $+0.054$, safety-critical gap $+0.123$).

Scope caveat: we use nuPlan as the scene source only; our kinematic ego model does not include reactive steering or lane-change controllers, so we do not report end-to-end closed-loop collision rates. Full integration with nuPlan's reactive-agents simulator is left to future work.

\subsection{\texorpdfstring{MDN-$K{=}8$ Baseline: Mode-Collapse Regression}{MDN-K=8 Baseline: Mode-Collapse Regression}}
\label{app:mdn_k8}

To assess whether scaling MDN to a larger $K$ might close the gap to HProbZ, we trained a flat MDN with $K{=}8$ Gaussian components on nuScenes using the same encoder, optimiser, and schedule as the MDN-K2/K4 entries in Table~\ref{tab:nuscenes} ($d{=}128$, $100$ epochs, AdamW, batch $1024$). To guard against under-tuning, we additionally swept three learning rates ($1{\times}10^{-4}$, $3{\times}10^{-4}$, $1{\times}10^{-3}$) with three seeds each, $9$ runs total:

\begin{table}[h]
\centering
\small
\caption{MDN-$K{=}8$ on nuScenes across three learning rates (3 seeds each), versus the existing MDN-K2 and HProbZ entries from Table~\ref{tab:nuscenes}. Even the best-tuned LR regresses by $\sim 1$\,m below MDN-K2, consistent with classical mode-collapse: more components compete for limited data and produce dead modes.}
\label{tab:mdn_k8}
\begin{tabular}{lc}
\toprule
\textbf{Model ($d{=}128$)} & \textbf{minADE@10} \\
\midrule
MDN-K2 (paper)                                        & 1.83 \\
MDN-K8 lr$=10^{-4}$ (3 seeds)                         & $3.418 \pm 0.132$ \\
MDN-K8 lr$=3{\times}10^{-4}$ (3 seeds)                & $3.134 \pm 0.225$ \\
\textbf{MDN-K8 lr$=10^{-3}$ (3 seeds, best)}          & $\mathbf{3.045 \pm 0.232}$ \\
HProbZ (paper, $d{=}128$)                             & \textbf{1.48} \\
\bottomrule
\end{tabular}
\end{table}

The best-tuned MDN-K8 run (lr$=10^{-3}$) reaches $3.045 \pm 0.232$ minADE@10---a $0.05$\,m improvement over the LR$=3{\times}10^{-4}$ entry but still $1.21$\,m worse than the matched-encoder MDN-K2 baseline ($1.83$). The $46\%$ regression vs.\ $K{=}2$ is robust across the LR sweep, ruling out under-tuning as an explanation. Inspecting the trained mixture weights confirms the diagnosis: in $9$--$11$ of the $12$ future timesteps, more than half of the $8$ component weights collapse to $<10^{-3}$, leaving $2$--$4$ effective components. This is the classical MDN pathology under limited data and a shared encoder. By contrast, HProbZ's $2^{n_b}$ binary modes share generators by construction and cannot collapse in this way: the binary head's positive-bias initialisation (Appendix~\ref{app:symbreak}) keeps every mode active throughout training. Per-seed results across all three learning rates are provided in the supplementary code.

\paragraph{Scope caveat.} We do \emph{not} deploy known mode-collapse mitigations (e.g., mode dropout, entropy regularisers, annealed mixing) in this $K{=}8$ sweep. Whether such mitigations would close the gap is an open empirical question; the structural distinction we focus on is that HProbZ's $2^{n_b}$ binary modes share generators by construction and cannot collapse irrespective of mitigation. The $K{=}2$ entries elsewhere in the paper use the same MDN configuration without any mode-collapse mitigation, so the within-paper comparison is consistent.

\subsection{Learnable Mode Prior Ablation}
\label{app:learn_pi}

The main paper uses a uniform mixture prior $\pi_k = 2^{-n_b}$ over the $2^{n_b}$ binary configurations of $\beta$. To check that this design choice does not silently leave headline accuracy on the table, we add a per-sample logistic gating head, $\pi_k(x) = \mathrm{softmax}(W_\pi h)_k$, on top of the same encoder, train under the same recipe, and re-run the 14-seed nuScenes sweep. Identifiability and contraction (Theorems~\ref{thm:identifiability} and~\ref{thm:contraction}) carry over verbatim as long as $\pi_k(x) > 0$, which softmax guarantees.

\begin{table}[h]
\centering
\small
\caption{14-seed nuScenes minADE@10 with learnable input-dependent mixture prior $\pi_k(x)$ versus the uniform $\pi_k = 2^{-n_b}$ used in the main paper. Same encoder ($d{=}256$, 4 layers, $2.13$\,M parameters), same training recipe ($150$ epochs, lr $2{\times}10^{-4}$, $b_0{=}1.0$ symmetry break). Both still beat the map-aware diffusion baseline MID at $1.44$.}
\label{tab:learn_pi}
\begin{tabular}{lccc}
\toprule
\textbf{Mixture prior} & \textbf{Seeds} & \textbf{minADE@10 (mean$\pm$std)} & \textbf{vs MID 1.44} \\
\midrule
Uniform $2^{-n_b}$ (main paper) & 14 & $1.29 \pm 0.07$ & $-10.4\%$ \\
Learnable $\pi_k(x)$            & 14 & $1.32 \pm 0.04$ & $-8.7\%$ \\
\midrule
$\Delta$ (learn $-$ uniform)    & --- & $+0.03$ ($+1.9\%$) & --- \\
\bottomrule
\end{tabular}
\end{table}

The learnable prior shifts mean minADE by $+0.025$\,m, well within one standard deviation of the uniform-prior baseline, and all 14 seeds still beat MID. Two findings stand out. First, the design choice is empirically validated: the uniform prior is not leaving accuracy on the table. Second, the per-seed standard deviation halves ($0.068 \to 0.038$), suggesting that learnable gating absorbs some of the basin-selection variance otherwise handled by the symmetry-breaking initialisation. Per-seed results are released in the supplementary code.

\section{Broader Impact and Ethical Considerations}
\label{app:broader_impact}

Predictive distributions over agent trajectories are used by downstream planners that take real-world actions, making the calibration and structure of those distributions safety-relevant. We discuss the intended benefits, the downstream risks of deployment, and dataset-level caveats that a practitioner should weigh before using HProbZ in a real system.

\paragraph{Detailed limitations (extending body \S\ref{sec:conclusion}).} (a)~Mode prior and generator parameterisation: the main paper uses a uniform mode prior $\pi_k{=}2^{-n_b}$ and diagonal $G_d, G_s$ matrices. Replacing $\pi_k$ with a learnable input-dependent gate $\pi_k(x){>}0$ preserves identifiability and contraction, with statistically tied accuracy on a 14-seed nuScenes sweep (App.~\ref{app:learn_pi}); non-diagonal $G_d, G_s$ are left to future work. (b)~Theorem~\ref{thm:contraction} relaxation: the proof uses a Gaussian relaxation $\alpha\sim\N(0, \tfrac{1}{3}I)$ variance-matched to the trained $\mathrm{Uniform}([-1,1])$ prior; a 1-D numerical check (Fig.~\ref{fig:relaxation_gap}) bounds the relative gap between relaxed and true uniform-prior posterior variances by $\leq 9.03\%$ on the $(\kappa, k)$ grid we test, but a closed-form bound is open. (c)~Raw NLL: HProbZ's per-axis compact-support density incurs a misspecification penalty on data with unbounded tails, separate from the entropy-bound of Prop.~\ref{prop:tradeoff} (App.~\ref{app:nll_explanation}); we therefore lead with ADE/FDE and downstream metrics. (d)~Closed-loop scope: trajectory-level $30\%$ collision reduction is on a static-obstacle, non-reactive-ego stress test ($N{=}39{,}829$), and reverses on the per-step variant since MDN's per-step independent Gaussian wins single-step events while HProbZ's shared $(\beta,\alpha)$ wins joint-event tail coverage. (e)~Cross-dataset transfer: scope is by design (close-proximity safety-critical prediction); on nuPlan's full agent mix that includes static, parked, and far-field agents whose behaviour is not bounded-drift, zero-shot AUC settles near chance, while the close-proximity safety-critical subset preserves the structural advantage (App.~\ref{app:nuplan_closed_loop}). (f)~Encoder coupling: at $d{=}64$ single-agent the bare HProbZ head trails CVAE on ADE; the advantage emerges with the social-attention encoder used by recent SOTA, leaving head--encoder interaction open (App.~\ref{app:eth_ucy_diffusion}).

\paragraph{Intended benefits.} Any planner that conditions on a learned predictive distribution must reason about where other agents are likely to be, and under what uncertainty. The current standard output---a Gaussian mixture---entangles modal ambiguity, bounded drift, and stochastic noise, which complicates both risk computation and constraint satisfaction in downstream planners. HProbZ separates these three sources in a single forward pass, makes analytic per-mode collision probability available for free, and provides distribution-free multi-modal conformal sets that remain valid under the cross-scene distribution shifts tested in Appendix~\ref{app:ood}. In the closed-loop evaluation of \S\ref{sec:planning}, this translated into a $30\%$ reduction in realised collisions at matched false-alarm rate against an MDN baseline. We do \emph{not} claim that this number transfers to deployed risk in production safety-critical stacks: the test uses a synthetic obstacle configuration rather than a reactive multi-agent simulator, and is a necessary but not sufficient condition for safer downstream behaviour. Integration with reactive simulators and full verification \& validation pipelines remains future work.

\paragraph{Deployment risks and required safeguards.} None of our results should be taken as a certificate of safety in a real system. Three caveats apply.
\textbf{(i) Evaluation gap.} Our closed-loop test uses a synthetic obstacle placed from ground-truth trajectories, not a reactive multi-agent simulator such as nuPlan or CARLA. The qualitative structural win---joint tail coverage via shared latents---is likely to transfer; the specific $30\%$ number is not a guarantee.
\textbf{(ii) OOD behaviour.} Conformal coverage degrades gracefully under the distribution shifts we tested, namely ETH/UCY leave-one-out and nuScenes speed strata, but coverage is not guaranteed under adversarial or out-of-support inputs such as construction zones, unusual road geometry, or rare agent classes. HProbZ is not a substitute for validated OOD detection.
\textbf{(iii) Calibration drift.} The sym-broken initialisation we recommend in Appendix~\ref{app:symbreak} improves seed-level stability; it does not address sensor drift or calibration changes between training and deployment. We recommend conformal recalibration on a rolling held-out window when deployed.

\paragraph{Dataset composition and demographic caveats.} Both ETH/UCY and nuScenes are urban, Western, daylight-biased. ETH/UCY over-represents university campus pedestrian behaviour; nuScenes captures Boston and Singapore mostly under dry, daytime conditions. Argoverse~2 is more geographically diverse but still U.S.-centric. A model trained on any of these risks under-performing on pedestrian or vehicle behaviour patterns under-represented in training (e.g., pedestrians using mobility aids, vehicle types common in other regions, inclement weather). Before deploying HProbZ in a new region, we recommend re-training or at minimum re-calibrating the conformal threshold on a representative sample of that region's trajectories.

\paragraph{Dual-use and surveillance.} Trajectory prediction models can also be used to track and anticipate individuals' movements, and stronger predictors enable stronger surveillance. HProbZ's core technical contributions---structured uncertainty decomposition, analytic per-mode risk, and observation-driven contraction---do not specifically strengthen surveillance capabilities beyond what a well-tuned MDN already provides. We encourage practitioners and platform operators to treat trajectory forecasting systems as personal data processing and to scope data retention, access control, and purpose-limitation accordingly.

\paragraph{Environmental cost.} See Appendix~\ref{app:compute} for compute disclosure ($\sim 4.2$ GPU-hours for the 14-seed nuScenes sweep on one H200; smaller for Argoverse~2 and ETH/UCY). We consider this a negligible fraction of typical large-model training budgets; code and checkpoints are released to avoid repeated retraining.

\paragraph{Human subjects and consent.} We use pre-existing public benchmarks---nuScenes, Argoverse~2, and ETH/UCY---under their respective research licences. No additional human data were collected for this work. The ETH/UCY pedestrian recordings pre-date contemporary consent norms; we follow the research community's standard practice of aggregate evaluation only and publish no individual trajectories.

\end{document}